\newtheorem{theorem}{Theorem}[section]
\newtheorem{lemma}[theorem]{Lemma}
\newtheorem{corollary}[theorem]{Corollary}
\newtheorem{proposition}[theorem]{Proposition}
\numberwithin{equation}{section}
\newcommand{\twiddle}[1]{\widetilde{#1}}
\newcommand{\norm}[1]{\left|\left|#1\right|\right|}
\newcommand{\lr}[1]{\left(#1\right)}
\newcommand{\abs}[1]{\left|#1\right|}
\newcommand{\set}[1]{\left\{#1\right\}}
\newcommand{\E}[1]{\mathbb E\left[#1\right]}
\newcommand{\inprod}[2]{\left \langle #1,#2\right\rangle }
\def\Var{\mathrm{Var}}
\newcommand{\R}{\mathbb R}
\newcommand{\C}{\mathbb C}
\newcommand{\gives}{\rightarrow}
\newcommand{\mN}{\mathcal N}
\newcommand{\N}{\mathbb N}
\newcommand{\sgn}[1]{\mathrm{sgn}\left(#1\right)}
\newcommand{\tr}{\mathrm{Tr}}
\newcommand{\G}[5]{\ensuremath{G_{#2}^{#1}\lr{ #3 ~\bigg| \Gparams{#4}{#5} }}}
\newcommand{\Gparams}[2]{\begin{array}{c}
     #1  \\
     #2 
\end{array}}
\newcommand{\w}{\omega}
\newcommand{\im}{\mathrm{im}}
\newcommand{\lpost}{\lambda_{\mathrm{post}}}
\newcommand{\lpre}{\lambda_{\mathrm{prior}}}
\newcommand{\Paprior}{\mathbb P_{\mathrm{prior}}}
\newcommand{\Papost}{\mathbb P_{\mathrm{post}}}
\DeclareMathOperator*{\argmax}{arg\,max} 
\DeclareMathOperator*{\argmin}{arg\,min} 
\newcommand{\col}{\mathrm{col}}
\newcommand{\Ep}[1]{\mathbb E_{\mathrm{prior}}\left[#1\right]}
\author{Boris Hanin\footnote{BH is supported by NSF grants DMS-2143754, DMS-1855684, and DMS-2133806},  Alexander Zlokapa\footnote{AZ is supported by the Hertz Foundation, and by the Department of Defense through the National Defense Science and Engineering Graduate Fellowship Program}\\
$\,\textsuperscript{*}$ Princeton ORFE\\
$\,\textsuperscript{$\dagger$}$ MIT Center for Theoretical Physics, Google Quantum AI}
\title{Bayesian Interpolation with Deep Linear Networks}
\begin{document}
\maketitle
%\tableofcontents

\begin{abstract}
Characterizing how neural network depth, width, and dataset size jointly impact model quality is a central problem in deep learning theory. We give here a complete solution in the special case of linear networks with output dimension one trained using zero noise Bayesian inference with Gaussian weight priors and mean squared error as a negative log-likelihood. For any training dataset, network depth, and hidden layer widths, we find non-asymptotic expressions for the predictive posterior and Bayesian model evidence in terms of Meijer-G functions, a class of meromorphic special functions of a single complex variable. Through novel asymptotic expansions of these Meijer-G functions, a rich new picture of the joint role of depth, width, and dataset size emerges. We show that linear networks make provably optimal predictions at infinite depth: the posterior of infinitely deep linear networks with data-agnostic priors is the same as that of shallow networks with evidence-maximizing data-dependent priors. This yields a principled reason to prefer deeper networks when priors are forced to be data-agnostic. Moreover, we show that with data-agnostic priors, Bayesian model evidence in wide linear networks is maximized at infinite depth, elucidating the salutary role of increased depth for model selection. Underpinning our results is a novel emergent notion of effective depth, given by the number of hidden layers times the number of data points divided by the network width; this determines the structure of the posterior in the large-data limit.

\end{abstract}

\section{Introduction}
%\subsection{Significance Statement}

%Understanding the interplay between network architecture, dataset statistics, and learning algorithms is a key challenge in deep learning. We overcome this challenge analytically for zero-noise Bayesian inference in neural networks with linear activations and output dimension one but arbitrary depth, width, and training data. Our results provide the first rigorous solutions to learning with neural networks in which depth, width, and dataset can vary freely. We identify an emergent notion of effective depth, which combines depth, width, and dataset size, that must be large for networks to compute optimal posteriors from universal priors. Our results both present novel examples of models admitting exact solutions to Bayesian inference and give the first principled Bayesian justification for preferring deeper networks.

\subsection{Background}

A central aim of deep learning theory is to understand the properties of overparameterized networks trained to fit large datasets. Key questions include: how do learned networks use training data to make predictions on test points? Which neural network architectures lead to more parsimonious models? What are the joint scaling laws connecting the quality of learned models to the number of training data points, network depth and network width \cite{geiger2020scaling,kaplan2020scaling,bahri2021explaining,rae2021scaling}?

The present article gives the first exact answers to such questions for a class of neural networks in which one can simultaneously vary input dimension, number of training data points, network width, and network depth. This is significant because the limits where these four structural parameters tend to infinity do not commute, causing all prior work to miss important aspects of how they jointly influence learning. Our results pertain specifically to \textit{deep linear networks} 
\begin{align}
\label{eq:f-def}f(x) = W^{(L+1)}\cdots W^{(1)} x
\end{align}
with input dimension $N_0$, $L$ hidden layers of widths $N_\ell$, and output dimension $N_{L+1}=1$. As a form of learning we take zero noise Bayesian interpolation starting from a Gaussian prior on network weights $W^{(\ell)}\in \R^{N_{\ell}\times N_{\ell-1}}$ and empirical mean squared error over a training dataset of $P$ examples as the negative log-likelihood. Deep linear networks, while linear in $x$, are non-linear in their parameters and have been extensively studied as models for learning with neural networks using both gradient-based methods \cite{arora2018convergence,arora2018optimization,saxe2013exact,saxe2022neural,kawaguchi2016deep} and Bayesian inference \cite{li2021statistical,zavatone2021exact,zavatone2022contrasting}.

Since we are considering an output dimension of 1, we may write $f(x)=\theta^Tx$ for a vector $\theta\in \R^{N_0}$. What differentiates our work from a classical Bayesian analysis of Gaussian linear regression is that as soon as $L\geq 1$ the components of $\theta$ are correlated and non-Gaussian under the prior. Predictions $f(x)$ on inputs $x$ orthogonal to inputs from the training data therefore differ under the prior and posterior. Specifically, as shown in Figure \ref{fig:perp}, we may decompose $\theta=\theta_{||}+\theta_\perp$ into its projections onto directions spanned by the training data and their complement. By our Bayesian construction, $\theta_{||}$ is responsible for fitting the training data. Due to the correlations under the prior between $\theta_{||}$ and $\theta_{\perp}$, however, information from the training data will influence the posterior distribution of $\theta_{\perp}$. It is precisely this data-dependent extrapolation displayed by deep linear networks with $L\geq 1$ that differentiates them from the linear models obtained by taking $L=0$.

\subsection{Relation to Prior Work}
To put our work in context, we briefly summarize prior approaches to understanding learning with neural networks. The neural tangent kernel (NTK) and other kernel-based models \cite{du2017gradient,du2018gradient,allen2019learning, jacot2018neural,liu2022loss} reduce neural networks to linear models. Training by gradient descent or Bayesian inference consequently does not affect predictions $f(x_\perp)$ of test inputs orthogonal to the training dataset. Moreover, the NTK regime only considers the limit of infinite width with finite depth and dataset size. More recent work \cite{hanin2018neural,hanin2020products,hanin2018start, hanin2019finite, li2022neural,roberts2022principles, yaida2019non} shows that feature learning emerges when taking depth and width to infinity simultaneously with the effective prior depth $\lpre=L/N$ (see \eqref{eq:lpre-def}) determining both the behavior under both gradient descent and Bayesian inference. Still, such analyses are restricted to finite dataset size $P$ (or more precisely dataset sizes that are much smaller than both network depth and width).

Phenomena such as double descent \cite{belkin2019reconciling, belkin2020two} and benign overfitting \cite{bartlett2020benign} appear in linear models when taking width, dataset size, and dataset dimension to infinity simultaneously \cite{hastie2022surprises,montanari2022interpolation,adlam2019random,adlam2020neural,advani2020high,mei2019generalizationb, mei2021generalization}. However, as mentioned previously, these linear models do not learn to make data-adaptive predictions in directions not already present in the training data. Moreover, such approaches are restricted to studying fixed depth. Other approaches consider neural networks in the mean-field limit \cite{mei2018mean,rotskoff2018parameters,chizat2018global,sirignano2020mean,sirignano2021mean,yu2020normalization,pham2021global,yang2021tensoriv}. In this regime, networks do make data-adaptive predictions $f(x_\perp)$. However, mean-field limits have only been considered at fixed depth and recast optimization in terms of complex non-linear evolution equations, whose dependence on the training data is typically difficult to access.

The literature most directly related to the present article studies Bayesian inference with either non-linear \cite{lee2017deep, ariosto2022statistical, hron2022wide,cui2023optimal, naveh2021self,seroussi2021separation} or linear networks \cite{zavatone2021exact,noci2021precise,zavatone2022contrasting, li2021statistical}. These works consider only the regime where depth is either fixed or much smaller than both width and size of the training dataset. We find, in contrast, that the full role of depth in model selection and extrapolation can only be understood in the regime where depth, width, and dataset size are simultaneously large. Finally, our results are the first to characterize the behavior of deep neural networks at any joint scaling of depth, width, dataset size, and dataset dimension. In this sense, they can be viewed as giving exact expressions for the predictive posterior over deep Gaussian processes \cite{damianou2013deep} with Euclidean covariance in every layer. 

\subsection{Overview of Results}
Our first result, Theorem \ref{thm:Z-form} below, gives exact non-asymptotic formulas for both the predictive posterior (i.e., the distribution of $f(x)$ jointly for all inputs $x$ when $W^{(\ell)}$ are drawn from the posterior) and the Bayesian model evidence in terms of a class of meromorphic special functions in one complex variable called Meijer-G functions \cite{meijer1936whittakersche}. These results hold for arbitrary training datasets, input dimensions, hidden layer widths, and network depth. They represent a novel enlargement of the class of priors over $\theta$ for which posteriors can be computed in closed form. In particular, they show that zero noise Bayesian inference is exactly solvable for deep Gaussian processes \cite{damianou2013deep} with Euclidean covariances.

To glean insights from the non-asymptotic results in Theorem \ref{thm:Z-form}, we provide in Theorem \ref{thm:logG} new asymptotic expansions of Meijer-G functions that allow us to compute expressions for the Bayesian model evidence and the predictive posterior under essentially any joint scalings of $P,N_\ell,L$ in the large-data limit where $P\gives\infty$ with $P/N_0\gives \alpha_0\in (0,1)$. 
% \footnote{As explained in \S \ref{sec:limit-thms}, since our networks are linear with respect to the inputs and we require the trained model to interpolate the training data, the Bayesian posterior is a delta function on the minimal $\ell_2$-norm data interpolant as soon as $P\geq N_0.$}
To understand the role of depth, we consider regimes in which $L$ either stays finite or  grows together with $P,N_\ell$.

We focus in this article on zero-noise, or interpolating, posteriors that fit the training data exactly (see \eqref{eq:post-def} and the discussion in Optimal Extrapolation). For parametric models interpolation often causes overfitting at large sample sizes. An important empirical \cite{zhang2016understanding} and theoretical \cite{bartlett2020benign,belkin2019reconciling, hastie2022surprises} observation, however, is that in many non-parameteric overparameterized models, such as the deep linear neural networks we study, this does not occur. Our results therefore give new information about the joint effects of depth, width, and sample size on the nature of interpolating models.

What emerges from our analysis is a rich new picture of the role of depth in  linear networks in determining the nature of extrapolation and Bayesian model selection, given by maximizing the Bayesian model evidence, i.e., the likelihood of the training data under the posterior (cf \S 4, \S 5 in \cite{mackay1992bayesian}). We present here an informal explanation of our main results, starting with Theorem \ref{thm:bayesfeature} which implies the following:
\begin{quote}
\textbf{Takeaway:} Evidence-maximizing priors give the same Gaussian predictive posterior for any architecture in the large-data limit.
\end{quote}
This distinguished posterior represents, from a Bayesian point of view, a notion of optimal extrapolation. Indeed, because $f(x)$ is linear in $x$, it is natural to decompose 
\[
x = x_{||}+x_{\perp},\qquad x\in \R^{N_0},
\]
where $x_{||}$ is the projection of $x$ onto directions spanned by inputs from the training data, and $x_\perp$ is the projection of $x$ onto the orthogonal complement (like the decomposition of $\theta$ in Figure \ref{fig:perp}).

\begin{figure}
    \centering
    \includegraphics[scale=0.8]{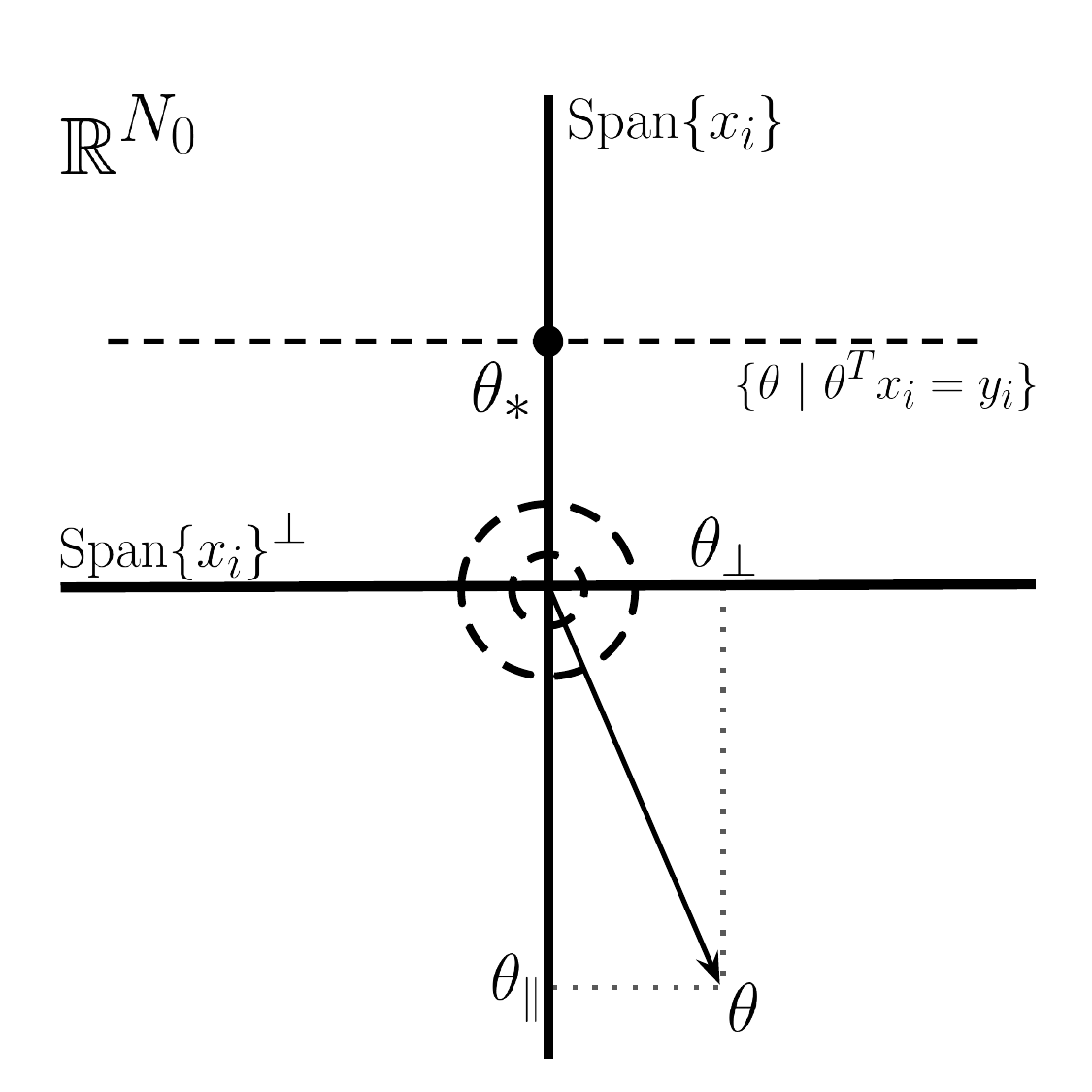}
    \caption{The input space $\R^{N_0}$ is decomposed into directions $\mathrm{Span}\set{x_i}$ spanned by inputs from the training data and its orthogonal complement. The minimal $\ell_2$ norm interpolant is the intersection between the space of interpolants (i.e., the dashed line showing all $\theta$ satisfying $\theta^Tx_i=y_i$ for all $i$) and $\mathrm{Span}\set{x_i}$. When generating predictions $f(x)=\theta^Tx$, the parameter vector $\theta$ can be decomposed into its projections $\theta_{||}$ and $\theta_\perp$ onto $\mathrm{Span}\set{x_i}$  and $\mathrm{Span}\set{x_i}^\perp$ respectively. When fully trained, $\theta_{||}$ will equal $\theta_*$. Circles centered at the origin represent equi-probable lines for the prior over $\theta$, which is always radial but non-Gaussian for any $L\geq 1$.}
    \label{fig:perp}
\end{figure}
Predictions under the evidence-maximizing posterior at test input $x$ are Gaussian and take the form
\begin{equation}\label{eq:opt-post-form}
f(x) \sim \mathcal N\lr{\theta_*^T x_{||},~ \frac{\norm{\theta_*}^2}{\alpha_0} \norm{x_\perp}^2}, 
\end{equation}
where $\theta_*$ is the minimal norm interpolant of the training data (cf. Figure \ref{fig:perp} and \eqref{eq:Vstar-def}). As we explain in Optimal Extrapolation below, the deterministic mean $\theta_*^Tx_{||}$ appears because, by construction, our posteriors are concentrated on $\theta$ that interpolate the training data (see \eqref{eq:post-def}) and thus we must have $\theta_{||}=\theta_*$. The scalar $\norm{\theta_*}^2/\alpha_0$, in contrast, sets a data-dependent variance for making predictions in directions orthogonal to inputs in the training data. This particular value for the variance is the most likely given that our posteriors are necessarily isotropic in directions orthogonal to the training data. Moreover, the approximate normality of the evidence-maximizing posterior at large sample sizes is due to the fact that the coordinates of a spherically symmetric random vector in high dimensions are approximately Gaussian (see the discussion just below \eqref{eq:post-form}).

In general, a linear network that maximizes Bayesian evidence, and hence produces the posterior \eqref{eq:opt-post-form}, may require the prior distribution over network weights to be data-dependent. In machine learning contexts, we hope instead to find optimal but data-\emph{agnostic} priors. Theorems \ref{thm:LN} and \ref{thm:PLN} (in combination with Theorem \ref{thm:bayesfeature}) show this is possible in linear networks with large depth. Informally they give:
\begin{quote}
\textbf{Takeaway:} Wide linear networks (at comparable depth, width, number of data points) with data-\emph{agnostic} priors give the same predictive posterior as shallow networks with optimal data-\emph{dependent} priors.
\end{quote}
This result highlights the remarkable role of depth in shaping the posterior over predictions in directions of feature space orthogonal to those present in the training data. This can only happen in non-linear models such as deep linear networks. Quantifying how large network depth must be to ensure optimal extrapolation is explained in Theorem \ref{thm:PLN}, which provides universal scaling laws for Bayesian posteriors in terms of a single parameter that couples depth, width, and dataset size. Informally, we have the following:
\begin{quote}
\textbf{Takeaway:} Consider linear networks in the regime $1\ll$ depth, dataset size $\ll$ width. With data-agnostic priors, the posterior depends only on the effective posterior depth
\begin{align*}
\lpost:= \frac{\text{(network depth)} \times \text{(dataset size)}}{ \text{network width}}.
\end{align*}
As $\lpost \gives \infty$, evidence grows and the posterior converges to the evidence-maximizing posterior \eqref{eq:opt-post-form}.
\end{quote}
Since $\lpost$ determines both the bias and the variance of the posterior, the preceding takeaway can be viewed as a scaling law relating depth, width, and training set size \cite{rae2021scaling,geiger2020scaling,kaplan2020scaling,bahri2021explaining}. In particular, it shows that for large linear networks it is $\lpost$, rather than depth, width, and dataset size separately, that determines the quality of the learned model.

% We refer the reader to \S \ref{sec:prior} for an discussion of why $\lpost$ is a natural measure of depth or complexity for the posterior.  We also note that $\lpost$ appears at leading order in the perturbative expressions (32) - (35) of \cite{zavatone2022contrasting} (in their notation $\lpost=\ell \gamma/\alpha$) for computing the $\ell_2$-generalization error in the regime of fixed $L$ and $N_0,N_\ell,P\gives \infty$ with $P/N_\ell$ remaining order $1$. 

As the preceding statement suggests, at least with data-agnostic priors and wide linear networks, maximizing Bayesian evidence requires large depth, as measured by $\lpost$. Moreover, evidence maximization is not possible at finite $\lpost$. The final result we present (see Theorem \ref{thm:LN}) concerns maximization of Bayesian evidence --- a principled method of comparing different architectures \cite{mackay1992bayesian} --- and is summarized as follows:
\begin{quote}
\textbf{Takeaway:} With data-agnostic priors and width that is proportional to dataset size, Bayesian evidence is maximal in networks with depth equal to a data-dependent constant times width. 
\end{quote}
Mis-specification of this constant only results in an order one decrease in evidence and does not affect the posterior. In comparison, a network with smaller depth has exponentially smaller evidence and a suboptimal posterior. The preceding takeaways give perhaps the first principled Bayesian justification for preferring neural networks with large depth, albeit in the restricted setting of linear networks.

We then state our first result, Theorem \ref{thm:Z-form}, in \S \ref{sec:Z-form}. This gives an exact non-asymptotic formula for the characteristic function of predictive posterior and the model evidence in terms of Meijer-G functions. We complement this in \S \ref{sec:G-results} by giving in Theorem \ref{thm:logG} asymptotic expansions for Meijer-G functions. The next collection of results, provided in \S \ref{sec:limit-thms}, details the model evidence and posterior as number of datapoints, input dimension, width, and depth tend to infinity in various regimes. The results in \S \ref{sec:Lfinite-thms} pertain specifically to the analysis of networks with a finite number of hidden layers in the regime where number of training datapoints, input dimension, and width tend to infinity. The main result is Theorem \ref{thm:finite-L}. In contrast, \S \ref{sec:Linfinit-thms} and \S \ref{sec:scaling-thms} consider regimes in which depth also tends to infinity. The main result is Theorem \ref{thm:PLN}. Finally, \S \ref{sec:properties} contains simple corollaries connecting our results to scaling laws for the generalization error (Theorem \ref{thm:scaling}) and double descent (Theorem \ref{thm:dd}).

\section{Preliminaries}
\subsection{Setup}
We fix a training set with $P$ examples
\[
X_{N_0}=\lr{x_{1,N_0},\ldots,x_{P,N_0} }\in \R^{N_0 \times P},\; Y_{N_0}=\lr{y_1,\ldots, y_P}\in \R^P.
\]
We will assume $X_{N_0}$ has full rank. %and consider a linear neural network with $L$ hidden layers of width $N_\ell$ and an output layer of dimension $N_{L+1}=1$ as in \eqref{eq:f-def}. 
Since we study zero noise posteriors supported on the models that minimize the likelihood \eqref{eq:L-def}, we assume also that $1\leq P \leq N_0$. Otherwise, the set of minima of the likelihood consists of a single $\theta$ and our posteriors would have zero variance. A key role in our results will be played by the minimal $\ell_2$-norm solution to ordinary linear least squares regression of $Y_{N_0}$ onto $X_{N_0}$:
\begin{equation}\label{eq:min-theta}
\theta_{*,N_0}:=\argmin_{\theta\in \R^{N_0}}\norm{\theta}_2\quad \text{s.t.} \quad \theta^T X_{N_0} = Y_{N_0}.
\end{equation}
Further, we fix $N_1,\ldots, N_L\geq 1$ and consider fitting the training data $(X_{N_0},Y_{N_0})$ by a linear model
\[
f(x) = \theta^Tx\in \R,\quad \theta,x\in \R^{N_0}
\]
equipped with quadratic negative log-likelihood
\begin{equation}\label{eq:L-def}
\mathcal L(\theta~|~X_{N_0},Y_{N_0}):=\frac{1}{2}\norm{\theta^TX_{N_0}-Y_{N_0}}_2^2
\end{equation}
and a \textit{deep linear prior}
\begin{equation}\label{eq:prior-def}
\theta\sim \Paprior \quad \Longleftrightarrow \quad \theta = W^{(L+1)}\cdots W^{(1)}    
\end{equation}
in which 
\[
W^{(\ell)}\in \R^{N_{\ell}\times N_{\ell-1}},\quad W_{ij}^{(\ell)}\sim \mN\lr{0,\frac{\sigma^2}{N_{\ell-1}}}\quad \text{independent},
\]
where $\sigma>0$. Our goal is to study the posterior distribution over the set of $\theta\in\R^{N_0}$ that exactly fit the training data. Explicitly, writing $d\Paprior(\theta)$ for the prior density, we study zero noise posteriors
\begin{align}
\label{eq:post-def}
&d\Papost\lr{\theta~|~L,N_\ell, \sigma^2, X_{N_0}, Y_{N_0}}\\
\nonumber&:= \lim_{\beta\gives \infty}\frac{d\Paprior\lr{\theta~|~N_0,L,N_\ell, \sigma^2}\exp\left[-\beta\mathcal L(\theta~|~X_{N_0},Y_{N_0})\right]}{Z_\beta\lr{X_{N_0},Y_{N_0}~|~L,N_\ell, \sigma^2}}.
\end{align}
Writing $\mathbb E_{\mathrm{post}}[\cdot]$ for the expectation with respect to the posterior \eqref{eq:post-def}, we describe the posterior by giving exact formulas for its characteristic function 
\begin{equation}\label{eq:char-part}
\mathbb E_{\mathrm{post}}\left[\exp\left\{-i{\bf t}\cdot \theta\right\}\right]=\frac{Z_\infty({\bf t})}{Z_\infty({\bf 0)}},\qquad {\bf t},\theta \in \R^{N_0},
\end{equation}
where $Z_\infty({\bf t})=Z_\infty( {\bf t}~|~L,N_\ell, \sigma^2,X_{N_0},Y_{N_0})$ is the zero-temperature partition function given by taking $\beta\gives \infty$ in
\begin{align}\label{eq:part-def}
 &Z_\beta({\bf t}) := A_\beta \int  \exp\Bigg[-\frac{\beta}{2}\big|\big|Y-\prod_{\ell=1}^{L+1} W^{(\ell)}X\big|\big|_2^2-i\theta\cdot {\bf t} -\sum_{\ell=1}^{L+1}\frac{N_{\ell-1}}{2\sigma^2}\big|\big|W^{(\ell)}\big|\big|_F^2  \Bigg]\prod_{\ell=1}^{L+1}dW^{(\ell)}.
\end{align}
The normalizing constant $A_\beta$ cancels in the ratio \eqref{eq:post-def} and in any computations involving maximizing ratios of model evidence (see \S \ref{sec:setup}).

The denominator $Z_\infty({\bf 0})$ is often called the Bayesian model evidence and represents the probability of the data $(X_{N_0},Y_{N_0})$ given the model (i.e., the depth $L$, layer widths $N_1,\ldots, N_L$ and prior variance $\sigma^2$).  As detailed in \S 4, \S 5 of \cite{mackay1992bayesian}, maximizing the Bayesian model evidence is therefore equivalent to maximum likelihood estimation over the space of models and gives a principled way to select among different models, all of which interpolate the training data. Before stating our technical results we briefly explain how to compute effective depth and how to reason about optimal extrapolation in linear networks.

\subsection{Effective Depth}
The number of layers $L$ does not provide a useful measure of complexity for the prior distribution over network outputs $f(x)=\theta^Tx$ when $\theta$ is drawn from the deep linear prior \eqref{eq:prior-def}. This is true in both linear and non-linear networks at large width (see e.g., \cite{hanin2018neural, hanin2019finite, hanin2022random, roberts2022principles} for a treatment of deep non-linear networks). A more useful notion of depth is 
\begin{equation}\label{eq:lpre-def}
\lpre =\text{ effective depth of prior }:= \sum_{\ell=1}^L \frac{1}{N_\ell},
\end{equation}
and it is indeed $\lpre$ that plays an important role in our results. Let us provide a brief justification for why $\lpre$ is a natural measure of complexity for the prior. With $N_1=\cdots=N_L=N$, Theorem 1.2 in \cite{hanin2021non} shows that when $\sigma^2=1$, under the prior, the squared singular values of $\lr{W^{(L)}\cdots W^{(1)}}^{_{1/L}}$ converge to the uniform distribution on $[0,1]$. Hence, only the squared singular values of $\lr{W^{(L)}\cdots W^{(1)}}^{_{1/L}}$ lying in intervals of the form $[1-CL^{-1},1]$ correspond to singular values of $W^{(L)}\cdots W^{(1)}$ that remain uniformly bounded away from $0$ at large $L$. At least heuristically, this implies that $W^{(L)}\cdots W^{(1)}$ is supported on matrices of rank approximately $\lpre^{-1}$.\footnote{We do not know this for sure since uniformity for the distribution of singular values at the right edge of the spectrum does not follow from a result only about the global density of singular values.} 

Viewing $\lpre^{-1}$ as a natural measure of the number of degrees of freedom in the prior motivates the introduction of a \emph{posterior effective depth}
\begin{equation}\label{eq:lpost-def}
    \lpost:=\frac{P}{\lpre^{-1}}= \sum_{\ell=1}^L \frac{P}{N_\ell}.
\end{equation}
$\lpost$ is a ratio between the number of degrees of freedom in the training data (given by the number of training data points) and in the prior. We'll see in Theorem \ref{thm:PLN} that it is precisely $\lpost$ that controls the structure of the posterior. 

\subsection{Optimal Extrapolation}
This section explains key structural properties of Bayesian posteriors in linear networks. Consider the model $f(x) = \theta^T x$ and decompose the parameters  $\theta = \theta_{||}+\theta_\perp$
as in Figure \ref{fig:perp}. Since zero noise posteriors fit the training data, we have  
\[
\theta\sim \Papost\qquad \Longrightarrow \qquad \theta_{||}=\theta_{*,N_0},
\]
where $\theta_{*,N_0}$ is the minimum-norm interpolant \eqref{eq:min-theta}. Moreover, the prior over $\theta$ is invariant under all orthogonal transformations and the likelihood is invariant under arbitrary transformations of $\theta_\perp$. Hence, in distribution
\begin{equation}\label{eq:a-form}
\theta\sim \Papost\qquad \Longrightarrow \qquad \theta_\perp ~\stackrel{d}{=}~u\cdot \norm{\theta_\perp},
\end{equation}
where $u$ is independent of $\norm{\theta_\perp}$ and is uniformly distributed on the unit sphere in  $\col(X_{N_0})^\perp\subseteq \R^{N_0}$. The only degree of freedom in the posterior is therefore the distribution of the radial part $\norm{\theta_\perp}$ of the vector $\theta_\perp$. Given a test data point $x = x_{||} + x_\perp$, we find
\[
\theta\sim \Papost\qquad \Longrightarrow \qquad f(x) = (\theta_{*,N_0})^T x_{||} + \|\theta_\perp\|\cdot u^T x_\perp.
\]
The distribution of $\|\theta_\perp\|$ controls the scale of predictions for data not spanned by the training set, i.e., for the task of extrapolation. For example if $x=x_{||}\in \col(X_{N_0})$, then $f(x)$ has zero variance since it is determined completely by the training data. More generally, by the Poincare-Borel Theorem (see \cite{diaconis1987dozen,borel1914introduction}) we have for $N_0, P \gg 1$ that
\begin{equation}\label{eq:post-form}
f(x) ~\approx~ \mathcal N\lr{\lr{\theta_{*,N_0}}^Tx_{||},~\norm{\theta_\perp}^2\frac{\norm{x_\perp}^2}{N_0-P}}.
\end{equation}
Indeed, since $\widehat{\theta}_\perp=\theta_\perp/\norm{\theta_\perp}\in \R^{N_0 - P}$ is rotationally invariant under the posterior, the Poincare-Borel Theorem (e.g. Theorem 2 in [21] together with the fact that the mixing measure $\mu$ is precisely a point-mass at $1$ by (1) in \cite{diaconis1987dozen}) shows that the joint distribution of any fixed (or even slowly growing) number of marginals $\set{\widehat{\theta}_\perp^T x_1,\ldots, \widehat{\theta}_\perp^Tx_k}$ is approximately Gaussian when $N_0 - P$ is large. In particular, since predictions under the posterior are of the  $f(x;\theta)=\theta_{||}^Tx_{||} + \norm{\theta_\perp}\widehat{\theta}_\perp^Tx_{\perp}$ and $\norm{\theta_\perp}$ is independent of $\widehat{\theta}_\perp$, we see that in the high-dimensional regime $N_0, P\gg 1$ the finite-dimensional distributions of the posterior are approximately normal.

At $L=0$, the prior and posterior distributions over $\|\theta_\perp\|^2$ are identical, preventing any feature learning from occurring. For $L \geq 1$, all components of $\theta$ are correlated under the prior, allowing information from the training data to be encoded into $\theta_\perp$. We shall see (Theorem \ref{thm:PLN}) that $\lpost$ quantifies how much information the model learns about $\theta_\perp$. In particular, increasing $\lpost$ causes the posterior distribution of $\|\theta_\perp\|^2$ to be more and more concentrated around a particular value:
\[
\|\theta_\perp\|^2~\approx~ \norm{\theta_{*,N_0}}^2(1-\alpha_0)/\alpha_0.
\]
This special choice of scale maximizes Bayesian evidence, in accordance with the first takeaway described in the Introduction. It corresponds to the natural estimate for the true signal strength $\norm{v}^2$ under a zero noise generative process $y_i=v^Tx_i$ in which $x_i$ are isotropic, given that one observes only the projection $v_{||}=\theta_{*,N_0}$ of $v$ onto directions in the training data.

\section{Main Results}
\subsection{Non-Asymptotic Results}\label{sec:Z-form}
We are now ready to formulate our first main result (Theorem~\ref{thm:Z-form}), which expresses the partition function $Z_\infty({\bf t})$ defined in \eqref{eq:part-def} in terms of the Meijer-G function; this allows the Bayesian evidence and predictive posterior to be written in exact closed form for any choice of network depth, hidden layer widths, dataset size, and input dimension. Compared to prior work using either iterative saddle point approximations of integrals encoding the Bayesian posterior~\cite{li2021statistical} or more involved methods such as the replica trick~\cite{zavatone2022contrasting}, our method provides a direct representation of the network posterior via the partition function in terms of a single contour integral without specializing to limiting cases. Additional quantities, such as the variance of the posterior, are simply expressed as a ratio of Meijer-G functions. We shall later recover known limiting cases and uncover new asymptotic results from expansions of the Meijer-G function (Theorem~\ref{thm:logG}).
\begin{theorem}[Predictive Posterior and Evidence]\label{thm:Z-form}
Fix $P,L,N_0,\ldots, N_L\geq 1,\, \sigma^2>0$ as well as training data $X_{N_0},Y_{N_0}$. Fix ${\bf t}\in \R^{N_0}$ and write
\[
{\bf t} = {\bf t}_{||}+ {\bf t}_\perp,\qquad {\bf t}_{||}\in \col(X_{N_0}),\quad {\bf t}_{\perp}\in \col(X_{N_0})^\perp.
\]
Define $4M = \prod_{\ell=0}^L 2\sigma^2/N_\ell$
%\begin{align}\label{eq:M-def}
%4M := \prod_{\ell=0}^L %\frac{2\sigma^2}{N_\ell}.
%\end{align}
and introduce the following shorthand for the Meijer-G functions (see \S \ref{sec:back}) %$\G{m,n}{p,q}{z}{a_1,\ldots, a_p}{b_1,\ldots, b_q}$ 
with parameters given by layer widths:
\begin{align*}
&\G{L+1,0}{0,L+1}{\frac{\norm{\theta_{*,N_0}}^2}{4M}}{-}{\frac{P}{2},\frac{\mathbf{N}}{2}+k}:=\qquad\G{L+1,0}{0,L+1}{\frac{\norm{\theta_{*,N_0}}^2}{4M}}{-}{\frac{P}{2},\frac{N_1}{2}+k,\ldots, \frac{N_L}{2}+k}.
\end{align*}
The partition function $Z_\infty({\bf t})$ of the predictive posterior defined in \eqref{eq:part-def} is
\begin{align}
&Z_\infty({\bf t}) =  \lr{\frac{4\pi }{\norm{\theta_*}^2}}^{\frac{P}{2}} \exp\left[-i\inprod{\theta_{*,N_0}}{{\bf t}}\right]\prod_{\ell=1}^L \Gamma\lr{\frac{N_\ell}{2}}^{-1} \nonumber\\
&\qquad \qquad \times \sum_{k=0}^\infty \frac{(-1)^k}{k!}\big|\big|{\bf t}_{\perp} \big|\big| ^{2k} M^{k} \G{L+1,0}{0,L+1}{\frac{\norm{\theta_{*,N_0}}^2}{4M}}{-}{\frac{P}{2}, \frac{\bf{N}}{2}+k}.
\end{align}
In particular, the Bayesian model evidence $Z_\infty({\bf 0})$ equals
\begin{align}
\label{eq:evidence-form} \frac{(4\pi)^{P/2}}{\norm{\theta_*}^{P}\prod_{\ell=1}^L \Gamma\lr{\frac{N_\ell}{2}}} \G{L+1,0}{0,L+1}{\frac{\norm{\theta_{*,N_0}}^2}{4M}}{-}{\frac{P}{2}, \frac{\bf{N}}{2}}.
%\label{eq:evidence-form}Z_\infty({\bf 0}) &= \lr{\frac{4\pi }{\norm{\theta_*}^2}}^{\frac{P}{2}} \prod_{\ell=1}^L \Gamma\lr{\frac{N_\ell}{2}}^{-1} \nonumber \\
%&\quad \times \G{L+1,0}{0,L+1}{\frac{\norm{\theta_{*,N_0}}^2}{4M}}{-}{\frac{P}{2}, \frac{\bf{N}}{2}}.
\end{align}
Further, given $x\in \R^{N_0}$, the mean of the predictive posterior is
\begin{equation}\label{eq:mean-G}
\mathbb E_{\text{post}}\left[f(x)\right]= (\theta_{*,N_0})^Tx,
\end{equation}
while the posterior variance $\Var_{\mathrm{post}}\left[f(x)\right]$ is 
\begin{equation}\label{eq:var-G}
2M\norm{x_{\perp}}^2\frac{\G{L+1,0}{0,L+1}{\frac{\norm{\theta_{*,N_0}}^2}{4M}}{-}{\frac{P}{2}, \frac{\bf{N}}{2}+1}}{\G{L+1,0}{0,L+1}{\frac{\norm{\theta_{*,N_0}}^2}{4M}}{-}{\frac{P}{2}, \frac{\bf{N}}{2}}},    
\end{equation}
where $x_\perp$ is the projection of $x$ onto the orthogonal complement of the span $\col(X_{N_0})$ of the training data. 
\end{theorem}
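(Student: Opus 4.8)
The plan is to evaluate $Z_\infty({\bf t})$ directly from \eqref{eq:part-def}: (i) send $\beta\to\infty$ to collapse the likelihood onto the set of interpolants; (ii) realize the deep linear prior on $\theta=W^{(L+1)}\cdots W^{(1)}$ as a Gaussian scale mixture whose variance is an explicit scaled product of Gamma random variables; (iii) perform the resulting Gaussian integrals conditionally on that variance and identify the leftover one‑dimensional moment integral as a Meijer-$G$ function via a Mellin--Barnes representation; and (iv) read off the mean and variance. For (i): since $X_{N_0}$ has full rank and $P\le N_0$, Laplace's method applied to $\exp[-\tfrac\beta2\norm{Y_{N_0}-\theta^TX_{N_0}}_2^2]$ localizes the $\{W^{(\ell)}\}$-integral, after pushing forward to $\theta$, onto the affine subspace $\theta_{*,N_0}+\col(X_{N_0})^\perp$ of interpolants; this produces a Jacobian involving $(\det X_{N_0}^TX_{N_0})^{-1/2}$ and a $\beta$-dependent scalar, all absorbed into $A_\beta$. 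Using $\inprod{\theta_{*,N_0}+w}{{\bf t}}=\inprod{\theta_{*,N_0}}{{\bf t}}+\inprod{w}{{\bf t}_\perp}$ for $w\in\col(X_{N_0})^\perp$, one obtains $Z_\infty({\bf t})=C\,e^{-i\inprod{\theta_{*,N_0}}{{\bf t}}}\int_{\col(X_{N_0})^\perp}p_\theta(\theta_{*,N_0}+w)\,e^{-i\inprod{w}{{\bf t}_\perp}}\,dw$, where $p_\theta$ is the prior density of $\theta$ and $C$ is an explicit constant (built from $A_\beta$, $P$, powers of $2\pi$, and $\det X_{N_0}^TX_{N_0}$) pinned down at the end.

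\textbf{Gaussian scale-mixture structure of the prior.} Condition on $\widetilde W:=W^{(L+1)}\cdots W^{(2)}\in\R^{1\times N_1}$ (for $L=0$ this is the empty product and $v$ below is the constant $2M$, recovering classical Gaussian linear regression). Given $\widetilde W$, the vector $\theta^T=W^{(1)T}\widetilde W^T$ is an isotropic Gaussian $\mathcal N(0,vI_{N_0})$ with $v=\tfrac{\sigma^2}{N_0}\norm{\widetilde W}^2$, since $W^{(1)T}$ has i.i.d.\ $\mathcal N(0,\sigma^2/N_0)$ entries. Applying the elementary identity $\norm{Au}^2\stackrel{d}{=}\tau^2\norm{u}^2\chi^2_m$ (for $A$ with i.i.d.\ $\mathcal N(0,\tau^2)$ rows) once for each of $W^{(L+1)},\dots,W^{(2)}$ gives $\norm{\widetilde W}^2\stackrel{d}{=}\big(\prod_{\ell=1}^L\tfrac{\sigma^2}{N_\ell}\big)\prod_{\ell=1}^L\chi^2_{N_\ell}$ with independent factors, hence $v\stackrel{d}{=}2M\prod_{\ell=1}^L G_{N_\ell/2}$, where the $G_a$ are independent $\mathrm{Gamma}(a,1)$ and $4M=\prod_{\ell=0}^L 2\sigma^2/N_\ell$. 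Substituting $p_\theta(\theta_{*,N_0}+w)=\mathbb E_v[(2\pi v)^{-N_0/2}\exp(-\norm{\theta_{*,N_0}+w}^2/(2v))]$ into the formula from (i), using $\norm{\theta_{*,N_0}+w}^2=\norm{\theta_{*,N_0}}^2+\norm{w}^2$, and evaluating $\int_{\R^{N_0-P}}\exp(-\norm{w}^2/(2v)-i\inprod{w}{{\bf t}_\perp})\,dw=(2\pi v)^{(N_0-P)/2}\exp(-\tfrac v2\norm{{\bf t}_\perp}^2)$ yields
\[
Z_\infty({\bf t})=C\,e^{-i\inprod{\theta_{*,N_0}}{{\bf t}}}\,\mathbb E_v\!\left[(2\pi v)^{-P/2}\exp\!\left(-\frac{\norm{\theta_{*,N_0}}^2}{2v}\right)\exp\!\left(-\frac{v}{2}\norm{{\bf t}_\perp}^2\right)\right].
\]

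\textbf{Mellin--Barnes and the Meijer-$G$.} Expanding $\exp(-\tfrac v2\norm{{\bf t}_\perp}^2)=\sum_{k\ge0}\tfrac{(-1)^k}{k!}\norm{{\bf t}_\perp}^{2k}(v/2)^k$ reduces everything to the moments $\mathbb E_v[v^{k-P/2}\exp(-\lambda/v)]$ with $\lambda=\norm{\theta_{*,N_0}}^2/2$. Writing $e^{-x}=\tfrac1{2\pi i}\int\Gamma(s)x^{-s}\,ds$ and using $\mathbb E[v^a]=(2M)^a\prod_{\ell=1}^L\Gamma(N_\ell/2+a)/\Gamma(N_\ell/2)$, each such moment becomes a Mellin--Barnes integral with integrand $\Gamma(s)\prod_{\ell=1}^L\Gamma(s+N_\ell/2+k-P/2)\,\mu^{-s}$, where $\mu=\norm{\theta_{*,N_0}}^2/(4M)$. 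Shifting $s\mapsto s+(P/2-k)$ and using the homogeneity relation $z^\rho G^{m,0}_{0,p}(z\,|\,{\bf b})=G^{m,0}_{0,p}(z\,|\,{\bf b}+\rho)$ identifies this with $G^{L+1,0}_{0,L+1}(\mu\,|\,P/2,\,\mathbf N/2+k)$ (the upper parameter list being empty since $p=0$), up to explicit powers of $2M$ and $\mu$. Summing back over $k$, collecting every power of $2$, $\pi$, $M$ and $\norm{\theta_{*,N_0}}$, and fixing $C$ (equivalently the constant hidden in $A_\beta$) by matching the ${\bf t}=0$ term against \eqref{eq:evidence-form}, reproduces the asserted formula for $Z_\infty({\bf t})$; at ${\bf t}=0$ only the $k=0$ term survives, giving the evidence \eqref{eq:evidence-form}.

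\textbf{Mean, variance, and the main obstacle.} By the rotational invariance of $\theta_\perp$ under $\Papost$ recorded in \eqref{eq:a-form} we have $\mathbb E_{\mathrm{post}}[\theta_\perp]=0$, so $\mathbb E_{\mathrm{post}}[f(x)]=\inprod{\theta_{*,N_0}}{x}$, which is \eqref{eq:mean-G}. For the variance, the scale-mixture computation above exhibits the posterior law of $(v,\theta_\perp)$ as: $v$ drawn from the prior of $v$ reweighted by $v^{-P/2}\exp(-\norm{\theta_{*,N_0}}^2/(2v))$, and $\theta_\perp\mid v\sim\mathcal N(0,v\,\Pi)$ with $\Pi$ the orthogonal projection onto $\col(X_{N_0})^\perp$; hence $\mathbb E_{\mathrm{post}}[\norm{\theta_\perp}^2]=(N_0-P)\,\mathbb E_{\mathrm{post}}[v]$, and by the $k=1$ and $k=0$ moments from the previous paragraph $\mathbb E_{\mathrm{post}}[v]=2M\,G^{L+1,0}_{0,L+1}(\mu\,|\,P/2,\,\mathbf N/2+1)\big/G^{L+1,0}_{0,L+1}(\mu\,|\,P/2,\,\mathbf N/2)$. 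Since $\Var_{\mathrm{post}}[f(x)]=\Var_{\mathrm{post}}[\inprod{\theta_\perp}{x_\perp}]=\tfrac{\norm{x_\perp}^2}{N_0-P}\,\mathbb E_{\mathrm{post}}[\norm{\theta_\perp}^2]$ by isotropy of $\theta_\perp$, \eqref{eq:var-G} follows. The main work — and the main obstacle — is the exact bookkeeping: every $\Gamma$-factor and every power of $2,\pi,\sigma,N_\ell$ across the three steps, and in particular the constant carried by $A_\beta$, must land precisely as stated. Secondary points to discharge carefully are: the Laplace localization in step (i) (which uses $X_{N_0}$ full rank and $P\le N_0$); the legitimacy of moving the Mellin contour through $\mathbb E_v$ (licit because the Mellin transform of $v$ is an explicit ratio of $\Gamma$-functions, holomorphic on a vertical strip); and the interpretation of the $k$-series, which for $L\ge2$ is an asymptotic expansion in $\norm{{\bf t}_\perp}^2$ convergent only near ${\bf t}_\perp=0$ (immaterial for the evidence and variance, which use only $k\in\{0,1\}$).
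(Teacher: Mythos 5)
Your argument is correct, and it takes a genuinely different route from the paper. The paper works in the opposite order: it first linearizes the likelihood with a dual variable $t\in\R^P$ (Lemma \ref{L:partition-form}), integrates out $W^{(L+1)}$ by completing the square, identifies the resulting Laplace transform of $\norm{W^{(L)}\cdots W^{(1)}u}^2$ as a $G^{1,L}_{L,1}$ function via the known density of products of Gaussian matrices (Lemma \ref{L:QNL-form}), and only then sends $\beta\to\infty$, passes to polar coordinates, uses the Bessel representation \eqref{E:G-BesselJ}, and invokes the tabulated convolution identity \eqref{E:G-series} to generate the $k$-series. You instead localize first ($\beta\to\infty$ collapses the integral onto $\theta_{*,N_0}+\col(X_{N_0})^\perp$), exploit that with output dimension one the deep linear prior is an exact Gaussian scale mixture, $\theta\,|\,v\sim\mathcal N(0,vI_{N_0})$ with $v\stackrel{d}{=}2M\prod_{\ell=1}^L\Gamma(N_\ell/2,1)$, do the perpendicular Gaussian integral in closed form, and convert the single remaining $v$-average $\mathbb E_v[v^{k-P/2}e^{-\norm{\theta_{*,N_0}}^2/2v}]$ into $G^{L+1,0}_{0,L+1}$ by one Mellin--Barnes step plus \eqref{E:G-polyprod}. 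I checked the bookkeeping: the localization constant is $C=(2\pi)^P$ (the $(2\pi\beta)^{P/2}$ in $A_\beta$ against the Gaussian factor $(2\pi/\beta)^{P/2}\det(X_{N_0}^TX_{N_0})^{-1/2}$ and the $\det(X_{N_0}^TX_{N_0})^{1/2}$ in $A_\beta$), and with this value your $I_k$ equals $(2M)^{k-P/2}\mu^{-P/2}\prod_\ell\Gamma(N_\ell/2)^{-1}G^{L+1,0}_{0,L+1}(\mu\,|\,P/2,\mathbf N/2+k)$, $\mu=\norm{\theta_{*,N_0}}^2/4M$, which reproduces the stated $Z_\infty({\bf t})$, the evidence \eqref{eq:evidence-form}, and $\Var_{\mathrm{post}}[f(x)]=2M\norm{x_\perp}^2G_1/G_0$ exactly. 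Your approach buys a cleaner derivation of the mean and variance (they come from the exact posterior mixture law of $(v,\theta_\perp)$ rather than from differentiating the $k$-series), and it avoids the polar-coordinate/Bessel/\eqref{E:G-series} machinery at the price of redoing, by hand, an identification the paper imports from the literature.

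Two cautions. First, do not ``fix $C$ by matching the ${\bf t}=0$ term against \eqref{eq:evidence-form}'': the evidence formula is part of the statement being proved, so as written that step is circular. The constant must be computed from the Laplace localization itself (the Gaussian integral over $\col(X_{N_0})$ against $\exp[-\tfrac{\beta}{2}\norm{(\theta_{||}-\theta_{*,N_0})^TX_{N_0}}^2]$), which is routine and yields $(2\pi)^P$ as above; then \eqref{eq:evidence-form} is a conclusion, not an input. Second, two minor imprecisions: the passage from your Mellin--Barnes integral to the $G$-function is a reflection $s\mapsto -s$ combined with \eqref{E:G-polyprod}, not a pure shift of $s$; and for $L\geq 2$ the $k$-series has radius of convergence zero in $\norm{{\bf t}_\perp}^2$ (not merely ``convergent only near ${\bf t}_\perp=0$''), since the $k$-th term grows like $(k!)^{L-1}$ --- on this point you are, if anything, more careful than the paper, and as you note it is immaterial for the evidence and for the first two moments, which your derivation obtains without the series.
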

See \S \ref{sec:Z-form-pf} for a proof.

\subsection{Asymptotic Results}\label{sec:G-results}
To evaluate the predictive posterior and evidence in Theorem~\ref{thm:Z-form} in the limits where $N_0, P, N_\ell$ (and potentially $L$) tend to infinity, we require novel expansions of the Meijer-G function obtained by the Laplace method. We are interested in regimes where $N_0,P$ grow and will assume a mild compatibility condition on the training data: for all $\alpha_0 \in (0, 1)$, we assume there exists constant $\norm{\theta_*}$ such that
\begin{equation}\label{eq:Vstar-def}
\lim_{\substack{P,N_0\gives \infty \\ P/N_0\gives \alpha_0\in (0,1)}}\norm{\theta_{*,N_0}} ~=~ \norm{\theta_*},    
\end{equation}
where the convergence is in distribution. This assumption is very generic and is (for example) satisfied for a Gaussian data model where inputs are Gaussian
\[
X_{N_0}=\lr{x_{i,N_0},\,i=1,\ldots, P},\quad x_{i,N_0}\sim \mathcal N(0,\Sigma_{N_0})\quad\text{iid},
\]
outputs are linear plus noise
\[
Y = V_{N_0} X_{N_0} + \epsilon_{N_0},\; V_{N_0}\sim \mathrm{Unif}\lr{S^{N_0-1}},\; \epsilon_{N_0}\sim \mathcal N\lr{0,\sigma_\epsilon^2 I_{N_0}},
\]
and the spectral density of the design matrices $\Sigma_{N_0}$ converges weakly as $N_0\gives \infty$ to a fixed probability measure on $\R_+$ with finite moments.

To minimize notation, we report here the expansions in terms of a single layer width $N = N_1 = \dots = N_L$, but expansions with distinct $N_\ell$ (and to higher order) are provided in the proof (\S \ref{pf:logG}).

\begin{theorem}[Asymptotic Expansions of Meijer-$G$]\label{thm:logG}
Set $N_1,\ldots,N_L = N$ and define $\mathbf{N} = (N_1, \dots, N_L)$. Suppose that the training data satisfies \eqref{eq:Vstar-def}. In different limiting cases such that $\{P, N\} \to \infty$ with fixed $P/N_0 = \alpha_0$, we evaluate the quantities
\begin{align*}
    \log G &:= \log \G{L+1,0}{0,L+1}{\frac{\norm{\theta_{*,N_0}}^2}{4M}}{-}{\frac{P}{2},\frac{\mathbf{N}}{2}+k}\\
    \Delta(\log G)[k] &:= \log \G{L+1,0}{0,L+1}{\frac{\norm{\theta_{*,N_0}}^2}{4M}}{-}{\frac{P}{2},\frac{\mathbf{N}}{2}+k} - \log \G{L+1,0}{0,L+1}{\frac{\norm{\theta_{*,N_0}}^2}{4M}}{-}{\frac{P}{2},\frac{\mathbf{N}}{2}}.
\end{align*}
We will see in each case that to leading order $\log G$ does not depend on $k$ while $ \Delta(\log G)[k] $ does.
\begin{enumerate}[label=(\alph*)]
\item Fix $L < \infty,\, \alpha,\sigma^2>0$. Suppose $P,N\gives \infty$ with $P/N\gives \alpha$. Then,
%\begin{align}
%    \label{eq:logG-Lfinite}\log G &= \frac{N\alpha}{2}\left[\log\lr{\frac{N\alpha}{2}} + \log\lr{1+\frac{z_*}{\alpha}} - \lr{1+\frac{z_*}{\alpha}}\right] \nonumber \\
%    &\quad + \frac{NL}{2}\left[\log\lr{\frac{N}{2}} + \log\lr{1+z_*} - (1+z_*)\right] \nonumber\\
%    &\quad + \tilde O(1),
%\end{align}
\begin{align}
    \notag \log G &= \frac{N\alpha}{2}\left[\log\lr{\frac{N\alpha}{2}} + \log\lr{1+\frac{z_*}{\alpha}} - \lr{1+\frac{z_*}{\alpha}}\right] \nonumber \\
    &\label{eq:logG-Lfinite}\quad + \frac{NL}{2}\left[\log\lr{\frac{N}{2}} + \log\lr{1+z_*} - (1+z_*)\right]
\end{align}
plus an error of size $\tilde O(1)$ and
\begin{align}
    \label{eq:DlogG-Lfinite} \Delta(\log G)[k] &= kL\left[\log\lr{\frac{N}{2}} + \log(1+z_*)\right],
\end{align}
%\begin{align}
%    \label{eq:DlogG-Lfinite} \Delta(\log G)[k] &= kL\left[\log\lr{\frac{N}{2}} + \log(1+z_*)\right] + \tilde O\left(\frac{1}{N}\right),
%\end{align}
plus an error of size $\tilde O\left(1/N\right)$, where $z_*> \min\set{-\alpha,-1}$ is the unique solution to
\begin{equation}\label{eq:Psi-crit}
\lr{1+\frac{z_*}{\alpha}}\lr{1+z_*}^L = \frac{\norm{\theta_*}^2}{\sigma^{2(L+1)}\alpha_0}.
\end{equation} 
\item Fix $\lpre,\alpha>0$ and suppose $L=\lpre N, \, P,N\gives \infty,\, P/N\gives \alpha, \, \sigma^2=1$. Then,
\begin{align}
    \label{eq:logG-LN} \log G &= \frac{\lpre N^2}{2}\left[\log\lr{\frac{N}{2}} - 1\right] + \frac{N\alpha}{2}\left[\log\lr{\frac{N\alpha}{2}} - 1\right]\\
    &\quad + \frac{\lpre N}{2}\left[-\log\lr{\frac{N}{2}} + \log(2\pi)\right]  + \tilde O(1)\nonumber
\end{align}
and, up to an error of size $\tilde O\left(1/N\right)$, 
\begin{align}
    \label{eq:DlogG-LN} \Delta(\log G)[k] &= k\left[\lpre N \log\lr{\frac{N}{2}} + \log \lr{\frac{\norm{\theta_*}^2}{\alpha_0}}\right].
\end{align}
%\begin{align}
%    \label{eq:DlogG-LN} \Delta(\log G)[k] &= k\left[\lpre N \log\lr{\frac{N}{2}} + \log \lr{\frac{\norm{\theta_*}^2}{\alpha_0}}\right] \nonumber \\
%    &\quad + \tilde O\left(\frac{1}{N}\right).
%\end{align}
\item Fix $\lpost >0$. Suppose $N,P,L\gives \infty$ with $LP/N\gives \lpost, \, \sigma^2=1$ and $L/N\to 0$. Then,
\begin{align}
    \label{eq:logG-PLN} \log G &= \frac{P}{2}\left[\log\lr{\frac{P}{2}}-1\right] + \frac{\lpost N}{2}\frac{N}{P}\left[\log\lr{\frac{N}{2}}-1\right] \nonumber \\
    &\quad + \frac{P}{2}\left[\log(1+t_*)-t_*\lr{1+\frac{\lpost t_*}{2}}\right] \nonumber \\
    &\quad + \frac{\lpost N}{2}\left[\frac{N}{P}\lr{1+\frac{P}{N}t_*}\log\lr{1+\frac{P}{N}t_*} - t_*\right] \nonumber\\
    &\quad + \tilde O(1),
\end{align}
and
\begin{align}
    \label{eq:DlogG-PLN}   \Delta(\log G)[k] &= k\left[L \log \lr{\frac{N}{2}} + \lpost t_*\right] + \tilde{O}\left(\frac{1}{N}\right),
\end{align}
where $t_*$ is the unique solution to
\begin{align}
    e^{\lpost t_*}(1+t_*) &= \norm{\theta_*}^2/\alpha_0.
\end{align}
\end{enumerate}
In all the estimates above, $\tilde O(1) = O(\max\{\log P, \log N\})$ suppresses lower-order terms of order 1, up to logarithmic factors; similarly, $\tilde O(1/N) = O(\max\{\log P, \log N\}/N)$. Suppressed terms are included in \S \ref{pf:logG}
\end{theorem}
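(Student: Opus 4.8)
The approach is to derive the stated expansions from the Mellin--Barnes representation of the relevant Meijer-$G$ function,
\[
\G{L+1,0}{0,L+1}{x}{-}{\frac{P}{2},\frac{\mathbf{N}}{2}+k}=\frac{1}{2\pi i}\int_{\mathcal{C}}\Gamma\lr{\frac{P}{2}-s}\prod_{\ell=1}^{L}\Gamma\lr{\frac{N_\ell}{2}+k-s}\,x^{s}\,ds,\qquad x=\frac{\norm{\theta_{*,N_0}}^{2}}{4M},
\]
with $\mathcal{C}$ a vertical line to the left of every pole of the integrand, and then to evaluate this single contour integral by the Laplace (saddle-point) method in each regime. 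Writing the log-integrand as $\Phi(s)=\log\Gamma(\tfrac{P}{2}-s)+\sum_{\ell=1}^{L}\log\Gamma(\tfrac{N_\ell}{2}+k-s)+s\log x$, the saddle $s_{*}$ solves $\Phi'(s_{*})=0$, i.e.\ $\psi(\tfrac{P}{2}-s_{*})+\sum_{\ell}\psi(\tfrac{N_\ell}{2}+k-s_{*})=\log x$. Since $\Phi''(s)=\psi'(\tfrac{P}{2}-s)+\sum_{\ell}\psi'(\tfrac{N_\ell}{2}+k-s)>0$ wherever all arguments are positive, the saddle is automatically a non-degenerate minimum of the integrand modulus along $\mathcal{C}$, and the Laplace estimate $\log G=\Phi(s_{*})+\tfrac{1}{2}\log\tfrac{2\pi}{\Phi''(s_{*})}+(\text{error})$ applies once one checks $\mathcal{C}$ can be pushed through $s_{*}$ without crossing a pole; this is exactly what $z_{*}>\max\{-\alpha,-1\}$ and $t_{*}>-1$ encode, as they force $\mathrm{Re}(s_{*})<\min\{\tfrac{P}{2},\tfrac{N}{2}+k\}$.

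The three regimes differ only in the correct rescaling of $s$ and in how many $\Gamma$ factors are large; in all of them the data assumption \eqref{eq:Vstar-def} enters only through $\log x=\log\norm{\theta_{*}}^{2}+\log\tfrac{N_{0}}{2}+L\log\tfrac{N}{2}-(L+1)\log\sigma^{2}+o(1)$. In (a), with $L$ fixed and $P\sim N\alpha$, put $s=\tfrac{N}{2}w$; via $\psi(z)=\log z+O(1/z)$ the saddle equation collapses, after inserting the expansion of $\log x$ and setting $w_{*}=-z_{*}$, to $(1+\tfrac{z_{*}}{\alpha})(1+z_{*})^{L}=\norm{\theta_{*}}^{2}/(\alpha_{0}\sigma^{2(L+1)})$, which is \eqref{eq:Psi-crit}; substituting $w_{*}$ back into $\Phi$, expanding the $L+1$ Gamma factors by Stirling's series, and simplifying with \eqref{eq:Psi-crit} gives \eqref{eq:logG-Lfinite}. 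In (b), with $L=\lpre N\to\infty$, the factor $(1-w)^{L}$ is $O(1)$ only for $w=O(1/L)$, so $s_{*}=O(1)$; then $\Phi'(0)=-s_{*}\Phi''(\xi)=O(1)$ gives $\Phi(s_{*})=\Phi(0)+\tilde O(1)$, so $\log G=\log\Gamma(\tfrac{P}{2})+L\log\Gamma(\tfrac{N}{2}+k)+\tilde O(1)$, which Stirling expands to \eqref{eq:logG-LN} (at $k=0$; the $k$-dependence is carried entirely by $\Delta$). In (c), with $P/N\to0$, $L\to\infty$, $LP/N\to\lpost$, rescale $s=-\tfrac{P}{2}t$; the digamma expansion $\psi(\tfrac{N}{2}+\tfrac{P}{2}t)=\log\tfrac{N}{2}+\tfrac{P}{N}t+O((P/N)^{2})$ together with $LP/N\to\lpost$ turns the saddle equation into $(1+t_{*})e^{\lpost t_{*}}=\norm{\theta_{*}}^{2}/\alpha_{0}$, and expanding $\Phi(s_{*})$ --- now keeping the $O(P^{2}/N)$ pieces inside $L\log\Gamma$, which survive as $O(P)$ contributions --- yields \eqref{eq:logG-PLN}.

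For the increments $\Delta(\log G)[k]$ there is a shortcut that bypasses most of the saddle analysis: the substitution $s\mapsto s+k$ in the Mellin--Barnes integral converts the shift $\tfrac{N_\ell}{2}\mapsto\tfrac{N_\ell}{2}+k$ of the $L$ ``bulk'' parameters into an overall factor $x^{k}$ together with the single shift $\tfrac{P}{2}\mapsto\tfrac{P}{2}-k$ of the remaining parameter, so
\[
\Delta(\log G)[k]=k\log x+\Bigl[\,\log\G{L+1,0}{0,L+1}{x}{-}{\frac{P}{2}-k,\frac{\mathbf{N}}{2}}-\log\G{L+1,0}{0,L+1}{x}{-}{\frac{P}{2},\frac{\mathbf{N}}{2}}\,\Bigr].
\]
The bracket is a one-parameter perturbation of $\log G$ in its first parameter $b_{1}$, and differentiating under the integral gives $\partial_{b_{1}}\log G=\langle\psi(b_{1}-s)\rangle\approx\psi(b_{1}-s_{*})=\log(b_{1}-s_{*})+O(1/b_{1})$ with $s_{*}$ the already-computed saddle, so the bracket equals $-k\log(\tfrac{P}{2}-s_{*})$ up to lower order. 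Substituting $\tfrac{P}{2}-s_{*}$ in each regime --- $\tfrac{N}{2}(\alpha+z_{*})$, $\tfrac{P}{2}$, and $\tfrac{P}{2}(1+t_{*})$ respectively --- together with the expansion of $\log x$ and simplifying via the defining equation for $z_{*}$ (resp.\ $t_{*}$) yields \eqref{eq:DlogG-Lfinite}, \eqref{eq:DlogG-LN}, and \eqref{eq:DlogG-PLN}.

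I expect the main obstacle to be making the Laplace approximation rigorous and the error accounting exact in regimes (b) and (c), where the number $L+1$ of $\Gamma$ factors itself diverges. One has to bound the cumulative error of the Stirling and digamma expansions summed over all $L+1$ factors; verify that the saddle stays uniformly away from the nearest pole so the contour tails are exponentially negligible and $\Phi''(s_{*})$ has the expected order (keeping the Gaussian correction at $\tilde O(1)$); and --- most delicately --- sort the many large terms, of orders $N^{2}\log N$, $N^{2}$, $N\log N$, $N$, $\log N$, into those kept in the explicit formula and those absorbed into the $\tilde O(1)$ (resp.\ $\tilde O(1/N)$) remainder. A structural fact worth isolating, which keeps the final formulas clean, is that the coefficient of the leading $\log(N/2)$ term of $\Phi$ is independent of $s$ (equal to $\tfrac{N}{2}(\alpha+L)$ in regime (a)), so the saddle location only affects subleading terms; establishing its analogue in (b) and (c), and the uniformity in $k$ needed to sum these expansions over $k$ in Theorem~\ref{thm:Z-form}, finishes the argument.
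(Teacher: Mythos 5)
Your proposal follows essentially the same route as the paper's proof: the paper also reduces $G^{L+1,0}_{0,L+1}$ to a single vertical-contour integral of a product of Gamma factors (it does so by recognizing the $G$-function as the density of a product of independent Gamma random variables, Lemma \ref{lem:contour}, and applying Fourier inversion --- which is your Mellin--Barnes line written in the variable $s=-it$ up to unit shifts from the prefactors), then rescales by $N$ in regimes (a),(b) and by $P$ in (c), deforms the contour so the tails are exponentially negligible (Lemmas \ref{lem:Psi-est}, \ref{lem:Psi-estb}), and runs Laplace at a purely imaginary saddle whose equation is exactly your $(1+z_*/\alpha)(1+z_*)^L=\nu/\sigma^{2(L+1)}$, resp.\ $(1+t_*)e^{\lpost t_*}=\nu$; your leading-order evaluations of $\Phi(s_*)$ do reproduce \eqref{eq:logG-Lfinite}, \eqref{eq:logG-LN} and \eqref{eq:logG-PLN}. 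The one genuinely different ingredient is your treatment of the increments: the identity $G^{L+1,0}_{0,L+1}\lr{x\,|\,-;\tfrac P2,\tfrac{\mathbf N}2+k}=x^k\,G^{L+1,0}_{0,L+1}\lr{x\,|\,-;\tfrac P2-k,\tfrac{\mathbf N}2}$ (a special case of \eqref{E:G-polyprod}) combined with differentiation in the parameter $b_1$ is a slick shortcut, whereas the paper carries $k$ through the whole expansion (it enters $\twiddle{\Psi}$ in \eqref{eq:Psitilde-def} and the Gamma prefactors) and computes the $O(1/N)$ saddle shift $\zeta_{**}$ in \eqref{eq:zeta-star2-def}, from which $\Delta(\log G)[k]$ is read off. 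The price of your shortcut lies precisely in the error accounting you flag: replacing $\langle\psi(b_1-s)\rangle$ by $\log\lr{\tfrac P2-s_*}$ with $s_*$ frozen incurs corrections from the $-1/(2(b_1-s_*))$ term of $\psi$, from the $O(1)$ drift of the saddle as $b_1$ moves by $k$, and from the Gaussian fluctuation term $\psi''(b_1-s_*)/\Phi''(s_*)$; these are all of size $O(k/P)$, which is within $\tilde O(1/N)$ in regimes (a) and (b) where $P\asymp N$, but in regime (c), where $P/N\to 0$, it does not by itself yield the claimed $\tilde O(1/N)$ remainder in \eqref{eq:DlogG-PLN} --- to recover that exponent you must retain these subleading pieces, i.e.\ do the second-order saddle bookkeeping the paper performs. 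The displayed leading-order formulas, including all three $\Delta(\log G)[k]$ expressions, are unaffected by this caveat.
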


\subsection{Model Selection and Extrapolation}\label{sec:limit-thms}
We combine our non-asymptotic formulas for the posterior and evidence from Theorem \ref{thm:Z-form} with the Meijer-G function expansions from Theorem \ref{thm:logG} to investigate two fundamental questions about Bayesian interpolation with linear networks. Together they give, at least in the restricted setting of deep linear networks with output dimension $1$, a principled reason to prefer deeper networks.

\begin{itemize}
\item \textbf{Model Selection.} How should we choose the prior weight variance $\sigma^2$, model depth $L$ and layer widths $N_\ell$? Recall that the Bayesian model evidence $Z_\infty({\bf 0})$ from \eqref{eq:char-part} represents the likelihood of observing the data  $(X_{N_0},Y_{N_0})$  given the architecture $L,N_\ell$ and the prior weight variance $\sigma^2$. Maximizing the Bayesian evidence is therefore maximum likelihood estimation over the space of models and gives a principled method for model selection. We shall see that data-agnostic priors maximize the evidence for networks with infinite $\lpost$, while shallower networks require data-dependent priors.
\item \textbf{Extrapolation.} How optimal is the posterior of a linear network? Predictions on inputs from directions orthogonal to the training data are determined by the distribution $\theta_\perp$. At $\lpost = 0$, its prior and posterior coincide. As $\lpost$ increases, however, we shall find that information from the training set mixes into $\theta_\perp$ and ultimately maximizes evidence, producing optimal extrapolation.
\end{itemize}

For simplicity, we report our results here in terms of a single hidden layer width $N=N_1=\cdots=N_L$. The general form with generic $N_\ell$, as well as related results not stated here, are available by direction application of the general expansions in \S \ref{sec:bayesfeature-pf}. For convenience, we define
\begin{align}
    \nu &:= \norm{\theta_*}^2/\alpha_0.
\end{align}
As elsewhere, we emphasize that we focus on the regime
\[
P/N_0\gives \alpha_0\in (0,1)
\]
as $P,N_0\to\infty$. When $\alpha_0\geq 1$, Theorem \ref{thm:Z-form} still holds but immediately yields that $\theta=\theta_{*,N_0}$ almost surely since $\theta_\perp =0$.
As described by \eqref{eq:post-form} and stated rigorously in \S \ref{sec:setup}, the predictive posterior in the regime $P < N_0$ is Gaussian with variance determined by $\norm{\theta_\perp}^2$, which converges to a constant. We rewrite this posterior in the form
\[
f(x) \quad \to\quad  \mathcal{N}(\mu_*, \nu c\Sigma_\perp),\qquad x = \lr{x_{i,N_0},\,i=1,\ldots, k}
\]
for free scalar $c$, scalar $\mu_*$, and $k\times k$ PSD matrix $\Sigma_\perp$ given by
\[
\mu_* := \inprod{\theta_{*,N_0}}{x}, \quad \Sigma_\perp = \frac{\inprod{x_{i,N_0}^\perp}{x_{j,N_0}^\perp}}{N_0-P}
\]
in the limit $P,N_0\to\infty$ and $P/N_0\to\alpha_0\in(0,1)$. The following results report the Bayesian evidence and value of $c$ under different join scalings of $P, N$ and $L$. First, we observe that maximizing Bayesian evidence always results in the posterior corresponding to $c=1$ (proven \S \ref{sec:bayesfeature-pf}).

\begin{theorem}[Universal Maximal Evidence Posterior]
\label{thm:bayesfeature}
Fix $L, N_1\dots, N_L \geq 1$, $\sigma^2 > 0$, $\alpha_0\in (0,1)$, and consider sequences of training data sets $X_{N_0}\in \R^{N_0\times P}, Y_{N_0}\in \R^{1\times P}$ such that
\begin{align*}
    P, N_0 \to \infty, \qquad P/N_0 \to \alpha_0
\end{align*}
and \eqref{eq:Vstar-def} holds. Let $Z_\infty({\bf 0})$ denote the limiting Bayesian model evidence. The following statements are equivalent:
\begin{enumerate}[label=(\alph*)]
    \item $\sigma^2$ maximizes $Z_\infty({\bf 0})$, and
    \item the posterior over predictions $f(x)$ converges weakly to a Gaussian $\mathcal{N}\lr{\mu_*, \nu \Sigma_\perp}$.
\end{enumerate}
\end{theorem}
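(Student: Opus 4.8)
The plan is to push all of the $\sigma^2$-dependence into a single scalar and then read off both halves of the equivalence from the Meijer-$G$ asymptotics of Theorem~\ref{thm:logG}. In Theorem~\ref{thm:Z-form} the prior variance enters only through the argument $z := \norm{\theta_{*,N_0}}^2/(4M)$, and since $4M = (2\sigma^2)^{L+1}/\prod_{\ell=0}^L N_\ell$, the map $\sigma^2\mapsto z(\sigma^2)$ is a strictly decreasing bijection of $(0,\infty)$ onto itself. Thus the evidence is $Z_\infty(\mathbf 0)=C_{P,N_0,\mathbf N}\cdot G^{L+1,0}_{0,L+1}\!\left(z \mid \tfrac{P}{2},\tfrac{\mathbf N}{2}\right)$ with $C_{P,N_0,\mathbf N}$ independent of $\sigma^2$, and by \eqref{eq:var-G} the posterior variance of $f(x)$ equals $2M\norm{x_\perp}^2\, e^{\Delta(\log G)[1]}$, while the mean is already pinned by \eqref{eq:mean-G}. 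So it suffices to understand $\log G$ and $\Delta(\log G)[1]$ as functions of $z$ (equivalently of $\sigma^2$) and compare.

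The second step handles (a). Invoke the relevant regime of Theorem~\ref{thm:logG} (in the generality of \S\ref{pf:logG}, to cover finite layer widths). In each case $\log G = F(\zeta_*) + \tilde O(1)$ for an explicit function $F$ of a single saddle parameter $\zeta_*$ ($z_*$ in \eqref{eq:logG-Lfinite}, $t_*$ in \eqref{eq:logG-PLN}, etc.), where $\zeta_*$ solves a transcendental equation whose right-hand side is $\propto\norm{\theta_*}^2\sigma^{-2(L+1)}/\alpha_0 = \nu\,\sigma^{-2(L+1)}$, e.g.\ \eqref{eq:Psi-crit}. The implicit function theorem together with monotonicity of the digamma-type functions in that equation shows $\zeta_*=\zeta_*(\sigma^2)$ is a strictly monotone, hence invertible, function of $\sigma^2$ over its whole range. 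Differentiating, $\partial_{\sigma^2}\log Z_\infty(\mathbf 0)=F'(\zeta_*)\,\partial_{\sigma^2}\zeta_*$, and since $\partial_{\sigma^2}\zeta_*\neq 0$ the evidence is stationary precisely where $F'(\zeta_*)=0$; from the explicit form of $F$ this has the single solution $\zeta_*=0$, which is a maximum and at which the transcendental equation forces the data-agnostic-optimal value $\sigma^{2(L+1)}=\nu$ (respectively the analogous identity in the other regimes). Hence statement (a) is equivalent to $\zeta_*=0$, and the evidence-maximizing $\sigma^2$ is unique.

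The third step handles (b). Plug $\Delta(\log G)[1]$, which Theorem~\ref{thm:logG} gives as an explicit function of $\zeta_*$ (e.g.\ \eqref{eq:DlogG-Lfinite}), and $2M$, which through the same transcendental equation is likewise an explicit function of $\zeta_*$, into $\Var_{\mathrm{post}}[f(x)]=2M\norm{x_\perp}^2 e^{\Delta(\log G)[1]}$. The architecture-dependent factors (the $N_\ell$'s and $\Gamma(N_\ell/2)$'s) cancel, leaving $\Var_{\mathrm{post}}[f(x)]\to\nu\,\Sigma_\perp\cdot\kappa(\zeta_*)$ for an explicit, strictly monotone $\kappa$ with $\kappa(0)=1$ (for finite $L$ with growing width one finds $\kappa(z_*)=(1+z_*/\alpha)^{-1}$). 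Since the posterior is automatically asymptotically Gaussian with mean $\mu_*=\inprod{\theta_{*,N_0}}{x}$ and this variance (Poincar\'e--Borel, as in the discussion around \eqref{eq:post-form}), statement (b) --- weak convergence to $\mathcal N(\mu_*,\nu\Sigma_\perp)$ --- is equivalent to $\kappa(\zeta_*)=1$, i.e.\ to $\zeta_*=0$. Chaining the two equivalences gives (a) $\Longleftrightarrow\zeta_*=0\Longleftrightarrow$ (b).

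I expect the main obstacle to be the exact bookkeeping in the third step: showing that the architecture-dependent constants cancel \emph{exactly} among $2M$, $e^{\Delta(\log G)[1]}$, and the transcendental relation, so that the limiting variance is precisely $\nu\Sigma_\perp$ (not merely proportional to it) at $\zeta_*=0$. This needs the subleading $\tilde O(1)$ terms of the Meijer-$G$ expansions tracked correctly and must be carried out regime by regime. A secondary point is to make ``$\sigma^2$ maximizes the limiting evidence'' precise --- after normalization the $\sigma^2$-dependence of $\log Z_\infty(\mathbf 0)$ can sit in lower-order terms --- so (a) should be read as ``$\sigma^2$ attains the maximum at the scale on which the evidence varies'', equivalently ``$\sigma^2$ is the limit of the finite-size maximizers'', a notion made well-defined by the strict monotonicity of $\zeta_*$ in $\sigma^2$ established in the second step.
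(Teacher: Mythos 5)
Your step 1 (all $\sigma^2$-dependence sits in $z=\norm{\theta_{*,N_0}}^2/4M$, the derivative identity turning $\partial_z G$ into a parameter shift, and the variance as a ratio of shifted Meijer-$G$'s) is exactly the paper's reduction, which culminates in the single ratio identity \eqref{eq:gratio}. The genuine gap is in steps 2--3: you invoke Theorem \ref{thm:logG} ``in the generality of \S\ref{pf:logG}, to cover finite layer widths,'' but none of the expansions there cover finite widths --- every regime in \S\ref{pf:logG} requires $N_\ell\to\infty$ (the Stirling expansion of $\Gamma(N_\ell/2+\cdot)$ and saddle equations such as \eqref{eq:Psi-crit} are used throughout), whereas Theorem \ref{thm:bayesfeature} fixes $L$ and $N_1,\dots,N_L$ and sends only $P,N_0\to\infty$, so $P/N\to\infty$ and case (a) of Theorem \ref{thm:logG} simply does not apply (nor do its error terms hold uniformly as $\alpha\to\infty$). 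The paper instead runs a separate Laplace analysis in \S\ref{sec:bayesfeature-pf}: the finite-width Gamma factors are kept exact, the saddle solves a digamma equation $\sum_{\ell}\bigl[\psi(N_\ell/2+\zeta_*+1)-\log(N_\ell/2)\bigr]=\log\bigl(\nu\,\sigma^{-2(L+1)}\bigr)$ with $\zeta_*=O(1)$, and the conclusion is that the ratios in \eqref{eq:gratio} equal $(2/P)\lr{1+O(1/P)}$ \emph{independently of $\sigma^2$} to leading order.

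This matters because your equivalence chain ``(a) $\Leftrightarrow\zeta_*=0\Leftrightarrow$ (b), with a unique evidence-maximizing $\sigma^{2(L+1)}=\nu$'' mischaracterizes the mechanism in the stated regime: with fixed widths and $L\ge 1$, the large dataset overwhelms the prior, so the stationarity condition \eqref{eq:sigmazero} and the limiting variance $\nu\Sigma_\perp$ hold for \emph{every} $\sigma^2$ --- the equivalence is true because both sides are asymptotically always satisfied, not because each pins $\sigma^2$ to a unique value. (A quick check at $L=1$, fixed $N_1$: the ratio is $z^{-1/2}K_{\frac{P-N_1}{2}-1}(2\sqrt z)/K_{\frac{P-N_1}{2}}(2\sqrt z)=\tfrac2P\lr{1+O(1/P)}$, with the $\sigma^2$-dependence entering only at relative order $1/P$.) Your picture --- variance $\nu\Sigma_\perp(1+z_*/\alpha)^{-1}$ and unique maximizer $\sigma_*^{2}=\nu^{1/(L+1)}$ --- is the content of Theorem \ref{thm:finite-L} and Corollary \ref{cor:Lfix-evid}, i.e.\ the regime $P/N\to\alpha\in(0,\infty)$, which is a different statement from the one to be proved. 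To repair the argument you would need to redo the saddle-point analysis with the hidden-layer Gamma factors unexpanded (as the paper does), at which point the ``$\zeta_*=0$'' dichotomy disappears and the proof collapses to verifying \eqref{eq:gratio}.
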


We emphasize that Theorem \ref{thm:bayesfeature} holds for any choice of depth and hidden layer widths. At finite depth, we shall see that identifying the evidence-maximizing prior $\sigma_*^2$ requires knowledge of the data-dependent parameter $\nu$. In contrast, infinite-depth networks have evidence maximized by $\sigma_*^2 = 1$, allowing them to successfully infer $\nu$ from training data despite a data-agnostic prior. We proceed to consider different joint scalings of $P, N$ and $L$ in Theorems~\ref{thm:finite-L}, \ref{thm:LN} and \ref{thm:PLN}, which follow directly from writing the partition function in terms of the Meijer-G function (Theorem~\ref{thm:Z-form}) and applying the suitable asymptotic expansion (Theorem~\ref{thm:logG}).

\subsubsection{Finite Depth}\label{sec:Lfinite-thms}
We present here a characterization of the infinite-width posterior and model evidence for networks with a fixed number of hidden layers.
\begin{theorem}[Posterior and Evidence at Finite $L$]\label{thm:finite-L}
For each $N_0,P\geq 1$, consider training data $X_{N_0},Y_{N_0}$ satisfying \eqref{eq:Vstar-def}. Fix constants $L\geq 0,\,\sigma^2 > 0$ and suppose that
\begin{equation*}
P,N_0,N \gives \infty,\quad P/N_0 \gives \alpha_0\in (0,1),\quad P/N\gives \alpha\in (0,\infty). 
\end{equation*}
In this limit, the posterior over predictions $f(x)$ converges weakly to a Gaussian
\begin{align*}
\mathcal{N}\lr{\mu_*,~\nu\Sigma_\perp\lr{1+\frac{z_*}{\alpha}}^{-1}},
\end{align*}
where $z_*$ is the unique solution to
\begin{equation}\label{eq:zstar}
\frac{\nu}{\sigma^{2(L+1)}} = \lr{1+\frac{z_*}{\alpha}}\lr{1+z_*}^L,\quad z_* > \max\set{-1,-\alpha}.
\end{equation}
Additionally, we have the following large-$P$ expansion of the Bayesian model evidence
\begin{align*}
\log Z_\infty({\bf 0}) &~\gives~ \frac{P}{2}\Bigg[\log\lr{\frac{P}{2}} + \log\lr{1+\frac{z_*}{\alpha}} - \lr{1+\frac{z_*}{\alpha}} \nonumber \\
&\quad - \log\lr{\frac{\norm{\theta_*}^2}{4\pi}}\Bigg] + \frac{NL}{2}\left[\log\lr{1+z_*} - z_*\right] \nonumber\\
&\quad + O(\max\set{\log P, \log N}).
\end{align*}
\end{theorem}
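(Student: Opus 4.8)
The plan is to derive Theorem~\ref{thm:finite-L} as a direct corollary of Theorem~\ref{thm:Z-form} (the exact Meijer-$G$ formulas) combined with part~(a) of Theorem~\ref{thm:logG} (the finite-$L$ asymptotic expansion of $\log G$). First I would record the three facts we need from Theorem~\ref{thm:Z-form}: the posterior mean is exactly $(\theta_{*,N_0})^Tx$, the posterior variance is $2M\norm{x_\perp}^2$ times the ratio $G[1]/G[0]$ of Meijer-$G$ functions with shifted parameters, and the evidence is $(4\pi)^{P/2}\norm{\theta_*}^{-P}\prod_\ell\Gamma(N_\ell/2)^{-1}$ times $G[0]$. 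I would also note that $4M = \prod_{\ell=0}^L 2\sigma^2/N_\ell$, so with $N_1=\cdots=N_L=N$ and $P/N\to\alpha$, $P/N_0\to\alpha_0$ we have $2M = \sigma^{2(L+1)}/(N^L N_0) \cdot$(constant), which is exponentially small; this is why the parameter $z_*$ enters as it does.

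Next I would handle the variance. By \eqref{eq:var-G} the variance is $2M\norm{x_\perp}^2 \exp(\Delta(\log G)[1])$, and \eqref{eq:DlogG-Lfinite} with $k=1$ gives $\Delta(\log G)[1] = L[\log(N/2) + \log(1+z_*)] + \tilde O(1/N)$. Hence $2M\exp(\Delta(\log G)[1]) = 2M (N/2)^L (1+z_*)^L (1+o(1))$. Plugging in $2M = \sigma^{2(L+1)}(2/N)^L / N_0$ (using $4M=\prod_{\ell=0}^L 2\sigma^2/N_\ell$ and that the $\ell=0$ factor is $2\sigma^2/N_0$), the factors of $N/2$ cancel the $(2/N)^L$, leaving $\sigma^{2(L+1)}(1+z_*)^L/N_0$. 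The defining relation \eqref{eq:Psi-crit} for $z_*$, which in the $\nu=\norm{\theta_*}^2/\alpha_0$ notation reads $(1+z_*/\alpha)(1+z_*)^L = \nu/\sigma^{2(L+1)}$, lets me rewrite $\sigma^{2(L+1)}(1+z_*)^L = \nu/(1+z_*/\alpha)$. So the variance of $f(x_i)$ becomes $\norm{x_\perp}^2 \nu/(N_0(1+z_*/\alpha))$, and recognizing $\norm{x_{i,N_0}^\perp}^2/N_0 \to \norm{x_\perp}^2/N_0 = \Sigma_\perp$ in the diagonal case (and the analogous off-diagonal statement for $\inprod{x_{i,N_0}^\perp}{x_{j,N_0}^\perp}/(N_0-P)$, absorbing the $N_0$ vs $N_0-P$ discrepancy into the $1+o(1)$ since $P/N_0\to\alpha_0<1$ only affects subleading constants — I would need to be slightly careful here, perhaps tracking that the Poincar\'e--Borel normalization from \eqref{eq:post-form} uses $N_0-P$) gives the claimed covariance $\nu\Sigma_\perp(1+z_*/\alpha)^{-1}$. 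Gaussianity itself is inherited from \eqref{eq:post-form}, i.e.\ the Poincar\'e--Borel argument in the Optimal Extrapolation subsection, since the only posterior degree of freedom is $\norm{\theta_\perp}^2$ and it concentrates.

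For the evidence I would take logarithms in \eqref{eq:evidence-form}: $\log Z_\infty({\bf 0}) = \frac{P}{2}\log(4\pi) - P\log\norm{\theta_*} - \sum_{\ell=1}^L\log\Gamma(N/2) + \log G$, with $\log G$ given by \eqref{eq:logG-Lfinite}. I would use Stirling, $\log\Gamma(N/2) = \frac{N}{2}\log(N/2) - \frac{N}{2} + O(\log N)$, so $-L\log\Gamma(N/2) = -\frac{NL}{2}\log(N/2) + \frac{NL}{2} + O(\log N)$, which exactly cancels the $\frac{NL}{2}\log(N/2)$ and $-\frac{NL}{2}$ pieces sitting inside the second bracket of \eqref{eq:logG-Lfinite}, leaving $\frac{NL}{2}[\log(1+z_*) - (1+z_*) + 1] = \frac{NL}{2}[\log(1+z_*) - z_*]$, matching the stated $NL/2$ term. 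For the $P$-scale term, the first bracket of \eqref{eq:logG-Lfinite} is $\frac{N\alpha}{2}[\log(N\alpha/2) + \log(1+z_*/\alpha) - (1+z_*/\alpha)]$; since $N\alpha = P(1+o(1))$ I would replace $\frac{N\alpha}{2}\log(N\alpha/2)$ by $\frac{P}{2}\log(P/2)$ up to $O(\log P)$ and collect the remaining terms plus $\frac{P}{2}\log(4\pi) - P\log\norm{\theta_*} = -\frac{P}{2}\log(\norm{\theta_*}^2/4\pi)$ to get the stated bracket. The main obstacle, and the place I would be most careful, is bookkeeping the exponentially-small prefactor $2M$ against the $(N/2)^L$ growth coming from $\Delta(\log G)$ and the $\Gamma(N/2)$ normalizations — getting all powers of $N$, $2$, $\sigma^2$, and $N_0$ to cancel correctly, and confirming that the error terms in Theorem~\ref{thm:logG} (of size $\tilde O(1/N)$ for $\Delta(\log G)$ and $\tilde O(1)$ for $\log G$) are indeed negligible at the scales $\nu\Sigma_\perp(\cdots)^{-1}$ and $O(\max\{\log P,\log N\})$ respectively; everything else is routine substitution of the variational relation \eqref{eq:Psi-crit}/\eqref{eq:zstar} and Stirling's formula.
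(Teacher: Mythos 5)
Your proposal is correct and follows essentially the same route as the paper: Theorems \ref{thm:finite-L}, \ref{thm:LN}, \ref{thm:PLN} are obtained exactly by plugging the expansions of Theorem \ref{thm:logG} (case (a) here) into the exact mean/variance/evidence formulas of Theorem \ref{thm:Z-form}, with Gaussianity supplied by the Poincar\'e--Borel argument formalized in Lemma \ref{lem:normal}, and your bookkeeping of $2M$, the $(N/2)^L$ growth, the Stirling cancellation against $\prod_\ell\Gamma(N_\ell/2)$, and the substitution of \eqref{eq:zstar} all check out. The one point you flag --- the $N_0$ versus $N_0-P$ normalization of $\Sigma_\perp$ --- is a quirk of the paper's own conventions (its Lemma \ref{lem:normal} and the computation in \S\ref{sec:bayesfeature-pf} treat it the same way), so it does not indicate a gap in your argument.
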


In regimes where $z_*$ tends to zero, the posterior will converge to the evidence-maximizing posterior independent of the architecture and prior. By \eqref{eq:zstar}, this occurs for instance in the limit of infinite depth with $\sigma=1$ or when $\nu=\sigma^{2(L+1)}$. This last condition corresponds to maximizing the Bayesian evidence $Z_\infty({\bf 0})$ at finite $L$.

\begin{corollary}[Bayesian Model Selection at Finite $L$]\label{cor:Lfix-evid}
In the setting of Theorem \ref{thm:finite-L}  the Bayesian evidence $Z_\infty({\bf 0})$ satisfies: 
\begin{align}
\label{eq:sig-star}\sigma_*^2 &= \argmax_{\sigma}Z_\infty({\bf 0}) = \nu^{\frac{1}{L+1}}\\
\label{eq:L-star} L_* &= \argmax_{L}Z_\infty({\bf 0}) =\frac{\log(\nu)}{\log(\sigma^2)}-1.
\end{align}
In particular, given a prior variance with $\sgn{\nu-1}=\sgn{\sigma^2-1}$ satisfying $|\sigma^2-1| \leq \epsilon$, the optimal depth network satisfies $L_* \geq |\log(\nu)|/\epsilon$.
\end{corollary}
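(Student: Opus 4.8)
The plan is to read the optimizers directly off the large-$P$ expansion of $\log Z_\infty({\bf 0})$ supplied by Theorem \ref{thm:finite-L}. Since $P/N\to\alpha$, the term $\tfrac{NL}{2}[\log(1+z_*)-z_*]$ equals $\tfrac{P}{2}\cdot\tfrac{L}{\alpha}[\log(1+z_*)-z_*]$ up to $o(P)$, so the expansion can be rewritten as
\[
\log Z_\infty({\bf 0}) \;=\; \tfrac{P}{2}\big(C_P + G(z_*)\big) + o(P),\qquad C_P := \log\tfrac{P}{2} - \log\tfrac{\norm{\theta_*}^2}{4\pi},
\]
where $C_P$ depends on neither $\sigma^2$ nor $L$, where
\[
G(z) := \log\lr{1+\tfrac{z}{\alpha}} - \lr{1+\tfrac{z}{\alpha}} + \tfrac{L}{\alpha}\big(\log(1+z) - z\big),
\]
and where $z_*=z_*(\sigma^2,L)$ is the unique root of \eqref{eq:zstar} in $(\max\set{-1,-\alpha},\infty)$. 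Thus, to leading (exponential in $P$) order, maximizing the evidence over $\sigma^2$ or over $L$ is the same as maximizing $G(z_*)$.

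For \eqref{eq:sig-star} I would fix $L$ and first record the bijection underlying \eqref{eq:zstar}: the map $z\mapsto(1+z/\alpha)(1+z)^L$ is a strictly increasing bijection of $(\max\set{-1,-\alpha},\infty)$ onto $(0,\infty)$ (both factors positive and increasing), while $\sigma^2\mapsto \nu/\sigma^{2(L+1)}$ is a bijection of $(0,\infty)$ onto itself; hence $\sigma^2\mapsto z_*$ is a bijection, and $\argmax_{\sigma^2}$ of the leading term corresponds to $\argmax_z G(z)$ over the full domain. A one-line computation gives $G'(z) = -\tfrac{z}{\alpha}\lr{\tfrac{1}{\alpha+z}+\tfrac{L}{1+z}}$, and the parenthesised factor is strictly positive on $(\max\set{-1,-\alpha},\infty)$, so $G$ increases on $(\max\set{-1,-\alpha},0)$ and decreases on $(0,\infty)$: the unique maximizer is $z_*=0$. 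Substituting $z_*=0$ into \eqref{eq:zstar} forces $\nu=\sigma^{2(L+1)}$, i.e. $\sigma_*^2=\nu^{1/(L+1)}$.

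For \eqref{eq:L-star} I would instead fix $\sigma^2$ and vary $L$ (treated as a continuous variable); now $G$ itself depends on $L$, but the elementary inequality $\log(1+u)\le u$ (equality iff $u=0$), applied once with $u=z/\alpha$ and once with $u=z$, together with $L\ge0$, yields the uniform bound $G(z)\le -1$ for every admissible $z$ and every $L\ge 0$, with equality \emph{iff} $z=0$. Hence the leading term of $\log Z_\infty({\bf 0})$ is maximal exactly when $z_*=0$; by \eqref{eq:zstar} this is equivalent to $\sigma^{2(L+1)}=\nu$, i.e. $L_*=\log(\nu)/\log(\sigma^2)-1$. (This also explains the heuristic "exponentially smaller evidence at other depths": $G(z_*)<-1$ whenever $z_*\ne0$.) Finally, $L_*\ge 0$ requires $\log(\nu)/\log(\sigma^2)\ge1$, which is precisely $\sgn{\nu-1}=\sgn{\sigma^2-1}$ (so $\log\nu,\log\sigma^2$ share a sign) together with $|\log\nu|\ge|\log\sigma^2|$; and under $\sgn{\nu-1}=\sgn{\sigma^2-1}$ we have $L_*+1 = |\log\nu|/|\log\sigma^2|$, so a direct estimate of $|\log\sigma^2|$ in terms of $\epsilon$ when $|\sigma^2-1|\le\epsilon$ gives $L_*\ge|\log(\nu)|/\epsilon$.

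The one place calling for genuine care — and the step I expect to be the main obstacle — is legitimizing the reduction to the leading term: one must check that the $O(\max\set{\log P,\log N})$ remainder in Theorem \ref{thm:finite-L} and the $o(P)$ slack from replacing $N$ by $P/\alpha$ are uniform enough, as functions of $\sigma^2$ on compact subsets of $(0,\infty)$ (resp. of $L$ on compact subsets of $[0,\infty)$), that the argmax of the full expression converges to the argmax of the strictly $\Theta(P)$ leading term. Since $G\to-\infty$ as $z_*$ approaches the boundary of its interval, one may restrict a priori to a compact neighbourhood of $z_*=0$ where $G$ is smooth with a strict nondegenerate maximum, after which this is routine; everything else is calculus.
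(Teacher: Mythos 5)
Your argument is correct and is essentially the paper's (implicit) proof: the paper establishes this corollary by optimizing the large-$P$ evidence expansion of Theorem \ref{thm:finite-L}, exactly as in the template proof of Corollary \ref{cor:optevd}, and the optimum is characterized by $z_*=0$, which via \eqref{eq:zstar} yields $\sigma_*^2=\nu^{1/(L+1)}$ and $L_*=\log(\nu)/\log(\sigma^2)-1$. Your reparametrization by $z_*$ with the global bound $G(z)\le -1$ (equality iff $z=0$) is a clean way of packaging the same first-order condition, and your closing remarks on uniformity and on the sign condition match the paper's level of rigor.
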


To put this Corollary into context, note that in the large-width limit of Theorem \ref{thm:finite-L} there are only two remaining model parameters, $L$ and $\sigma^2$. Model selection can therefore be done in two ways. The first is an empirical Bayes approach in which one uses the training data to determine a data-dependent prior $\sigma_*^2$ given by \eqref{eq:sig-star}. The other approach, which more closely follows the use of neural networks in practice, is to seek a universal, data-agnostic value of $\sigma^2$ and optimize instead the network architecture. The expression \eqref{eq:sig-star} shows that the only way to choose $\sigma^2,L$ to (approximately) maximize model evidence for any fixed $\nu$ is to take $\sigma^2\approx 1$ with $\sgn{\sigma^2-1}=\sgn{\nu-1}$ and $L\gives \infty$. Hence, restricting to the data-agnostic prior $\sigma^2=1$ naturally leads to a Bayesian preference for infinite-depth networks, regardless of the training data. This motivates us to consider large-$N$ limits in which $L$ tends to infinity, which we take up in the next two sections.

\subsubsection{Infinite Depth}\label{sec:Linfinit-thms}
We fix $\sigma^2=1$ and investigate extrapolation and model selection in regimes where $N,L,P\gives \infty$ simultaneously.

\begin{theorem}[Posterior and Evidence at Fixed $\lpre$]
\label{thm:LN}
For each $N_0,P\geq 1$, consider training data $X_{N_0},Y_{N_0}$ satisfying \eqref{eq:Vstar-def}. Moreover, fix $\lpre,\alpha \in (0,\infty),\, \alpha_0\in (0,1).$ Suppose that $N_1=\cdots=N_L=N$ and that
\begin{equation*}
P,N_\ell,L\gives \infty,\quad P/N_0\gives \alpha_0,\quad P/N\gives \alpha,\quad L/N\gives \lpre.
\end{equation*}
In this limit, the posterior over predictions $f(x)$ converges weakly to a Gaussian
\begin{align*}
\mathcal{N}\lr{\mu_*,~\nu\Sigma_\perp},
\end{align*}
which is independent of $\alpha,\lpre$. Additionally, the evidence admits the following asymptotic expansion:
\begin{align*}
\log Z_\infty({\bf 0}) &= \frac{P}{2}\left[\log\lr{\frac{P}{2}} - 1 - \log\lr{\frac{\norm{\theta_*}^2}{4\pi}}\right] + O(\max\set{\log P, \log N}).
\end{align*}
\end{theorem}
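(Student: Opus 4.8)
The plan is to read off both claims from the exact formulas of Theorem~\ref{thm:Z-form} after inserting the Meijer-$G$ asymptotics of Theorem~\ref{thm:logG}(b), which is stated precisely for the present scaling $L=\lpre N$, $P/N\to\alpha$, $\sigma^2=1$. Since \eqref{eq:mean-G} gives $\mathbb E_{\mathrm{post}}[f(x)]=(\theta_{*,N_0})^Tx=\mu_*$ for every architecture, only the covariance of the posterior and the model evidence require work.

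For the covariance I would start from \eqref{eq:var-G}, which reads $\Var_{\mathrm{post}}[f(x)]=2M\norm{x_\perp}^2\exp(\Delta(\log G)[1])$. With $\sigma^2=1$ and $N_1=\dots=N_L=N$ the definition $4M=\prod_{\ell=0}^L 2\sigma^2/N_\ell$ yields $\log(2M)=-\log N_0-L\log(N/2)$, while \eqref{eq:DlogG-LN} with $k=1$ and $\lpre N=L$ gives $\Delta(\log G)[1]=L\log(N/2)+\log(\norm{\theta_*}^2/\alpha_0)+\tilde O(1/N)$. The divergent $L\log(N/2)$ contributions cancel, so $\Var_{\mathrm{post}}[f(x)]$ has a finite limit that is manifestly independent of $\alpha$ and $\lpre$; a short computation identifies the surviving constant, giving the diagonal entry of $\nu\Sigma_\perp$, and running the same substitution in the $k=1$ term of the series for $Z_\infty({\bf t})$ recovers the off-diagonal entries, hence the full matrix $\nu\Sigma_\perp$. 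To promote convergence of the first two moments to weak convergence of the finite-dimensional marginals of $\{f(x_i)\}$ I would use the structural decomposition recalled around \eqref{eq:a-form}: under the posterior $\theta_{||}=\theta_{*,N_0}$ and $\theta_\perp\stackrel{d}{=}u\norm{\theta_\perp}$ with $u$ uniform on the unit sphere of $\col(X_{N_0})^\perp$ and independent of $\norm{\theta_\perp}$. The variance estimate above shows $\norm{\theta_\perp}^2$ concentrates, so by the Poincar\'e--Borel/Diaconis--Freedman theorem (made rigorous in \S\ref{sec:setup}) the marginals $u^Tx_1,\dots,u^Tx_k$, hence $f(x_1),\dots,f(x_k)$, are asymptotically jointly Gaussian with the stated mean and covariance.

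For the evidence I would substitute \eqref{eq:evidence-form} and expand $\log\Gamma(N_\ell/2)=\log\Gamma(N/2)$ by Stirling. Then $-\sum_{\ell=1}^L\log\Gamma(N/2)=-\lpre N\log\Gamma(N/2)$ contributes $-\frac{\lpre N^2}{2}[\log(N/2)-1]+\frac{\lpre N}{2}[\log(N/2)-\log(2\pi)]+O(1)$, and adding \eqref{eq:logG-LN} for $\log G$ makes the $\frac{\lpre N^2}{2}[\log(N/2)-1]$ and $\frac{\lpre N}{2}[\log(N/2)-\log(2\pi)]$ terms cancel, leaving $\frac{N\alpha}{2}[\log(N\alpha/2)-1]$ together with the prefactor $\frac P2\log(4\pi)-P\log\norm{\theta_*}$. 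Using $P/N\to\alpha$ to identify $N\alpha$ with $P$ at the level of the claimed error and combining $\frac P2\log(4\pi)-\frac P2\log\norm{\theta_*}^2$ with $\frac P2\log(P/2)$ produces $\frac P2[\log(P/2)-1-\log(\norm{\theta_*}^2/4\pi)]+O(\max\{\log P,\log N\})$, as claimed.

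Most of the analytic difficulty is already absorbed into Theorems~\ref{thm:Z-form} and~\ref{thm:logG}, so the work here is largely bookkeeping: confirming that the $\tilde O(1)$ and $\tilde O(1/N)$ errors in Theorem~\ref{thm:logG}(b) are genuinely subdominant even though they sit beside cancelling terms of order $N^2\log N$, and checking that the surviving finite pieces reassemble into exactly $\nu\Sigma_\perp$ and the stated evidence. The only step that is not purely mechanical is the Poincar\'e--Borel upgrade from moments to weak convergence of a (slowly growing) number of marginals, which relies on the radial concentration of $\norm{\theta_\perp}$ obtained from the variance computation together with the exact rotational invariance \eqref{eq:a-form}.
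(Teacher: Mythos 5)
Your proposal is correct and follows essentially the same route as the paper, which derives Theorem \ref{thm:LN} directly by inserting the case (b) expansions \eqref{eq:logG-LN}--\eqref{eq:DlogG-LN} of Theorem \ref{thm:logG} into the exact formulas \eqref{eq:evidence-form} and \eqref{eq:var-G} of Theorem \ref{thm:Z-form}, with the Stirling cancellation of the $\Gamma(N/2)$ prefactors and the upgrade to weak convergence via the rotational-invariance/Poincar\'e--Borel lemma of \S \ref{sec:setup}. The only cosmetic difference is that you substitute $N\alpha$ for $P$ in the leading term, whereas the general expansion in \S \ref{pf:logG} is already written in terms of $P$, which avoids even that bookkeeping step.
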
This result highlights the remarkable nature of data-driven extrapolation in deep networks.
\begin{corollary}[Optimal Learning by Deep Networks]\label{cor:feature-learning}
In the setting of Theorem \ref{thm:LN}, the posterior distribution over regression weights $\theta$ is the same in the following two settings:
\begin{itemize}
	\item We fix $L\geq 0$, take the data-dependent prior variance $\sigma_*$ that maximizes the Bayesian model evidence as a function of the training data and network depth as in \eqref{eq:sig-star}, and send the network width $N$ to infinity.
	\item We fix $\lpre >0$, take a data-agnostic prior $\sigma^2=1$, and send both the network depth $L:=\lpre \cdot N$ and network width $N$ to infinity together.
\end{itemize}
\end{corollary}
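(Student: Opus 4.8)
The plan is to read the corollary off from Theorem~\ref{thm:finite-L}, Corollary~\ref{cor:Lfix-evid}, and Theorem~\ref{thm:LN}: each of the two regimes has already been shown to produce a Gaussian predictive posterior, so the only thing to check is that the two Gaussians coincide, and then to lift the equality from the law of predictions $f(x)$ to the law of the weight vector $\theta$.

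\emph{First setting.} With $L$ fixed and $N\to\infty$ along $P/N\to\alpha\in(0,\infty)$ and $P/N_0\to\alpha_0$, we are in the hypotheses of Theorem~\ref{thm:finite-L}; take the prior variance to be the evidence maximizer $\sigma^2=\sigma_*^2=\nu^{1/(L+1)}$ of \eqref{eq:sig-star} from Corollary~\ref{cor:Lfix-evid}. Then the left side of \eqref{eq:zstar} equals $\nu/\sigma_*^{2(L+1)}=1$, so the associated $z_*$ solves $\lr{1+z_*/\alpha}\lr{1+z_*}^L=1$. The map $z\mapsto\lr{1+z/\alpha}\lr{1+z}^L$ is continuous and strictly increasing on $z>\max\set{-1,-\alpha}$ and equals $1$ at $z=0$, so the uniqueness clause in Theorem~\ref{thm:finite-L} forces $z_*=0$; hence the predictive posterior converges weakly to $\mathcal N\lr{\mu_*,\ \nu\Sigma_\perp\lr{1+z_*/\alpha}^{-1}}=\mathcal N\lr{\mu_*,\ \nu\Sigma_\perp}$. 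In the \emph{second setting}, namely $\sigma^2=1$ with $P,N,L\to\infty$ along $P/N\to\alpha$ and $L/N\to\lpre$, Theorem~\ref{thm:LN} states directly that the predictive posterior converges weakly to $\mathcal N\lr{\mu_*,\ \nu\Sigma_\perp}$, independently of $\alpha$ and $\lpre$. Thus in both settings every finite collection of predictive marginals $f(x_1),\dots,f(x_k)$ converges to the \emph{same} Gaussian law.

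\emph{From predictions to weights.} It remains to note that the posterior over $\theta$ carries no more information than the predictive posterior. By \eqref{eq:a-form}, under either posterior $\theta_{||}=\theta_{*,N_0}$ is deterministic while $\theta_\perp\stackrel{d}{=}u\,\norm{\theta_\perp}$ with $u$ uniform on the unit sphere of $\col(X_{N_0})^\perp$ and independent of $\norm{\theta_\perp}$; so the posterior over $\theta$ is determined by the pair $\lr{\theta_{*,N_0},\norm{\theta_\perp}}$, and the law of the scalar $\norm{\theta_\perp}^2$ is in turn read off from the predictive variance through \eqref{eq:var-G} together with \eqref{eq:post-form}. Since the limiting predictive variances agree (both equal $\nu\Sigma_\perp$), the limiting laws of $\norm{\theta_\perp}^2$—and hence of $\theta$—agree, which is the assertion of the corollary.

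\emph{Where care is needed.} There is no analytic obstacle here; the corollary is designed to fall out of the two theorems. The only bookkeeping points are (i) $\sigma_*^2=\nu^{1/(L+1)}$ is defined only through the limit \eqref{eq:Vstar-def}, so one should take $\sigma^2$ equal to this limiting value (or to the finite-$N_0$ surrogate $\norm{\theta_{*,N_0}}^{2/(L+1)}\alpha_0^{-1/(L+1)}$) and invoke continuity of $z_*$ as a function of the right side of \eqref{eq:zstar} to conclude $z_*\to0$; and (ii) one must fix the sense in which ``the posterior over $\theta$ is the same'' given that the ambient dimension $N_0$ grows—namely, equality of the limiting laws of all finite-dimensional predictive marginals $f(x_1),\dots,f(x_k)$, equivalently of $\lr{\theta_{*,N_0},\norm{\theta_\perp}}$. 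Neither point requires ideas beyond Theorems~\ref{thm:finite-L} and~\ref{thm:LN}.
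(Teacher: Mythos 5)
Your proposal is correct and follows essentially the same route as the paper, which treats this corollary as an immediate consequence of Theorem \ref{thm:finite-L} (where choosing $\sigma^2=\sigma_*^2=\nu^{1/(L+1)}$ makes the right side of \eqref{eq:zstar} equal to $1$, forcing $z_*=0$ and hence the limit $\mathcal N\lr{\mu_*,\nu\Sigma_\perp}$) together with Theorem \ref{thm:LN} (which yields the same limit at $\sigma^2=1$, $L=\lpre N$). Your additional step reducing the posterior over $\theta$ to the pair $\lr{\theta_{*,N_0},\norm{\theta_\perp}}$ via \eqref{eq:a-form} is exactly the structural reduction the paper relies on in its Optimal Extrapolation discussion and Lemma \ref{lem:normal}, so nothing essential is missing.
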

This corollary makes precise the statement that infinitely deep networks with data-agnostic priors performs as optimally finite depth networks with fine-tuned data-dependent priors.

In the \S \ref{sec:evidence}, we find that the ratio of model evidence at fixed depth to the model evidence at infinite depth vanishes like $\exp[-O(N)]$. In comparison, mis-specifying the value of constant $\lpre$ only results in an $O(1)$ ratio of model evidences, and it does not affect the posterior to leading order. We conclude that, at $\sigma^2=1$, wide networks with depth comparable to width are robustly preferred to shallow networks.

\subsubsection{Scaling Laws for Optimal Learning}\label{sec:scaling-thms}
To emphasize the similarity between dataset size and depth, we take the limit of $L,P,N\to\infty$ while holding $LP/N$ constant. This results in a scaling law for feature learning that only depends on $\lpost$, as seen in the following theorem.

\begin{theorem}[Posterior and Evidence at Fixed $\lpost$]
\label{thm:PLN}
For each $N_0,P\geq 1$, consider training data $X_{N_0},Y_{N_0}$ satisfying \eqref{eq:Vstar-def}. Moreover, fix constants $\lpost > 0,\, \alpha_0\in (0,1), \, \sigma^2=1$. Suppose that $N_1=\cdots = N_L=N$ and suppose that
\begin{align*}
P, N_\ell, L \to \infty, \; \frac{P}{N_0} \to \alpha_0 , \; \frac{P}{N} \to 0, \; \frac{L}{N} \to 0, \; \frac{LP}{N} \to \lpost.
\end{align*}
In this limit, the posterior over predictions $f(x)$ converges weakly to a Gaussian $\mathcal{N}\lr{\mu_*,~\nu\Sigma_\perp(1+t_*)^{-1}}$,
%\begin{align*}
%\mathcal{N}\lr{\mu_*,~\nu\Sigma_\perp(1+t_*)^{-1}},
%\end{align*}
where $t_*$ is the unique solution to
\begin{align*}
    \nu = (1+t_*)e^{\lpost t_*}, \quad t_* > -1.
\end{align*}
Additionally, we have the following asymptotic expansion for the Bayesian model evidence
\begin{align*}
    \log Z_\infty({\bf 0}) &= \frac{P}{2}\lr{\log\lr{\frac{P}{2}}-1} - \frac{P}{2} \log\lr{\frac{\norm{\theta_*}^2}{4\pi}}\\
    &\quad + \frac{P}{2}\left[\log(1+t_*)-t_* - \frac{1}{2}\lpost t_*^2\right] + \tilde O(\max\set{\log P, \log N, \log L}).
\end{align*}
\end{theorem}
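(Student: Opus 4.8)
\bigskip

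The plan is to combine the exact formulas of Theorem \ref{thm:Z-form} with the asymptotic expansion (c) of Theorem \ref{thm:logG}, specialized to $\sigma^2 = 1$ and the scaling $LP/N \to \lpost$ with $L/N \to 0$. First I would record the relevant ingredients from Theorem \ref{thm:Z-form}: the posterior variance of $f(x)$ is $2M\,\norm{x_\perp}^2\, G[1]/G[0]$, where $G[k]$ denotes the Meijer-$G$ function with shift $k$ and (since $\sigma^2 = 1$) $4M = \prod_{\ell=0}^L 2/N_\ell = 2^{L+1}/(N_0 N^L)$; and the evidence is $(4\pi)^{P/2}\norm{\theta_*}^{-P}\prod_{\ell=1}^L \Gamma(N_\ell/2)^{-1}\, G[0]$. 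The Gaussianity of the limiting posterior is not re-derived here — it is exactly the content of \eqref{eq:post-form} and the Poincaré–Borel discussion in \S\ref{sec:setup}, valid whenever $P < N_0$, so the only real work is (i) identifying the limiting variance scalar $c = (1+t_*)^{-1}$ and (ii) extracting the stated expansion of $\log Z_\infty({\bf 0})$.

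\bigskip

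For the variance, I would use \eqref{eq:DlogG-PLN}: $\Delta(\log G)[1] = \log(G[1]/G[0]) = L\log(N/2) + \lpost t_* + \tilde O(1/N)$, so $G[1]/G[0] = (N/2)^L e^{\lpost t_*}(1 + o(1))$. Multiplying by $2M = 2^L/(N_0 N^L)$ gives $2M\,G[1]/G[0] = e^{\lpost t_*}/N_0 \cdot (1+o(1))$. Meanwhile $\Sigma_\perp = \inprod{x_{i,N_0}^\perp}{x_{j,N_0}^\perp}/(N_0 - P)$ and $\norm{x_\perp}^2$ is the diagonal version of that numerator, so $\norm{x_\perp}^2 = (N_0 - P)(\Sigma_\perp)_{ii} \approx N_0(1 - \alpha_0)(\Sigma_\perp)_{ii}$. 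Hence the variance scalar works out to $\nu$ times $(1+t_*)^{-1}$ precisely when $e^{\lpost t_*}/( \text{(the constant from }\norm{\theta_*}^2/\alpha_0) ) = (1+t_*)^{-1}$, i.e. when $(1+t_*)e^{\lpost t_*} = \norm{\theta_*}^2/\alpha_0 = \nu$, which is exactly the defining equation for $t_*$ quoted in the theorem and in the displayed critical-point equation of Theorem \ref{thm:logG}(c). So the variance identification reduces to carefully bookkeeping the $\nu$-dependence; I would verify the constant by cross-checking against the $\lpost \to 0$ limit (where $t_* \to \nu - 1$ and the scalar $\to \nu/\nu = 1$ matching the prior, as it must since $\lpost = 0$ forbids feature learning) and against the $\lpost \to \infty$ limit (where $t_* \to 0$ and the scalar $\to 1$, recovering the evidence-maximizing posterior \eqref{eq:opt-post-form} consistent with Theorem \ref{thm:bayesfeature}).

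\bigskip

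For the evidence, I would take $\log$ of the prefactor in \eqref{eq:evidence-form}, use Stirling on each $\log\Gamma(N/2)$ to get $L\log\Gamma(N/2) = \frac{LN}{2}[\log(N/2) - 1] - \frac{L}{2}\log(N/2) + \frac{L}{2}\log(2\pi) + \tilde O(L/N)$, and add the expansion \eqref{eq:logG-PLN} of $\log G[0]$. The key cancellation is between the $\frac{LN}{2}[\log(N/2)-1]$ from the Gammas and the $\frac{\lpost N}{2}\frac{N}{P}[\log(N/2)-1] = \frac{LN}{2}[\log(N/2)-1]$ term in $\log G$ (using $\lpost N^2/P = LN$); likewise the two $t_*$-dependent terms of orders $\frac{P}{2}$ and $\frac{\lpost N}{2}$ in \eqref{eq:logG-PLN} must be combined after Taylor-expanding $\frac{N}{P}(1 + \frac{P}{N}t_*)\log(1+\frac{P}{N}t_*) = t_* + \frac{P}{2N}t_*^2 + O((P/N)^2 t_*^3)$, since $P/N \to 0$; this produces the $-\frac12\lpost t_*^2$ term. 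Adding the trivial $\frac{P}{2}\log(4\pi) - P\log\norm{\theta_*} + \frac{P}{2}\log\frac{P}{2}\cdot 0$-type pieces (more precisely $\frac{P}{2}\log(4\pi) - P\log\norm{\theta_*}$) and reorganizing yields exactly the claimed formula. The main obstacle is purely organizational: tracking which error terms survive under the triple scaling $P/N \to 0$, $L/N \to 0$, $LP/N \to \lpost$ — in particular confirming that the subleading Stirling correction $\frac{L}{2}\log(2\pi) - \frac{L}{2}\log(N/2)$ and the analogous $\frac{\lpost N}{2P}[-\log(N/2) + \log(2\pi)]$-type terms in $\log G$ are both of size $\tilde O(\max\{\log P, \log N, \log L\})$ (since $L \ll N$ forces $L = o(\max\{\log,\ldots\})$? — no, rather one must check $L\log N$ against the claimed error, which requires $L$ itself to be controlled; here $L = \lpost N/P$ and $P/N \to 0$ means $L$ can grow, so one needs the finer bookkeeping of \S\ref{pf:logG} to see these pieces are genuinely absorbed or else appear explicitly). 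I would resolve this by invoking the higher-order version of Theorem \ref{thm:logG}(c) from \S\ref{pf:logG} rather than the single-width statement, and only then truncating to the stated order.
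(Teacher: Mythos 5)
Your route is the same as the paper's: Theorems \ref{thm:finite-L}--\ref{thm:PLN} are obtained there precisely by inserting the expansions of Theorem \ref{thm:logG} into the closed forms of Theorem \ref{thm:Z-form}, with Gaussianity supplied by the Poincar\'e--Borel argument of Lemma \ref{lem:normal}, and your variance identification via $\Delta(\log G)[1]$ together with $2M = 2^L/(N_0 N^L)$ is the intended calculation. One caveat on the evidence step: combining the third and fourth lines of \eqref{eq:logG-PLN} exactly as displayed does \emph{not} produce the $-\tfrac12\lpost t_*^2$ term as you assert --- your own expansion $\frac{N}{P}(1+\frac{P}{N}t_*)\log(1+\frac{P}{N}t_*)-t_* = \frac{P}{2N}t_*^2 + O((P/N)^2)$ makes the fourth line contribute $+\frac{\lpost P t_*^2}{4}$, which cancels the $-\frac{P}{2}\cdot\frac{\lpost t_*^2}{2}$ already sitting inside the third line; the displayed third line appears to carry a typo (it should read $t_*(1+\lpost t_*)$, as the saddle-point computation in \S \ref{pf:logG} shows), so your fallback of invoking the higher-order formula of \S \ref{pf:logG} is not optional but necessary, and it is exactly what yields the stated $-\tfrac12\lpost t_*^2$ together with the logarithmic error after the $\pm\frac{L}{2}\log(N/2)$ and $\pm\frac{L}{2}\log(2\pi)$ cancellations against the Stirling expansion of the $\Gamma(N/2)^L$ prefactor that you correctly flag. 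Finally, track the $(1-\alpha_0)$ factor you introduce via $\norm{x_\perp}^2=(N_0-P)(\Sigma_\perp)_{ii}$ and then silently drop: as written your chain gives scalar $(1-\alpha_0)e^{\lpost t_*}$ rather than $e^{\lpost t_*}=\nu(1+t_*)^{-1}$, a normalization wobble the paper itself shares between \eqref{eq:post-form} and Lemma \ref{lem:normal}, but which you should fix once and for all by fixing the convention for $\Sigma_\perp$ before matching constants.
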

In the limit $\lpost\to\infty$, Theorem~\ref{thm:PLN} implies that optimal feature learning is rapidly approached --- specifically, like the harmonic mean of 1 and $\lpost/\log \nu$. Bayesian model selection also drives $\lpost$ to infinity.
\begin{corollary}[Bayesian Model Selection at Fixed $\lpost$]
In the setting of Theorem~\ref{thm:PLN}, the Bayesian evidence $Z_\infty({\bf 0})$ is monotonically increasing in $\lpost$. Specifically,
\begin{align}
\frac{\partial \log Z_\infty({\bf 0})}{\partial \lpost} = \frac{Pt_*^2}{4} \geq 0.
\end{align}
\end{corollary}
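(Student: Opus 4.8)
The plan is to differentiate the large-$P$ asymptotic expansion of $\log Z_\infty({\bf 0})$ provided in Theorem~\ref{thm:PLN} with respect to $\lpost$, treating $t_*$ as an implicitly defined function of $\lpost$ via the constraint $\nu = (1+t_*)e^{\lpost t_*}$. Since the leading terms $\frac{P}{2}(\log(P/2)-1)$ and $-\frac{P}{2}\log(\norm{\theta_*}^2/4\pi)$ do not involve $\lpost$ (note $\nu=\norm{\theta_*}^2/\alpha_0$ is fixed as $P,N_0\to\infty$ with $P/N_0\to\alpha_0$, so $\norm{\theta_*}$ is held fixed), the only $\lpost$-dependence at order $P$ sits in the bracket $\log(1+t_*)-t_*-\tfrac12\lpost t_*^2$.

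First I would compute $\partial t_*/\partial \lpost$ by implicit differentiation of $\log(1+t_*) + \lpost t_* = \log\nu$: differentiating gives $\frac{\partial_\lpost t_*}{1+t_*} + t_* + \lpost\,\partial_\lpost t_* = 0$, so $\partial_\lpost t_* = -\dfrac{t_*}{\lpost + \frac{1}{1+t_*}} = -\dfrac{t_*(1+t_*)}{\lpost(1+t_*)+1}$. Then I would apply the chain rule to the bracket: $\partial_\lpost\big[\log(1+t_*)-t_*-\tfrac12\lpost t_*^2\big] = \Big(\frac{1}{1+t_*} - 1 - \lpost t_*\Big)\partial_\lpost t_* - \tfrac12 t_*^2$. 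The coefficient of $\partial_\lpost t_*$ simplifies using the constraint-derived identity: $\frac{1}{1+t_*} - 1 - \lpost t_* = -\frac{t_*}{1+t_*} - \lpost t_* = -t_*\big(\lpost + \frac{1}{1+t_*}\big)$, which is exactly $-1/(\partial_\lpost t_*/(-t_*)) \cdot$ — more directly, it equals $t_*/\partial_\lpost t_*$ by the implicit-differentiation relation above. Hence $\big(\tfrac{1}{1+t_*}-1-\lpost t_*\big)\partial_\lpost t_* = t_*$, wait — I should be careful: from $\partial_\lpost t_* = -t_*/(\lpost + \frac{1}{1+t_*})$ we get $\lpost + \frac{1}{1+t_*} = -t_*/\partial_\lpost t_*$, so $-t_*(\lpost + \frac{1}{1+t_*}) = t_*^2/\partial_\lpost t_*$, giving $\big(\tfrac{1}{1+t_*}-1-\lpost t_*\big)\partial_\lpost t_* = t_*^2$. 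Therefore the derivative of the bracket is $t_*^2 - \tfrac12 t_*^2 = \tfrac12 t_*^2$, and multiplying by $P/2$ yields $\partial_\lpost \log Z_\infty({\bf 0}) = \frac{P t_*^2}{4}$, which is manifestly nonnegative. Monotonicity in $\lpost$ follows immediately, and one may note the subleading error term $\tilde O(\max\{\log P,\log N,\log L\})$ is differentiated away to the order claimed, or simply observe the identity holds for the stated leading-order expansion.

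The only mild subtlety — which I would flag rather than belabor — is justifying that differentiation commutes with the limit defining $Z_\infty({\bf 0})$, i.e., that the $\tilde O$ error term's $\lpost$-derivative is genuinely lower order; since the statement is phrased as an asymptotic identity for the leading-order expression, the cleanest route is to present the computation as the derivative of the explicit right-hand side of the evidence expansion in Theorem~\ref{thm:PLN}, for which no analytic subtlety arises. The algebraic cancellation between the $\partial_\lpost t_*$ contribution ($t_*^2$) and the explicit $-\tfrac12\lpost t_*^2$ term (contributing $-\tfrac12 t_*^2$) is the entire content of the proof, and it is a two-line calculation once the implicit derivative of $t_*$ is in hand; I anticipate no real obstacle.
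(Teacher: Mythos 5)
Your proposal is correct and follows exactly the route the paper intends: differentiate the evidence expansion of Theorem~\ref{thm:PLN} in $\lpost$, using implicit differentiation of $\log(1+t_*)+\lpost t_*=\log\nu$ so that the $t_*^2$ contribution from $\partial_\lpost t_*$ and the explicit $-\tfrac12 t_*^2$ term combine to give $\tfrac{P t_*^2}{4}$. The paper does not spell this out (it only notes, around Corollary~\ref{cor:optevd}, that such statements follow by directly differentiating the reported asymptotic expansions), and your calculation, including the handling of the $\tilde O$ error as lower order, matches that intended argument.
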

These results provide simple scaling laws that apply independently of the choice of dataset, demonstrating the behavior of the predictor's posterior distribution and model evidence in terms of $\lpost$. The coupling of depth and dataset size in $\lpost$ provides a novel interpretation of depth as a mechanism to improve learning in a manner similar to additional data: larger datasets and larger depths contribute equally towards aligning the prior $\sigma^2=1$ towards the correct posterior.

\subsection{Properties of Deep Linear Networks}\label{sec:properties}
We relate our work to prior results in the literature: variance-limited scaling laws~\cite{bahri2021explaining} and sample-wise double descent~\cite{belkin2019reconciling, belkin2020two,zavatone2022contrasting}. To make the discussion concrete, we shall focus on the architecture introduced in Theorem~\ref{thm:LN}, where depth, width, and dataset size scale linearly with each other to produce a posterior that exhibits optimal feature learning given prior $\sigma^2=1$.

\subsubsection{Variance-Limited Scaling Laws}
We examine the scaling behavior of model error in the infinite-width or infinite-dataset limits. The work of~\cite{bahri2021explaining} shows that the difference between the finite-size loss and the infinite-size loss scales like $1/x$ for $x = N$ or $P$ while the other parameter is held fixed ($P$ or $N$, respectively). Here, we demonstrate an analogous scaling law when $N \propto P \propto L$. Similarly to the results of~\cite{bahri2021explaining}, this scaling law is independent of the choice of dataset and consequently provides a universal insight into how performance improves with larger models or more data. The proof is found in \S \ref{sec:scaling-pf}.

\begin{theorem}[Variance-Limited Scaling Law]\label{thm:scaling}
For each $N_0\geq 1$ consider training data $X_{N_0},Y_{N_0}$ satisfying \eqref{eq:Vstar-def}. Fix constants $\lpre > 0,\alpha>0, \, \sigma^2=1$. Suppose that $N_1=\cdots=N_L= N$, that $L=\lpre N$ and number of data points satisfies $P=N\alpha$. Then, as $N\gives \infty$,
\begin{align*}
    \Var_{\mathrm{post}}\left[f(x)\right] = \lim_{N\to\infty} \Var_{\mathrm{post}}\left[f(x)\right] +\frac{C}{N}+O\lr{\frac{\log N}{N^2}},%&\propto \frac{1}{N} \propto \frac{1}{P} \propto \frac{1}{L}
\end{align*}
where $C\in \R$ is a universal constant.
\end{theorem}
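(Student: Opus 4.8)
The plan is to start from the exact formula \eqref{eq:var-G} for the posterior variance,
\[
\Var_{\mathrm{post}}[f(x)] = 2M\norm{x_\perp}^2 \, \frac{G[1]}{G[0]},\qquad G[k]:=\G{L+1,0}{0,L+1}{\tfrac{\norm{\theta_{*,N_0}}^2}{4M}}{-}{\tfrac{P}{2},\tfrac{\mathbf N}{2}+k},
\]
and obtain a two-term expansion of the ratio $G[1]/G[0]$ in the regime $N_1=\cdots=N_L=N$, $L=\lpre N$, $P=\alpha N$, $\sigma^2=1$. By Theorem \ref{thm:Z-form} and the identities $4M=\prod_{\ell=0}^L 2\sigma^2/N_\ell = (2/N)^{L+1}$ (so $2M\norm{x_\perp}^2$ is an explicit, $x$-only-dependent prefactor with a clean $N$-dependence), the whole object is governed by the single scalar ratio of Meijer-G functions, exactly the quantity whose \emph{logarithm} is $\Delta(\log G)[k]$ studied in Theorem \ref{thm:logG}(b). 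So the first step is to recall that $\log(G[1]/G[0]) = \Delta(\log G)[1]$, and that \eqref{eq:DlogG-LN} gives this to leading order as $\lpre N\log(N/2) + \log\nu$ with an error $\tilde O(1/N)$. Combined with the prefactor $2M\norm{x_\perp}^2 = (2/N)^{L+1}\norm{x_\perp}^2$, the leading exponential factors $(2/N)^{L+1}$ and $e^{\lpre N\log(N/2)} = (N/2)^{\lpre N}$ must cancel up to a bounded factor; tracking that cancellation and the residual constant reproduces the already-known limit $\lim_{N\to\infty}\Var_{\mathrm{post}}[f(x)] = \nu\norm{x_\perp}^2/(N_0-P)\cdot(\text{finite})$ from Theorem \ref{thm:LN} (i.e. $\nu c\Sigma_\perp$ with $c=1$). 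The substance of this theorem, however, is the \emph{next} order term $C/N$, so the leading-order expansion alone is not enough.

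The main work, therefore, is to push the Laplace-method analysis of \S \ref{pf:logG} that underlies Theorem \ref{thm:logG}(b) to one more order: I need not just $\Delta(\log G)[k] = k[\lpre N\log(N/2)+\log\nu] + \tilde O(1/N)$, but the explicit $\tilde O(1/N)$ coefficient for $k=1$, i.e. an expansion of the form $\Delta(\log G)[1] = \lpre N\log(N/2) + \log\nu + a/N + O(\log^2 N/N^2)$ for an explicit constant $a$ (depending on $\lpre,\alpha,\nu$). This comes from the standard next-order correction in Laplace's method applied to the Mellin–Barnes contour integral representation of the Meijer-G function: after locating the saddle point (the $z_*$ of \eqref{eq:Psi-crit}, which for $\sigma^2=1$ in this regime degenerates to $z_*=0$ by Theorem \ref{thm:LN}), one expands the phase to fourth order and the amplitude to second order, and the subleading term is a rational function of the second, third, and fourth derivatives of the phase at the saddle, all of which are elementary functions of $N,\alpha,\lpre,\nu$. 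Exponentiating, $G[1]/G[0] = (N/2)^{\lpre N}\,\nu\,(1 + a/N + O(\log^2 N/N^2))$, and multiplying by the prefactor $(2/N)^{L+1}\norm{x_\perp}^2$ together with the $1/(N_0-P)$ and $\norm{x_\perp}^2$ bookkeeping (using $\norm{\theta_{*,N_0}}\to\norm{\theta_*}$ from \eqref{eq:Vstar-def} with its own $O(1/N)$ fluctuation contributing to $C$), yields $\Var_{\mathrm{post}}[f(x)] = V_\infty + C/N + O(\log N/N^2)$ with $C$ assembled from $a$, the $1/(N_0-P)$ expansion, and the $\norm{\theta_*}$ correction. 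Universality of $C$ — independence of the dataset beyond $\norm{\theta_*}$ — follows because every ingredient ($z_*$, the saddle-point derivatives, $\nu=\norm{\theta_*}^2/\alpha_0$) depends on the data only through $\norm{\theta_*}$ and the fixed ratios $\alpha_0,\alpha,\lpre$.

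The hard part will be the bookkeeping in the subleading Laplace correction: the Meijer-G function here has $L+1 = \lpre N + 1$ gamma-function factors in its Mellin–Barnes integrand, so "one more order" in the saddle-point expansion involves sums over these $\Theta(N)$ factors (via their Stirling expansions), and I must verify that these sums contribute at exactly order $1/N$ and not, say, a spurious $\log N$ or an $O(1)$ shift — this is precisely the kind of term that the $\tilde O(1/N)$ in \eqref{eq:DlogG-LN} hides and that must now be made explicit. Concretely, the digamma/trigamma sums $\sum_\ell \psi'(N/2 + O(1))$ and the third-derivative analogues need careful asymptotic evaluation with their error terms controlled uniformly. I expect that once the saddle point is confirmed to sit at $z_*=0$ (where several derivative terms simplify), these sums telescope or collapse to closed form, but confirming the absence of anomalous terms — and thereby that $C$ is genuinely a finite real constant rather than $N$-dependent — is the crux. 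Everything downstream (multiplying out the prefactors, collecting $C$) is routine algebra of the kind already carried out in \S \ref{sec:bayesfeature-pf}.
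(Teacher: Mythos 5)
Your proposal follows essentially the same route as the paper: its proof likewise starts from the exact variance formula of Theorem \ref{thm:Z-form} (the ratio $G[1]/G[0]$ times the prefactor) and extends the case (b) Laplace/saddle-point expansion of $\Delta(\log G)[1]$ from \S \ref{pf:logG} by one order, producing an explicit $c/N$ correction (with $c$ depending on $\lpre,\alpha,\nu$) that is then converted into the $C/N$ term of the posterior variance. The only minor slip is writing $4M=(2/N)^{L+1}$ --- the $\ell=0$ factor is $2\sigma^2/N_0$ with $N_0=P/\alpha_0$, not $2/N$ --- but this affects only the constant bookkeeping, not the structure of the argument.
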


\subsubsection{Double Descent}
We demonstrate double descent in $\alpha_0=P/N_0$ consistent with previous literature~\cite{zavatone2022contrasting}. As a concrete example, we shall consider a Gaussian data model and evaluate double descent for the posterior of optimal feature learning; this posterior is achieved by, for example, deep networks with $\sigma^2=1$ (Theorem \ref{thm:LN}), or finite-depth networks with data-tuned priors $\sigma_*^2$ (Theorem \ref{thm:finite-L}). The proof is found \S \ref{sec:dd-pf}.

\begin{theorem}[Double Descent in $\alpha_0$]\label{thm:dd}
Consider generative data model
\[
x_i\in \R^{N_0}\sim \mathcal N(0,\mathrm{I}),\; y_i = V_0x_i+\epsilon_i,\; \epsilon_i\sim \mathcal N(0,\sigma_\epsilon^2),\; \norm{V_0}^2=1
\]
and posterior distribution $\mathcal{N}\lr{\mu_*, \nu\Sigma_\perp}$. We have error
\[
\mathbb E_{x,X,\epsilon}\left[\left\langle f(x)-V_0x \right \rangle^2\right] = \begin{cases}
\frac{1}{\alpha_0} - \alpha_0 + \frac{1}{1-\alpha_0}\sigma_\epsilon^2,&\quad \alpha_0<1\\
\frac{1}{\alpha_0-1}\sigma_\epsilon^2,&\quad \alpha_0 \geq 1\end{cases},
\]
which diverges at $\alpha_0 = 1$.
\end{theorem}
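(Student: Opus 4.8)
\section*{Proof proposal for Theorem~\ref{thm:dd}}

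The plan is to treat the error as a bias--variance decomposition of the Gaussian predictive posterior $\mathcal N(\mu_*,\nu\Sigma_\perp)$ and then evaluate the two pieces by classical high--dimensional Gaussian/Wishart computations. Conditionally on the training data $(X_{N_0},\epsilon)$, the posterior law of $f(x)$ is $\mathcal N(\inprod{\theta_{*,N_0}}{x},\,\nu\Sigma_\perp)$, so I would write
\[
\mathbb E_{x,X,\epsilon}\!\left[\bk{f(x)-V_0x}^2\right]
= \underbrace{\mathbb E_{x,X,\epsilon}\!\left[\Var_{\mathrm{post}} f(x)\right]}_{\text{(I)}}
\;+\; \underbrace{\mathbb E_{x,X,\epsilon}\!\left[\big(\inprod{\theta_{*,N_0}}{x}-V_0x\big)^2\right]}_{\text{(II)}}.
\]
Since the test input $x\sim\mathcal N(0,I_{N_0})$ is independent of $(X_{N_0},\epsilon)$ and $\mathbb E_x[(w^Tx)^2]=\norm{w}^2$, term (II) collapses to $\mathbb E_{X,\epsilon}\norm{\theta_{*,N_0}-V_0^T}^2$, the mean squared error of the fit $\theta_{*,N_0}$; and because $\mathbb E_x[\norm{x_\perp}^2\mid X_{N_0}]=\dim\col(X_{N_0})^\perp=N_0-P$, term (I) collapses to $\mathbb E_{X,\epsilon}[\nu]$ (which in the limit is just the constant $\nu=\norm{\theta_*}^2/\alpha_0$). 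Also note that for $\alpha_0\ge1$ one has $\theta_\perp=0$, so (I) vanishes and the posterior is the point mass at $\theta_{*,N_0}$; this is why only the $\sigma_\epsilon^2$ term survives in that branch.

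For term (II) with $\alpha_0<1$, I would use the generative model $Y_{N_0}=V_0X_{N_0}+\epsilon$ to write $\theta_{*,N_0}=X_{N_0}(X_{N_0}^TX_{N_0})^{-1}Y_{N_0}^T=\Pi V_0^T+X_{N_0}(X_{N_0}^TX_{N_0})^{-1}\epsilon$, where $\Pi$ projects onto $\col(X_{N_0})$. Hence $\theta_{*,N_0}-V_0^T=-\Pi_\perp V_0^T+X_{N_0}(X_{N_0}^TX_{N_0})^{-1}\epsilon$, an orthogonal sum, so $\mathbb E\norm{\theta_{*,N_0}-V_0^T}^2=\mathbb E\norm{\Pi_\perp V_0^T}^2+\sigma_\epsilon^2\,\mathbb E\,\tr\big((X_{N_0}^TX_{N_0})^{-1}\big)$; the first term is $(N_0-P)/N_0\to1-\alpha_0$ (a fixed unit vector under a uniformly random $(N_0-P)$--plane), and the second is the first inverse--Wishart moment $\sigma_\epsilon^2 P/(N_0-P-1)\to\sigma_\epsilon^2\alpha_0/(1-\alpha_0)$. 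For $\alpha_0\ge1$, $\theta_{*,N_0}$ is instead the (unique) least--squares fit $V_0^T+(X_{N_0}X_{N_0}^T)^{-1}X_{N_0}\epsilon$, so the signal term disappears and (II) becomes $\sigma_\epsilon^2\,\mathbb E\,\tr\big((X_{N_0}X_{N_0}^T)^{-1}\big)=\sigma_\epsilon^2 N_0/(P-N_0-1)\to\sigma_\epsilon^2/(\alpha_0-1)$. The same decomposition, applied to $\norm{\theta_{*,N_0}}^2=\norm{\Pi V_0^T}^2+2\inprod{\Pi V_0^T}{X_{N_0}(X_{N_0}^TX_{N_0})^{-1}\epsilon}+\epsilon^T(X_{N_0}^TX_{N_0})^{-1}\epsilon$, gives after averaging the (mean--zero) cross term $\norm{\theta_*}^2\to\alpha_0+\sigma_\epsilon^2\alpha_0/(1-\alpha_0)$ and hence the value of $\nu$ entering (I); here one invokes that \eqref{eq:Vstar-def} holds for this Gaussian model — Marchenko--Pastur concentration of $\tr((X_{N_0}^TX_{N_0})^{-1})$ and of $\norm{\Pi V_0}^2$ — so that $\nu$ is a genuine deterministic limit and not merely a limit in distribution.

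Adding (I) and (II) and simplifying produces the two branches, and the blow--up at $\alpha_0=1$ falls straight out of the proof: it is precisely the statement that the governing Wishart matrix ($X_{N_0}^TX_{N_0}$ as $\alpha_0\uparrow1$, $X_{N_0}X_{N_0}^T$ as $\alpha_0\downarrow1$) has smallest eigenvalue $\to0$, so the inverse--trace term diverges like $1/\abs{1-\alpha_0}$ — the standard double--descent mechanism. I expect the only genuinely delicate point to be the bookkeeping in term (I): one must verify that, after averaging the per--test--point posterior variance $\nu\Sigma_\perp$ over $x$ and over the training randomness, the surviving $\alpha_0$--dependent coefficient assembles into exactly $\tfrac1{\alpha_0}-\alpha_0$ — i.e.\ that the normalization convention for $\Sigma_\perp$ and the limiting value of $\nu$ interact as claimed — whereas the projection identity and the inverse--Wishart moments are entirely standard.
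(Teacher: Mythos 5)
Your overall route is the same as the paper's: a bias--variance split, the projection identity plus inverse--Wishart moments for the bias, and the observation that the variance vanishes when $\alpha_0\geq 1$. Your term (II) and your $\alpha_0\geq 1$ branch agree with the paper's computation exactly. The genuine gap is the point you explicitly deferred: the bookkeeping in term (I) does not close, and with the numbers you state the theorem's formula does not come out. You (correctly for the min-norm interpolant, whose signal part is the projected vector $\Pi V_0^T$) get $\norm{\theta_{*,N_0}}^2\to\alpha_0\lr{1+\tfrac{\sigma_\epsilon^2}{1-\alpha_0}}$, hence $\nu\to 1+\tfrac{\sigma_\epsilon^2}{1-\alpha_0}$, and your term (I) is $\mathbb E_x[\nu\Sigma_\perp]=\nu$. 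Adding this to (II) $=1-\alpha_0+\tfrac{\alpha_0\sigma_\epsilon^2}{1-\alpha_0}$ gives $2-\alpha_0+\tfrac{(1+\alpha_0)\sigma_\epsilon^2}{1-\alpha_0}$, which is not the claimed $\tfrac{1}{\alpha_0}-\alpha_0+\tfrac{\sigma_\epsilon^2}{1-\alpha_0}$; for the claimed total the variance contribution must come out to $\tfrac{1-\alpha_0}{\alpha_0}+\sigma_\epsilon^2$.

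The paper's proof reaches that value through two choices that differ from yours. First, the per-test-point posterior variance it uses is $\tfrac{\norm{x_\perp}^2\norm{\theta_*}^2}{P}$ --- this is \eqref{eq:post-form} with $\norm{\theta_\perp}^2=\norm{\theta_*}^2(1-\alpha_0)/\alpha_0$, and it is also what the proof of Theorem \ref{thm:bayesfeature} produces --- so after averaging over $x\sim\mathcal N(0,\mathrm{I})$ the variance term is $\tfrac{1-\alpha_0}{\alpha_0}\,\mathbb E\norm{\theta_*}^2$, carrying an extra factor of $(1-\alpha_0)$ relative to your reading of $\nu\Sigma_\perp$ with $\Sigma_\perp=\norm{x_\perp}^2/(N_0-P)$. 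Second, inside that variance the paper writes $\theta_*=V_0+\epsilon(X^TX)^{-1}X^T$ and therefore evaluates the signal part of $\norm{\theta_*}^2$ as $\norm{V_0}^2=1$ rather than $\norm{\Pi V_0}^2\to\alpha_0$; this unprojected normalization is exactly what produces the $\tfrac{1}{\alpha_0}-1$ piece of the stated error. So, as written, your proposal proves a different formula: to recover Theorem \ref{thm:dd} you must adopt the paper's variance convention $\Var_{\mathrm{post}}[f(x)]=\norm{x_\perp}^2\norm{\theta_*}^2/P$ together with its evaluation of $\norm{\theta_*}^2$ in that term, i.e., resolve the normalization question you flagged rather than assert that it "assembles into exactly $\tfrac{1}{\alpha_0}-\alpha_0$."
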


\section{Discussion}
Neural networks are non-linear functions of their parameters, making an analytic understanding of their properties difficult. Here, we adopted the simplification of studying linear neural networks, which remain non-linear in their parameters but are more tractable. To conclude, we emphasize three limitations of our work. First, we consider only \textit{linear networks}, which are linear as a function of their inputs. However, they are not linear as a function of their parameters, making learning by Bayesian inference non-trivial. Second, we study learning by Bayesian inference and leave extensions to learning by gradient descent to future work. Finally, our results characterize the \textit{predictive posterior}, i.e., the distribution over model predictions after inference. It would be interesting to derive the form of the posterior over network weights as well.

\subsection{Acknowledgements} This work started at the 2022 Summer School on the Statistical Physics of Machine Learning held at \'Ecole de Physique des Houches. We are grateful for the wonderful atmosphere at the school and would like to express our appreciation to the session organizers Florent Krzakala and Lenka Zdeborov\'a as well as to Haim Sompolinsky for his series of lectures on Bayesian analysis of deep linear networks. We further thank Edward George for pointing out the connection between our work and the deep Gaussian process literature. Finally, we thank Matias Cattaneo, Isaac Chuang, David Dunson, Jianqing Fan, Aram Harrow, Jason Klusowski, Cengiz Pehlevan, Veronika Rockova, and Jacob Zavatone-Veth for their feedback and suggestions. BH is supported by NSF grants DMS-2143754, DMS-1855684, and DMS-2133806. AZ is supported by the Hertz Foundation, and by the DoD NDSEG. We are also thank two anonymous reviewers for improving aspects of the exposition and for pointing out a range of typos in the original manuscript.

\subsection{Background}\label{sec:back}
In this section, we introduce background and results needed for our proofs. Specifically, \S \ref{sec:gammafn} recalls basic definitions and asymptotic expansions for Gamma and Digamma functions (\S \ref{sec:gammafn}), \S \ref{sec:gammarv} recalls the moments of Gamma random variables, and \S \ref{sec:G-def} defines Meijer-G functions and introduces their basic properties.

\subsubsection{Gamma and Digamma Functions}\label{sec:gammafn} We will need the following asymptotic expansions for Euler's Gamma function 
\[
\Gamma(z) = \int_0^\infty t^{z-1}e^{-t}dt,\qquad \R(z) >0
\]
and Digamma function $\phi^{(1)}(z)=\frac{d}{dz}\log \Gamma(z)$ (see Equations 6.1.7, 6.3.13 in \cite{abramowitz1964handbook}):
\begin{proposition}\label{prop:gamma-exp}
We have the following analytic asymptotic expansion for the Gamma function
\begin{equation}\label{eq:Gamma-exp}
\log \Gamma\lr{z} \sim \lr{z-\frac{1}{2}}\log(z) - z + \text{const}+O\lr{\abs{z}^{-1}},\quad \text{as }\abs{z}\gives \infty.
\end{equation}
In particular, we also have
\begin{equation}\label{eq:diGamma-exp}
\phi^{(1)}(z)=\frac{d}{dz}\log \Gamma\lr{z} \sim \log(z) + O(\abs{z}^{-1}),\quad \text{as }\abs{z}\gives \infty.
\end{equation}
Both expansions hold uniformly on sets of the form $\abs{z}< \pi - \delta$ for a fixed $\delta >0$.
\end{proposition}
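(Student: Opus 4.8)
The plan is to recognize \eqref{eq:Gamma-exp} and \eqref{eq:diGamma-exp} as the classical Stirling and Binet asymptotics --- they appear as \cite[Eqs.~6.1.7 and 6.3.13]{abramowitz1964handbook} --- and, for self-containedness, to derive them as follows. First I would prove \eqref{eq:Gamma-exp} on the right half-plane $\Re(z)>0$. Substituting $t=zu$ in $\Gamma(z)=\int_0^\infty t^{z-1}e^{-t}\,dt$ gives
\[
\Gamma(z)=z^{z}\int_0^\infty e^{-z(u-\log u)}\,\frac{du}{u},
\]
where the phase $u\mapsto u-\log u$ has a unique nondegenerate minimum at $u=1$, with value $1$ and second derivative $1$. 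For real $z$ this is the textbook Laplace integral and yields $\Gamma(z)=\sqrt{2\pi}\,z^{z-1/2}e^{-z}(1+O(1/z))$; for complex $z$ I would deform the $u$-contour onto the steepest-descent path through $u=1$ and bound the contribution away from $u=1$, obtaining the same asymptotics with a remainder $O(\abs{z}^{-1})$ uniformly for $\abs{\arg(z)}\le\pi/2-\delta$. Taking the principal branch of the logarithm identifies the additive constant as $\tfrac12\log(2\pi)$. An equivalent, and perhaps cleaner, route establishes the estimate for $\Re(z)\ge\eps$ via Binet's integral formula for $\log\Gamma$, whose remainder integral is manifestly $O(\abs{z}^{-1})$ there.

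Next I would extend to the full Stirling sector $\abs{\arg(z)}\le\pi-\delta$. One option is to redo the saddle-point analysis along a ray rotated so that $\Re\big(z(u-\log u)\big)\to+\infty$ along the tails of the contour. A complementary device, convenient when $\Re(z)\le\tfrac12$, is the reflection formula $\Gamma(z)\Gamma(1-z)=\pi/\sin(\pi z)$: for such $z$ in the sector one has $\Re(1-z)\ge\tfrac12$, $\abs{1-z}\asymp\abs{z}$ and $\abs{\Im(z)}\asymp\abs{z}$, so $\log\sin(\pi z)=\mp i\pi z+O(1)+O(e^{-c\abs{z}})$ while $\log(1-z)=\log(z)\pm i\pi+O(\abs{z}^{-1})$; the $\pm i\pi z$ contributions then cancel, and feeding in the right-half-plane expansion for $\log\Gamma(1-z)$ produces an expansion of the form $(z-\tfrac12)\log(z)-z+c+O(\abs{z}^{-1})$ whose constant $c$ must equal $\tfrac12\log(2\pi)$ by agreement with the right-half-plane case on the overlap $\abs{\arg(z)}<\pi/2$. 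Either way one obtains \eqref{eq:Gamma-exp} uniformly on the sector.

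For \eqref{eq:diGamma-exp} I would differentiate \eqref{eq:Gamma-exp}. Since $\log\Gamma$ is holomorphic on the sector and \eqref{eq:Gamma-exp} holds with $O(\abs{z}^{-1})$ remainder uniformly on the slightly wider sector $\abs{\arg(z)}\le\pi-\delta/2$, a Cauchy estimate for the derivative on a circle of radius $\asymp\abs{z}$ about $z$ legitimizes differentiating the expansion term by term:
\[
\phi^{(1)}(z)=\frac{d}{dz}\!\left[\lr{z-\tfrac12}\log(z)-z\right]+O(\abs{z}^{-1})=\log(z)+O(\abs{z}^{-1}),
\]
uniformly on $\abs{\arg(z)}\le\pi-\delta$; alternatively one differentiates Binet's formula directly.

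The hard part will be the uniformity of the remainder across the entire sector $\abs{\arg(z)}\le\pi-\delta$ rather than just a half-plane --- handled either by the rotated-contour steepest-descent estimate or by the reflection-formula transfer above, the latter also requiring the standard (but fiddly) branch-of-logarithm bookkeeping. Everything else --- the Laplace evaluation at $u=1$, pinning down the constant $\tfrac12\log(2\pi)$, and the term-by-term differentiation --- is routine, and one may in any case simply cite \cite{abramowitz1964handbook}.
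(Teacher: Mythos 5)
Your proposal is correct, but it is worth noting that the paper does not actually prove this proposition: it is stated as classical background and justified solely by citation to Equations 6.1.7 and 6.3.13 of \cite{abramowitz1964handbook} (Stirling's series for $\log\Gamma$ and the digamma expansion), which is also the fallback you mention at the end. What you add is a self-contained derivation, and your route is the standard one: Laplace/steepest descent on $\Gamma(z)=z^z\int_0^\infty e^{-z(u-\log u)}\,du/u$ (or Binet's integral) for the right half-plane, transfer to the rest of the Stirling sector via the reflection formula $\Gamma(z)\Gamma(1-z)=\pi/\sin(\pi z)$, identification of the constant $\tfrac12\log(2\pi)$ by matching on the overlap, and term-by-term differentiation justified by a Cauchy estimate on a slightly wider sector (which in fact yields an $O(|z|^{-2})$ bound on the derivative of the remainder, more than enough for \eqref{eq:diGamma-exp}). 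The reflection-formula bookkeeping you flag does work out: for $\Re(z)\le\tfrac12$ in the sector one has $|\Im z|\gtrsim_\delta|z|$, the $\pm i\pi z$ terms cancel as you claim, and the seemingly dangerous term $(\tfrac12-z)\log(1-1/z)=1+O(|z|^{-1})$ only shifts the constant. One small point you silently (and correctly) repaired: the paper's uniformity statement ``$\abs{z}<\pi-\delta$'' is a typo for the sector condition $\abs{\arg(z)}\le\pi-\delta$, which is the regime your argument establishes and the one actually used later in the paper (e.g.\ in the proof of Theorem \ref{thm:logG}, where the same typo recurs). So relative to the paper your proof is strictly more detailed; relative to the literature it is the classical argument, and citing \cite{abramowitz1964handbook} as the paper does is equally acceptable.
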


\subsubsection{Gamma Random Variables}\label{sec:gammarv}
We will need the following well-known exact formulas for the moments of Gamma random variables, which follow directly for the formula for their density:
\[
\mathrm{Den}_{X}(x)=\begin{cases}\frac{x^{k-1}}{\Gamma(k)\theta^{k}}e^{-x/\theta},&\quad x>0\\
0,&\quad x \leq 0\end{cases},\qquad X\sim \Gamma(k,\theta), \quad k,\theta>0
\]
and the definition of the Gamma function.
\begin{proposition}
Let $k,\theta >0$ and suppose $\phi \sim \Gamma(k,\theta)$. Then for any $t\in \R$ we have
\begin{equation}\label{E:gamma-moments}
\E{\phi^t} = \theta^t\frac{\Gamma\lr{k+t}}{\Gamma\lr{k}}.    
\end{equation}
In particular
\begin{equation}\label{E:gamma-MGF}
\E{e^{-it \log\phi}} = \exp\left[-it\log \theta + \log \Gamma\lr{k-it}-\log \Gamma(k)\right]     
\end{equation}
is a meromorphic function of $t$ with poles on the negative imaginary axis:
\[
t = -i(\nu +k),\quad \nu = 0,1,2,\ldots.
\]
\end{proposition}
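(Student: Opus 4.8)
The plan is to obtain \eqref{E:gamma-moments} by a direct computation from the density and then derive \eqref{E:gamma-MGF} from it by analytic continuation, after which the meromorphy statement is read off from the analytic structure of $\Gamma$. First I would substitute the Gamma density $\mathrm{Den}_\phi(x) = x^{k-1}e^{-x/\theta}/(\Gamma(k)\theta^k)$ for $x>0$ into $\E{\phi^t}=\int_0^\infty x^t\,\mathrm{Den}_\phi(x)\,dx$ to get
\[
\E{\phi^t} = \frac{1}{\Gamma(k)\theta^k}\int_0^\infty x^{k+t-1}e^{-x/\theta}\,dx,
\]
and then the change of variables $u=x/\theta$ turns the integral into $\theta^{k+t}\int_0^\infty u^{k+t-1}e^{-u}\,du = \theta^{k+t}\Gamma(k+t)$, which converges precisely when $k+t>0$. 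Dividing out gives \eqref{E:gamma-moments}. (As literally written, \eqref{E:gamma-moments} needs $t>-k$; this is harmless for the application below, where only the continuation onto the imaginary axis is used.)

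For \eqref{E:gamma-MGF}, I would first note that for real $t$ the integrand $e^{-it\log\phi}=\phi^{-it}$ has modulus $1$, so $\E{e^{-it\log\phi}}$ is automatically finite. The key observation is that $s\mapsto \E{\phi^s}$ is holomorphic on the half-plane $\set{s\in\C:\Re(s)>-k}$: this follows either by differentiating under the integral sign together with Morera's theorem, or simply because the closed-form expression $\theta^s\Gamma(k+s)/\Gamma(k)$ is holomorphic there, since $\Gamma(k+s)$ is holomorphic whenever $\Re(k+s)>0$. Since the two sides of \eqref{E:gamma-moments} agree for real $t>-k$, the identity theorem forces agreement throughout the half-plane, and specializing to $s=-it$ with $t\in\R$ (where $\Re(s)=0>-k$) yields
\[
\E{e^{-it\log\phi}} = \theta^{-it}\,\frac{\Gamma(k-it)}{\Gamma(k)} = \exp\left[-it\log\theta + \log\Gamma(k-it) - \log\Gamma(k)\right].
\]

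Finally, the meromorphy statement is immediate: $\Gamma$ is meromorphic on $\C$ with simple poles exactly at the non-positive integers, while $\theta^{-it}=e^{-it\log\theta}$ is entire, so $t\mapsto \theta^{-it}\Gamma(k-it)/\Gamma(k)$ is meromorphic in $t\in\C$, with poles precisely where $k-it\in\set{0,-1,-2,\dots}$, i.e.\ $t=-i(k+\nu)$ for $\nu=0,1,2,\dots$, all lying on the negative imaginary axis. There is no substantive obstacle in this argument; the only point requiring care is to phrase the passage from \eqref{E:gamma-moments} to \eqref{E:gamma-MGF} as analytic continuation rather than literal substitution, since the moment integral itself diverges for $t$ with large negative real part.
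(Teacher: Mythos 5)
Your proof is correct and follows essentially the same route the paper intends: \eqref{E:gamma-moments} (valid for $t>-k$, as you rightly note) comes from plugging the Gamma density into the moment integral and rescaling, and \eqref{E:gamma-MGF} with its poles at $t=-i(\nu+k)$ is read off from the analytic structure of $\Gamma$. The only remark worth making is that your analytic-continuation detour is optional: since $\left|\phi^{-it}\right|=1$, the substitution $t\mapsto -it$ can be made literally in the integral, which converges absolutely and equals $\theta^{-it}\Gamma(k-it)/\Gamma(k)$ because the defining integral of $\Gamma(z)$ already converges for complex $z$ with $\Re(z)=k>0$.
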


\subsubsection{Meijer G-Functions}\label{sec:G-def}
The Meijer-G function is defined as the contour integral
\begin{equation}\label{eq:G-def}
\G{p,q}{m,n}{z}{{\bf a}}{{\bf b}} = \frac{1}{2\pi i}\int_{\mathcal C} z^s \chi(s)ds,    
\end{equation}
where
\[
\chi(s):=\frac{\prod_{j=1}^m \Gamma\lr{b_j-s}\prod_{k=1}^n\Gamma\lr{1-a_k+s} }{\prod_{j=m+1}^q \Gamma\lr{1-b_j+s}\prod_{k=n+1}^p \Gamma\lr{a_k+s}}
\]
and $\mathcal C$ is a Mellin-Barnes contour in the complex plane that separates the poles of $\Gamma(b_j-s)$ from those of $\Gamma(1-a_k+s)$ (see Figure \ref{fig:G-contour}). 
\begin{figure}
    \centering
    \includegraphics[scale=.8]{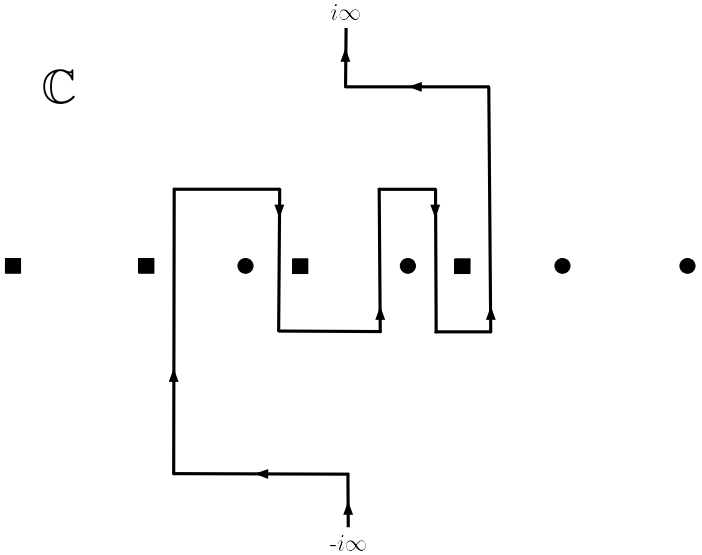}
    \caption{This article concerns only the case when $a_k,b_j$ are real. The poles of $\Gamma(b_j-s)$ (black dots) extend to infinity along the positive real axis when $b_j$ are real. Similarly, the poles of $1-a_k+s$ (black squares) extend to infinity along the negative real axis when $b_j$ are real. The contour of integration (solid black line)  in \eqref{eq:G-def} starts at $-\infty$, keeps the poles of $\Gamma(b_j-s)$ to the right, keeps the poles of $\Gamma(1-a_k+s)$ to the left, and ends at $+i\infty$.}
    \label{fig:G-contour}
\end{figure}

We will need several properties of Meijer-G functions, which we now recall.
\begin{proposition}
For $\eta,\omega>0$, We have
\begin{align}
\label{E:G-cancel}  & \G{m,n}{p,q}{x}{\alpha,{\bf a}}{{\bf b},\alpha} =\G{m,n-1}{p-1,q-1}{x}{{\bf a}}{{\bf b}}  \\
  \label{E:G-weighted-Laplace}  &\int_0^\infty e^{-\w x} x^{-\alpha} \G{m,n}{p,q}{\eta x}{{\bf a}}{{\bf b}} dx=\w^{\alpha-1} \G{m,n+1}{p+1,q}{\frac{\eta}{\w}}{\alpha, {\bf a}}{{\bf b}} \\
  \label{E:G-inversion}&\G{m,n}{p,q}{z}{{\bf a}}{{\bf b}} =\G{n,m}{q,p}{z^{-1}}{1-{\bf b}}{1-{\bf a}}\\
  \label{E:G-polyprod}&z^\rho \G{m,n}{p,q}{z}{{\bf a}}{{\bf b}} =\G{m,n}{p,q}{z}{\rho+{\bf a}}{\rho+{\bf b}}\\
  \label{E:G-BesselJ} &\G{1,0}{0,2}{\frac{x^2}{4}}{-}{\frac{\nu}{2}, -\frac{\nu}{2}}= J_{\nu}(x)\\
  \label{E:G-deriv}
  &z^h \frac{d^h}{dz^h}\G{m,n}{p,q}{z}{{\bf a}}{{\bf b}} = \G{m,n+1}{p+1,q+1}{z}{{0, \bf a}}{{\mathbf{b}, h}}\\
  \label{E:G-conv}
  &\int_0^\infty \G{m,n}{p,q}{\eta x}{\mathbf a}{\mathbf b} \G{\mu, \nu}{\sigma, \tau}{\omega x}{\mathbf c}{\mathbf d} dx = \frac{1}{\eta}\G{n+\mu,m+\nu}{q+\sigma,p+\tau}{\frac{\omega}{\eta}}{-b_1,\dots,-b_m, \mathbf c, -b_{m+1},\dots,-b_q}{-a_1,\dots,-a_n, \mathbf d, -a_{n+1},\dots,-a_p}\\
 \label{E:G-series} &\int_0^\infty \tau^{\alpha-1}\G{s,t}{u,v}{\sigma+\tau}{c_1,\ldots,c_u}{d_1,\ldots, d_v} \G{m,n}{p,q}{\omega \tau}{a_1,\ldots, a_p}{b_1,\ldots, b_q}d\tau  \\
 \notag &= \sum_{k=0}^\infty \frac{\lr{-\sigma}^k}{k!} \G{m+t,n+s+1}{p+v+1,q+u+1}{\omega }{1-\alpha,a_1,\ldots, a_n, k-\alpha-d_1+1,\ldots, k-\alpha-d_v+1,a_{n+1},\ldots, a_p}{b_1,\ldots, b_m, k-\alpha-c_1+1,\ldots, k-\alpha-c_u+1, k-\alpha+1, b_{m+1},\ldots, b_q}
%\label{E:G-series}  \G{m,n}{q,p}{z}{{\bf a}}{{\bf b}}&=\sum_{h=1}^m \frac{\prod_{j=1}^{*m} \Gamma\lr{b_j-b_h} \prod_{j=1}^n \Gamma\lr{1+b_h-a_j}}{\prod_{j=m+1}^q \Gamma\lr{1+b_h-b_j}\prod_{j=n+1}^{*p} \Gamma\lr{a_j-b_h}}z^{b_h}.
\end{align}
For \eqref{E:G-series} we require
\[
\abs{\arg(\sigma)}, \abs{\arg(\omega)}<\pi
\]
and
\[
-\min\set{\Re(b_1),\ldots, \Re(b_m)}
< \Re(\alpha)<2 - \max\set{\Re(a_1),\ldots, \Re(a_n)}- \max\set{\Re(c_1),\ldots, \Re(c_t)}.
\]
Finally, fix $L\geq 1$ and 
\[
k_\ell, \theta_\ell >0,\qquad \ell=1,\ldots, L
\]
and let 
\[
\phi_\ell \sim \Gamma\lr{k_\ell,\theta_\ell}  \quad \text{independent}.
\]
We have for every $y>0$
\begin{align}
    \label{eq:G-gamma} \prod_{\ell=1}^L \Gamma\lr{k_\ell}^{-1 } \frac{1}{A}\G{L,0}{0,L}{\frac{y}{A}}{-}{k_1-1,\ldots, k_L-1} = \mathrm{Den}_{\prod_{\ell=1}^L\phi_\ell}(y),\qquad A:=\prod_{\ell=1}^L \theta_\ell.
\end{align}
\end{proposition}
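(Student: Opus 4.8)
\emph{Proof proposal.} The identity \eqref{eq:G-gamma} asserts that the density of a product of independent Gamma random variables is, up to the indicated normalization, a Meijer-$G$ function. The natural tool is the Mellin transform, since multiplicativity under Mellin transform is shared by products of independent positive random variables and by the Mellin--Barnes representation of the Meijer-$G$ function. First I would record the Mellin transform of a single Gamma density: by \eqref{E:gamma-moments} with $t=s-1$, for $\phi_\ell\sim\Gamma(k_\ell,\theta_\ell)$ and $\Re(s)>1-k_\ell$,
\[
\int_0^\infty y^{s-1}\,\mathrm{Den}_{\phi_\ell}(y)\,dy \;=\; \E{\phi_\ell^{\,s-1}} \;=\; \theta_\ell^{\,s-1}\,\frac{\Gamma(k_\ell+s-1)}{\Gamma(k_\ell)}.
\]
Since the $\phi_\ell$ are independent, the moments of the product factor, so writing $\Phi=\prod_{\ell=1}^L\phi_\ell$ and $A=\prod_{\ell=1}^L\theta_\ell$, for $\Re(s)>1-\min_\ell k_\ell$ one gets
\[
\int_0^\infty y^{s-1}\,\mathrm{Den}_{\Phi}(y)\,dy \;=\; \prod_{\ell=1}^L \E{\phi_\ell^{\,s-1}} \;=\; A^{\,s-1}\Bigl(\prod_{\ell=1}^L\Gamma(k_\ell)\Bigr)^{-1}\prod_{\ell=1}^L\Gamma(k_\ell+s-1).
\]
Here the only subtlety is to note that the Mellin transform of $\mathrm{Den}_\Phi$ is its moment function and that taking moments commutes with the product, which is immediate from independence and Tonelli.

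Next I would invert. Mellin inversion along a vertical line $\Re(s)=c$ with $c>1-\min_\ell k_\ell$ gives, for $y>0$,
\[
\mathrm{Den}_{\Phi}(y) \;=\; \frac{1}{A}\Bigl(\prod_{\ell=1}^L\Gamma(k_\ell)\Bigr)^{-1}\cdot\frac{1}{2\pi i}\int_{\Re(s)=c}(y/A)^{-s}\prod_{\ell=1}^L\Gamma(k_\ell+s-1)\,ds,
\]
and the substitution $s\mapsto -s$ (orientation preserved) turns the inner integral into $\frac{1}{2\pi i}\int_{\Re(s)=-c}(y/A)^{s}\prod_{\ell=1}^L\Gamma\bigl((k_\ell-1)-s\bigr)\,ds$. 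Comparing with the definition \eqref{eq:G-def} in the case $m=q=L$, $n=p=0$, where $\chi(s)=\prod_{\ell=1}^L\Gamma(b_\ell-s)$ with $b_\ell=k_\ell-1$ and argument $z=y/A$, this is exactly $\G{L,0}{0,L}{y/A}{-}{k_1-1,\ldots,k_L-1}$, provided the line $\Re(s)=-c$ keeps all poles $s=b_\ell+\nu$ of $\chi$ to its right (there are no numerator poles, since $n=0$). Since $-c<\min_\ell k_\ell-1=\min_\ell b_\ell$, this is precisely an admissible Mellin--Barnes contour, so the identification holds and \eqref{eq:G-gamma} follows.

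The main point requiring attention is the analytic bookkeeping in the inversion step: by Stirling's formula for the Gamma function on vertical lines, $\prod_{\ell=1}^L\Gamma(k_\ell+s-1)$ decays like $e^{-L\pi|\Im s|/2}$, so the Mellin--Barnes integral converges absolutely and defines a continuous function of $y>0$; and one must know that Mellin inversion recovers $\mathrm{Den}_\Phi$ itself rather than merely a function with the same transform, which follows from the uniqueness theorem for Mellin transforms together with continuity of $\mathrm{Den}_\Phi$ on $(0,\infty)$. An alternative route, avoiding Mellin inversion entirely, is induction on $L$: the base case $L=1$ is the elementary evaluation $\G{1,0}{0,1}{z}{-}{b}=z^b e^{-z}$ (close the contour to the right and sum residues of $\Gamma(b-s)$), and the inductive step writes $\mathrm{Den}_{\prod_{\ell\le L}\phi_\ell}$ as the Mellin convolution of $\mathrm{Den}_{\phi_L}$ with $\mathrm{Den}_{\prod_{\ell<L}\phi_\ell}$ and evaluates it by combining the reflection formula \eqref{E:G-inversion} with the Laplace-type identity \eqref{E:G-weighted-Laplace}, with parameter matching $\alpha=2-k_L$, $\omega=1/\theta_L$, $\eta=\bigl(\prod_{\ell<L}\theta_\ell\bigr)/y$; this is purely formal once the change of variables is carried out and the side conditions on \eqref{E:G-weighted-Laplace} are checked (they hold for $k_\ell>0$, or by analytic continuation in the $k_\ell$).
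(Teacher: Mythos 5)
Your Mellin-transform derivation of \eqref{eq:G-gamma} is correct, and it is worth noting that the paper itself offers no proof of this Proposition at all: these identities are simply recalled as known facts (the bulk are classical table identities for Meijer-$G$ functions, and \eqref{eq:G-gamma} is the standard Springer--Thompson-type result on products of Gamma variables). So your argument is not an alternative to a proof in the paper but a self-contained justification of the one identity the paper's later arguments lean on most (it is exactly what powers Lemma \ref{lem:contour}). The key steps all check out: $\E{\phi_\ell^{s-1}}=\theta_\ell^{s-1}\Gamma(k_\ell+s-1)/\Gamma(k_\ell)$ for $\Re(s)>1-k_\ell$ via \eqref{E:gamma-moments}; multiplicativity of the Mellin transform under independence; Mellin inversion on a line $\Re(s)=c>1-\min_\ell k_\ell$ justified by the $e^{-L\pi|\Im s|/2}$ decay of the Gamma product and continuity of the density; and the substitution $s\mapsto-s$ (the orientation bookkeeping is fine, since $ds=-du$ cancels the reversal of endpoints) lands exactly on the defining contour integral \eqref{eq:G-def} with $m=q=L$, $n=p=0$, $b_\ell=k_\ell-1$, $z=y/A$, the line $\Re(s)=-c<\min_\ell b_\ell$ being an admissible Mellin--Barnes contour because there are no numerator poles to separate from. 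Two small remarks: your proposal addresses only \eqref{eq:G-gamma} and leaves the remaining eight identities (e.g.\ \eqref{E:G-weighted-Laplace}, \eqref{E:G-inversion}, \eqref{E:G-series}) unproved, though these are standard and equally unproved in the paper; and in your sketched inductive alternative the parameter matching in the step using \eqref{E:G-weighted-Laplace} is left at the formal level, so if you wanted that route to stand on its own you would need to carry out the change of variables and verify the convergence conditions explicitly --- as written, the Mellin-inversion route is the complete one.
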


\subsection{Setup}\label{sec:setup}
As described in the main text, rather than working with the posterior distribution
\begin{align}
\label{eq:post-def-si}d\Papost\lr{\theta~|~L,N_\ell, \sigma^2, X_{N_0}, Y_{N_0}}:= \lim_{\beta\gives \infty}\frac{d\Paprior\lr{\theta~|~N_0,L,N_\ell, \sigma^2}\exp\left[-\beta\mathcal L(\theta~|~X_{N_0},Y_{N_0})\right]}{Z_\beta\lr{X_{N_0},Y_{N_0}~|~L,N_\ell, \sigma^2}},
\end{align}
we work with the characteristic function of the posterior:
\begin{equation}\label{eq:char-part-si}
\mathbb E_{\mathrm{post}}\left[\exp\left\{-i{\bf t}\cdot \theta\right\}\right]=\frac{Z_\infty({\bf t}~|~L,N_\ell, \sigma^2,X_{N_0},Y_{N_0})}{Z_\infty({\bf 0}~|~L,N_\ell, \sigma^2,X_{N_0},Y_{N_0})} ,\qquad {\bf t} = \lr{t_1,\ldots, t_{N_0}}\in \R^{N_0}.
\end{equation}
Here, $\mathbb E_{\mathrm{post}}[\cdot]$ is the expectation with respect to the posterior \eqref{eq:post-def-si}. We have defined for $\beta>0$ the partition function $Z_\beta({\bf t})=Z_\beta( {\bf t}~|~L,N_\ell, \sigma^2,X_{N_0},Y_{N_0})$ by 
\begin{align}\label{eq:part-def-si}
 Z_\beta({\bf t})&:= A_\beta \int  \exp\left[-\sum_{\ell=1}^{L+1}\frac{N_{\ell-1}}{2\sigma^2}\big|\big|W^{(\ell)}\big|\big|_F^2 -\frac{\beta}{2}\big|\big|Y-\prod_{\ell=1}^{L+1} W^{(\ell)}X\big|\big|_2^2 -i\theta\cdot {\bf t} \right]\prod_{\ell=1}^{L+1}dW^{(\ell)}
\end{align}
and set
\[
Z_\infty( {\bf t}~|~L,N_\ell, \sigma^2,X_{N_0},Y_{N_0}):= \lim_{\beta\gives \infty}Z_\beta( {\bf t}~|~L,N_\ell, \sigma^2,X_{N_0},Y_{N_0}),
\]
where 
\[
A_{\beta} =\det(X_{N_0}^TX_{N_0})^{1/2}\lr{2\pi \beta}^{P/2}
\]
and $Y_{N_0,\perp}$ is the projection of $Y_{N_0}$ onto the orthogonal complement to the row span of $X_{N_0}$. The denominator $Z_\infty({\bf 0})$ is often called the Bayesian model evidence and represents the probability of the data $(X_{N_0},Y_{N_0})$ given the model (i.e. depth $L$, layer widths $N_1,\ldots, N_L$ and prior scale $\sigma^2$). Note that the constant $A_\beta$ cancels in the ratio \eqref{eq:post-def-si} and in any computations involving maximizing ratios of model evidence. 

The effective depth of the prior is measured by $L/N$; more precisely, by
\begin{equation}
\lpre =\lpre(N_1,\ldots, N_L) =\text{ effective depth of prior }:= \sum_{\ell=1}^L \frac{1}{N_\ell}.
\end{equation}
Beyond the justification of effective rank provided in the main text, we offer related intuition here to justify calling $\lpre$ the effective depth. Aside from the simple multiplicative dependence on $\sigma^2$, the norm of $\theta$ under the prior depends at large $N,L$ only on $\lpre$. For instance, a simple computation (Equation (9) in \cite{hanin2020products}) shows that with $\sigma^2 =1 $
\[
\lim_{\substack{N_1+\cdots+N_L\gives \infty \\ \lpre(N_1,\ldots, N_L)\gives \lambda}} \log \norm{\theta}^2 = \mN\lr{-\frac{\lambda}{2},\lambda},
\]
where the convergence is in distribution. Due to the rotational invariance $\theta$, we thus see that with $\sigma^2 = 1$, it is $\lpre$, as opposed to $L$, that gives a full description of the prior. Moreover, taking $\lpre\gives 0$ (even if $L\gives \infty$) gives the same (Gaussian) prior over predictions $\theta^Tx$ as one would obtain by simply starting with $L=0$.

As argued in the main text by the Poincare-Borel lemma, the posterior in the large-data limit will always have the form of a normal distribution. We formalize this statement here, taking $\theta_{*,N_0}$ to be the minimum-norm interpolant
\begin{equation}
\theta_{*,N_0}:=\argmin_{\theta\in \R^{N_0}}\norm{\theta}_2\quad \text{s.t.} \quad \theta^T X_{N_0} = Y_{N_0}.
\end{equation}

\begin{lemma}[Asymptotic Normality of Posterior]
\label{lem:normal}
For each $N_0$, consider a collection of $k\geq 1$ test points
\[
{\bf x}_{N_0} = \lr{x_{j;N_0},\,j=1,\ldots,k}.
\]
Suppose that 
\begin{itemize}
    \item For each $\alpha_0\in (0,1)$ there exists a vector $\mu_{*}=\lr{\mu_{*,j},\, j=1,\ldots,  k}\in \R^k$ such that
    \begin{equation}\label{eq:mu-def}
    \lim_{\substack{P,N_0\gives \infty \\ P/N_0\gives \alpha_0\in (0,1)}}\inprod{\theta_{*,N_0}}{x_{j,N_0}} =  \mu_{*,j}\qquad \text{ almost surely}.    
    \end{equation}
    \item For each $\alpha_0\in (0,1)$ there exists a positive semi-definite $k\times k$ matrix $\Sigma_\perp$ such that
    \begin{equation}\label{eq:Sig-def}
        \lim_{\substack{P,N_0\gives \infty \\ P/N_0\gives \alpha_0\in (0,1)}} \lr{\frac{1}{N_0-P}\inprod{x_{i,N_0}^\perp}{x_{j,N_0}^\perp}}_{1\leq i,j\leq k} = \Sigma_\perp\qquad \text{ almost surely},
    \end{equation}
    where $x_{j,N_0}^\perp$ denotes the projection of $x_j$ onto the orthogonal complement of the column space of $X_{N_0}$. 
\end{itemize}
Assume that the convergence of the posterior for $\norm{\theta_\perp}^2$ in the large-$N$ limit satisfies
\begin{align*}
\norm{\theta_\perp}^2 \to \nu(1-\alpha_0)c
\end{align*}
for some constant $c$. Then the distribution over posterior predictions evaluated on ${\bf x}_{N_0}$
\[
f({\bf x}_{N_0}) = \lr{a^Tx_{j,N_0},\, j=1,\ldots, k},\qquad \theta\sim \Papost
\]
converges weakly to a $k$-dimensional Gaussian: 
\begin{align}
\label{eq:normal-post}f({\bf x}_{N_0}) ~\gives~ \mathcal N\lr{\mu_*, ~ \nu c\Sigma_{\perp}}.
\end{align}
\end{lemma}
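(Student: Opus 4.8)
The plan is to deduce the claim from the structural description of the zero-noise posterior already established in \eqref{eq:a-form}: under $\Papost$ the parallel component is frozen at $\theta_{||}=\theta_{*,N_0}$ by the interpolation constraint, while $\theta_\perp\stackrel{d}{=}\|\theta_\perp\|\,u$ with $u$ uniformly distributed on the unit sphere of $\col(X_{N_0})^\perp$ and independent of $\|\theta_\perp\|$. Hence, for every test point $x_{j,N_0}=x_{j,N_0,||}+x_{j,N_0}^\perp$,
\[
f(x_{j,N_0})\;=\;\inprod{\theta_{*,N_0}}{x_{j,N_0}}\;+\;\|\theta_\perp\|\,u^{T}x_{j,N_0}^{\perp},
\]
and the first term converges to $\mu_{*,j}$ almost surely by \eqref{eq:mu-def}. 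The problem therefore reduces to identifying the joint weak limit of the fluctuation vector $\big(\|\theta_\perp\|\,u^{T}x_{j,N_0}^{\perp}\big)_{j=1}^{k}$ and recombining.

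First I would treat the direction. Fix an orthonormal basis of $\col(X_{N_0})^\perp$, whose dimension $d:=N_0-P$ tends to infinity because $\alpha_0\in(0,1)$, and represent $u=g/\|g\|$ with $g\sim\mathcal N(0,I_d)$. If $v_j$ is the coordinate vector of $x_{j,N_0}^\perp$ in this basis, Parseval gives $\inprod{v_i}{v_j}=\inprod{x_{i,N_0}^\perp}{x_{j,N_0}^\perp}$, so $\big(u^{T}x_{j,N_0}^{\perp}\big)_j=W/\|g\|$, where $W:=(\inprod{g}{v_j})_j$ is the centered Gaussian $k$-vector with $\Cov(W_i,W_j)=\inprod{x_{i,N_0}^\perp}{x_{j,N_0}^\perp}$. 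Writing this as $(W/\sqrt d)/(\|g\|/\sqrt d)$, I would pass to the limit termwise: $W/\sqrt d$ is Gaussian with covariance $\tfrac1d(\inprod{x_{i,N_0}^\perp}{x_{j,N_0}^\perp})_{i,j}\to\Sigma_\perp$ by \eqref{eq:Sig-def}, hence $W/\sqrt d\Rightarrow\mathcal N(0,\Sigma_\perp)$; $\|g\|^2/d\to1$ almost surely by the strong law; and $\|\theta_\perp\|^2$ converges to a constant by hypothesis. Slutsky's theorem --- the denominator $\|g\|/\sqrt d$ tending to the nonzero constant $1$ and the scalar prefactor $\|\theta_\perp\|$ to a constant --- then yields weak convergence of the fluctuation vector to a centered Gaussian; tracking the constants via the isotropy identity $\E{uu^{T}}=I_d/d$, the hypothesis on $\|\theta_\perp\|^2$, and \eqref{eq:Sig-def} produces exactly the covariance in \eqref{eq:normal-post}, and adjoining the deterministic limit $\mu_*$ of the first term finishes the argument. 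This is the rigorous, $k$-dimensional form of the Poincar\'e--Borel heuristic behind \eqref{eq:post-form}.

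The step I expect to require the most care is coordinating the three independent sources of randomness and the senses in which they converge: the training data, under which \eqref{eq:mu-def} and \eqref{eq:Sig-def} hold almost surely; the uniform direction $u$ (equivalently, the Gaussian vector $g$), which can be taken independent of the data once a frame for $\col(X_{N_0})^\perp$ is fixed; and the radial part $\|\theta_\perp\|$, which is random under the posterior, independent of $u$ by \eqref{eq:a-form}, and (per the hypothesis) convergent to a constant only in probability. I would handle this by conditioning on the data: on a set of data sequences of probability one, the numerical limits \eqref{eq:mu-def}--\eqref{eq:Sig-def} and the hypothesis on $\|\theta_\perp\|^2$ all hold, so the Slutsky argument above applies verbatim (Slutsky accommodates in-probability convergence of the prefactor and denominator) and gives, for a.e.\ data sequence, weak convergence of the conditional law of $f(\mathbf x_{N_0})$ to the \emph{same} deterministic Gaussian; a dominated-convergence argument over bounded continuous test functions then promotes this to the unconditional weak limit. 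Nothing beyond this is required.
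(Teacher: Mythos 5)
Your route is the same one the paper has in mind: the paper never writes out a separate proof of Lemma \ref{lem:normal}, but justifies it by the decomposition \eqref{eq:a-form} together with the Poincar\'e--Borel theorem, and your argument (represent $u=g/\norm{g}$ with $g$ standard Gaussian in the $(N_0-P)$-dimensional space, pass to the limit by Slutsky, and condition on a full-measure set of data sequences so that \eqref{eq:mu-def}, \eqref{eq:Sig-def} and the radial hypothesis all hold) is exactly the standard rigorous version of that sketch. The structure --- freezing $\theta_{||}=\theta_{*,N_0}$, independence of $\norm{\theta_\perp}$ and $u$, reduction to the fluctuation vector, and the dominated-convergence step to remove the conditioning --- is all correct.

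The one place you are too quick is the final constant bookkeeping, which you assert "produces exactly the covariance in \eqref{eq:normal-post}" without displaying it. With $u$ uniform on the unit sphere of $\col(X_{N_0})^\perp$ one has $\E{(u^Tx_{i,N_0}^\perp)(u^Tx_{j,N_0}^\perp)}=\inprod{x_{i,N_0}^\perp}{x_{j,N_0}^\perp}/(N_0-P)$, so your own computation, combined with the hypothesis $\norm{\theta_\perp}^2\to\nu(1-\alpha_0)c$ and the normalization in \eqref{eq:Sig-def}, yields limiting covariance $\nu(1-\alpha_0)c\,\Sigma_\perp$, not $\nu c\,\Sigma_\perp$. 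The stated covariance corresponds to normalizing the Gram matrix of the $x_{j,N_0}^\perp$ by $N_0$ rather than by $N_0-P$, a convention the paper itself slips into elsewhere (e.g.\ in \S \ref{sec:bayesfeature-pf}, where $\Var_{\mathrm{post}}[f(x)]\to \nu\norm{x_\perp}^2/N_0$ is identified with $\nu\Sigma_\perp$), so the mismatch traces to a normalization tension in the statement rather than to a flaw in your method. Still, a complete write-up must carry this factor of $(1-\alpha_0)$ explicitly and either record the resulting covariance as $\nu(1-\alpha_0)c\,\Sigma_\perp$ under the literal hypotheses, or state which normalization of $\Sigma_\perp$ is being used, rather than claiming exact agreement with \eqref{eq:normal-post} as written.
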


\subsection{Proof of Theorem
\ref{thm:Z-form}
%\ref{thm:Z-form}
}\label{sec:Z-form-pf}
Recall that, by definition, 
\[
Z_\beta({\bf x}_{N_0},{\bf t})= A_{\beta} \Ep{  \exp\left[ -\frac{\beta}{2}\big|\big|Y-\prod_{\ell=1}^{L+1}W^{(\ell)}X\big|\big|_2^2 -i\inprod{ f({\bf x}_{N_0})}{{\bf t}}\right]} ,
\]
where
\[
{\bf x}_{N_0} = \lr{x_{j,N_0},\, j=1,\ldots, k}\subseteq \R^{N_0},\quad x_{j,N_0}\in \R^{N_0},
\]
the expectation is over $W_{ij}^{(\ell)}\sim \mathcal N(0,\sigma^2/N_{\ell-1})$ and
\begin{equation*}
A_{\beta} =\det(X^TX)^{1/2}\lr{2\pi \beta}^{P/2}.
%A_{\beta} =\exp\left[\frac{\beta}{2}\norm{Y_\perp}^2\right]\det(X^TX)^{1/2}\lr{2\pi \beta}^{P/2}.
\end{equation*}
The first step in proving Theorem \ref{thm:Z-form} is to write
\[
Y = \theta_*^T X ,%+ Y_{\perp},\qquad Y_\perp \in \col(X)^\perp,
\]
where here and throughout the proof we suppress the subscripts in $X_{N_0},Y_{N_0}, \theta_{*,N_0}$. This yields
\begin{equation}\label{eqsi:Z-form-1}
Z_\beta({\bf x}_{N_0},{\bf t})= A_\beta  \Ep{  \exp\left[ -\frac{\beta}{2}\big|\big|\theta_*^TX-\prod_{\ell=1}^{L+1}W^{(\ell)}X\big|\big|_2^2 -i\inprod{ f({\bf x}_{N_0})}{{\bf t}}\right]}.    
%Z_\beta({\bf x}_{N_0},{\bf t})= A_\beta \exp\left[-\frac{\beta}{2}\norm{Y_\perp}^2\right] \Ep{  \exp\left[ -\frac{\beta}{2}\big|\big|\theta_*^TX-\prod_{\ell=1}^{L+1}W^{(\ell)}X\big|\big|_2^2 -i\inprod{ f({\bf x}_{N_0})}{{\bf t}}\right]}.    
\end{equation}
Next, since $f(x)$ is a linear function of $x$, we have
\[
\inprod{ f({\bf x}_{N_0})}{{\bf t}} = f\lr{{\bf x}_{N_0}\cdot {\bf t}},
\]
and hence, suppressing the dependence on $N_0$, we will write
\[
Z_\beta({\bf x}_{N_0},{\bf t}) = Z_\beta(x,1)=:Z_\beta(x),\qquad x:={\bf x}_{N_0}\cdot {\bf t}.
\]
To prove \ref{thm:Z-form}, we first derive the following expression for $Z_\beta(x)$ for general $\beta$.
\begin{proposition}\label{prop:Z-gen}
For any $\beta > 0$, the partition function $Z_\beta(x)$ equals
\begin{align*}
&\prod_{\ell=1}^L\Gamma\lr{\frac{N_\ell}{2}}^{-1} \exp\left[-i\theta_*^Tx_{||}\right]\\ &\qquad \times \int_{\col{(X)}} d\zeta\exp\left[-\frac{\norm{X^{\dagger}(\tau - x_{||})}^2}{2\beta}+i\theta_*^T\zeta\right]~ \G{1,L}{L,1}{M\norm{\zeta}^2 +M\norm{x_\perp}^2}{1-\frac{{\bf N}}{2}}{0},
\end{align*}
where $X^{\dagger}$ is the pseudo-inverse of $X$ and 
\[
4M = \prod_{\ell=0}^L \frac{2\sigma^2}{N_\ell}.
\]
\end{proposition}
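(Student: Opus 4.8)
The plan is to evaluate the Gaussian integral over the weights $W^{(\ell)}$ in two stages: a Hubbard--Stratonovich step that linearizes the likelihood term, followed by an explicit layer-by-layer evaluation of the characteristic function of $\theta = W^{(L+1)}\cdots W^{(1)}$, which collapses to the Laplace transform of a product of independent $\Gamma$ random variables and hence to a Meijer-$G$ function via the identities in \S\ref{sec:G-def}. Starting from \eqref{eqsi:Z-form-1}, I would write the likelihood term as $\exp\big[-\tfrac{\beta}{2}\norm{(\theta_*-\theta)^TX}_2^2\big]$ and introduce $\xi\in\R^P$ through $e^{-\frac{\beta}{2}\|u\|^2} = (2\pi\beta)^{-P/2}\int_{\R^P}e^{-\frac{1}{2\beta}\|\xi\|^2+i\xi^Tu}\,d\xi$ with $u = X^T(\theta_*-\theta)$. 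Because $\xi^TX^T\theta = \theta^T(X\xi)$, all $\theta$-dependence of the integrand collapses to the single phase $e^{-i\theta^T(X\xi + x)}$, and Fubini applies (the inner $\theta$-average is bounded by $1$ and the Gaussian in $\xi$ is integrable for each finite $\beta$), leaving $\mathbb{E}_\theta\big[e^{-i\theta^Tv}\big]$ at $v = X\xi+x$, multiplied by $e^{i\xi^TX^T\theta_*}$ and the Gaussian weight in $\xi$.

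The key computation is then $\mathbb{E}_\theta\big[e^{-i\theta^Tv}\big]$, obtained by conditioning on all layers but the last: $\theta^Tv = W^{(L+1)}u^{(L)}$ with $u^{(L)} := W^{(L)}\cdots W^{(1)}v$ is conditionally $\mathcal N\big(0,\tfrac{\sigma^2}{N_L}\|u^{(L)}\|^2\big)$, and iterating the same observation down the layers gives $\|u^{(L)}\|^2 \stackrel{d}{=} \|v\|^2\prod_{\ell=1}^L \tfrac{\sigma^2}{N_{\ell-1}}\chi^2_{N_\ell}$ with the $\chi^2_{N_\ell}$ independent. Collecting constants and writing $\chi^2_{N_\ell} = 2G_\ell$ with $G_\ell\sim\Gamma(N_\ell/2,1)$, this simplifies (using $4M = \prod_{\ell=0}^L \tfrac{2\sigma^2}{N_\ell}$) to
\[
\mathbb{E}_\theta\big[e^{-i\theta^Tv}\big] = \mathbb{E}\Big[\exp\big(-M\|v\|^2\, \prod_{\ell=1}^L G_\ell\big)\Big],
\]
which depends on $v$ only through $\|v\|^2$. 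Recognizing the right-hand side as the Laplace transform of the density of $\prod_\ell G_\ell$ evaluated at $M\|v\|^2$, I would apply \eqref{eq:G-gamma} (with $\theta_\ell = 1$) and then \eqref{E:G-weighted-Laplace} (with $\alpha = 0$), \eqref{E:G-inversion}, and \eqref{E:G-polyprod} in turn to rewrite it as $\prod_{\ell=1}^L\Gamma(N_\ell/2)^{-1}\,\G{1,L}{L,1}{M\|v\|^2}{1-\frac{\mathbf N}{2}}{0}$.

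Finally I would change variables from $\xi\in\R^P$ to $\zeta := X\xi + x_{||}\in\col(X)$. Then $\|v\|^2 = \|X\xi+x\|^2 = \|\zeta\|^2 + \|x_\perp\|^2$ by orthogonality of $\col(X)$ and $\col(X)^\perp$, $\xi = X^\dagger(\zeta - x_{||})$ since $X^\dagger X = I_P$, the change of variables contributes a Jacobian $\det(X^TX)^{1/2}$, and $i\xi^TX^T\theta_* = i(\zeta - x_{||})^T\theta_*$ separates into the prefactor $e^{-i\theta_*^Tx_{||}}$ and the factor $e^{i\theta_*^T\zeta}$ inside the integral. The surviving constants cancel exactly, $A_\beta(2\pi\beta)^{-P/2}\det(X^TX)^{-1/2} = 1$ by the definition of $A_\beta$, producing the stated formula (with the integration variable $\zeta$ in place of the ``$\tau$'' in the displayed statement). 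I expect the only genuinely delicate point to be the characteristic-function step: keeping the normalizing constants straight through the layer recursion (the $2^L$ from converting the $\chi^2$'s to $\Gamma(N_\ell/2,1)$'s, and the interplay of $M$, $2\pi\beta$, and $\det X^TX$) and verifying that the Mellin--Barnes contour and parameter conditions attached to \eqref{E:G-weighted-Laplace}--\eqref{E:G-polyprod} are met; the remaining manipulations are routine Gaussian/Fubini steps valid for all $\beta > 0$.
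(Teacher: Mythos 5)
Your proposal is correct and takes essentially the same route as the paper: a Hubbard--Stratonovich dual variable to linearize the likelihood, reduction of the remaining weight average to the Laplace transform of a product of independent Gamma random variables (hence a Meijer-$G$ function via \eqref{E:G-weighted-Laplace}, \eqref{E:G-inversion}, \eqref{E:G-polyprod}), and the change of variables to $\col(X)$ whose Jacobian cancels $A_\beta$, with the correct bookkeeping of $M$. The only difference is cosmetic: the paper integrates out $W^{(L+1)}$ by completing the square and cites the Meijer-$G$ density of $Q_{N,L}$ from \cite{zavatone2021exact}, whereas you derive the same product-of-$\chi^2$ representation by layer-by-layer conditioning and invoke \eqref{eq:G-gamma} --- a self-contained version of the same computation.
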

\begin{proof}
We begin the proof of Proposition \ref{prop:Z-gen} by integrating out the final layer weights $W^{(L+1)}$ and introducing a dual variable $t\in \R^{P}$, as in the following.

%%%%LEMMA%%%%
\begin{lemma}\label{L:partition-form}
Write
\[
X^L:=W^{(L)}\cdots W^{(1)} X,\qquad x^L =W^{(L)}\cdots W^{(1)} x.
\]
We have
\begin{equation}\label{E:Z-noa}
Z_\beta(x) =\det(X^TX)^{1/2}\int_{\R^P} \Ep{\exp\left[-\frac{\norm{t}_2^2}{2\beta}+i\theta_*^TXt-\frac{\sigma^2}{2N_L}\norm{X^Lt + x^L}^2\right] }dt,
\end{equation}
\end{lemma}
%%%%LEMMA%%%%

%%%%PROOF%%%%
\begin{proof}
From the following identity
\begin{align*}
   1= \int_{\R^P} \frac{dt}{(2\pi \beta)^{P/2}} \exp\left[-\frac{1}{2\beta} \norm{t^T- i\beta\lr{\theta_*^TX-W^{(L+1)}X^L}}^2\right] 
\end{align*}
we conclude
\begin{align*}
 & \exp\left[-\frac{\beta}{2}\norm{\theta_*^TX-W^{(L+1)}X^L}^2\right]\\
  &\quad =  \int \frac{d t}{(2\pi \beta)^{P/2}} \exp\left[-\frac{1}{2\beta} \norm{t^T- i\beta\lr{\theta_*^TX-W^{(L+1)} X^L}}^2-\frac{\beta}{2}\norm{\theta_*^TX-W^{(L+1)} X^L}^2\right] \\
  &\quad =\int \frac{d t}{(2\pi \beta)^{P/2}} \exp\left[-\frac{1}{2\beta} \norm{t}^2 +it\lr{\theta_*^TX-W^{(L+1)} X^L}\right].
\end{align*}
Substituting this into \eqref{eqsi:Z-form-1} we find
\begin{align*}
Z_\beta(x)&= \frac{A_{\beta}}{(2\pi \beta)^{P/2}} \int_{\R^P}\Ep{ \exp\left[i\lr{\theta_*^TXt-W^{(L+1)}\lr{X^Lt +x^L}} -\frac{\norm{t}^2}{2\beta}\right]} dt    
\end{align*}
Now we compute the expectation over the final layer weights $W^{(L+1)}$ by completing the square:
\begin{align*}
    -\frac{N_L}{2\sigma^2}\norm{W^{(L+1)}}_2^2 - iW^{(L+1)}\lr{X^Lt + x^L}= &-\frac{N_L}{2\sigma^2}\left[\norm{W^{(L+1)}-i\frac{\sigma^2}{N_L}(X^Lt+x^L)}_2^2\right]\\
    &- \frac{\sigma^2}{2N_L}\norm{X^Lt + x^L}_2^2.
\end{align*}
This yields
\[
Z_\beta (x) =\frac{A_{\beta}}{(2\pi \beta)^{P/2}}\int_{\R^P}\Ep{ \exp\left[ -\frac{\norm{t}^2}{2\beta}+i\theta_*^TXt-\frac{\sigma^2}{2N_L}\norm{X^Lt + x^L}^2\right]}dt,
%Z_\beta (x) =\frac{A_{\beta}}{(2\pi \beta)^{P/2}}\exp\left[-\frac{\beta}{2}\norm{Y_\perp}^2\right] \int_{\R^P}\Ep{ \exp\left[ -\frac{\norm{t}^2}{2\beta}+i\theta_*^TXt-\frac{\sigma^2}{2N_L}\norm{X^Lt + x^L}^2\right]}dt,
\]
 completing the proof.
\end{proof}
%%%%PROOF%%%%

\noindent For each fixed $t$, note that
\[
X^Lt + x^L = W^{(L)}\cdots W^{(1)}\lr{Xt + x}.
\]
Hence, the expectation 
\[
\Ep{ \exp\left[-\frac{\sigma^2}{2N_L}\norm{X^Lt + x^L}^2\right]}
\]
equals the Laplace transform 
\[
\mathcal L_{Q_{N,L}}\lr{M'\norm{Xt+x}^2}
\]
of the random variable
\begin{equation}\label{E:QNL-def}
Q_{N,L} = \norm{ \left\{\prod_{\ell=1}^L \widehat{W}^{(\ell)}\right\} u}^2,\qquad \widehat{W}^{(\ell)} \sim \mN\lr{0,I_{N_{\ell}\times N_{\ell-1}}}\text{ independent},    
\end{equation}
where $u$ is any unit vector (the distribution is the same for any $u$ since $W^1$ is rotationally invariant) evaluated at $\norm{Xt+x}^2$ times 
\[
M' :=\frac{1}{2} \lr{\prod_{\ell=1}^L \frac{\sigma^2}{N_\ell}} 
\]
Thus, we obtain 
\begin{equation}\label{E:Z-lap}
Z_\beta(x) =\det(X^TX)^{1/2} \int dt \exp\left[-\frac{\norm{t}^2}{2\beta}+i\theta_*^TXt\right] \mathcal L_{Q_{N,L}}\lr{M'\norm{Xt+x}^2},
\end{equation}
To proceed we rewrite the Laplace transform in the preceding line in terms of a Meijer-G function.

%%%%LEMMA%%%%
\begin{lemma}\label{L:QNL-form}
Let $Q_{N,L}$ be defined as in \eqref{E:QNL-def}. Then, 
\begin{align*}
\mathcal L_{Q_{N,L}}\lr{\tau}= \lr{\prod_{\ell=1}^L \Gamma \lr{\frac{N_\ell}{2}}}^{-1}~\G{1,L}{L,1}{2^L\tau}{1-\frac{N_L}{2},\dots,1-\frac{N_1}{2}}{0}.
\end{align*}
\end{lemma}
%%%%LEMMA%%%%
%%%%PROOF%%%%
\begin{proof}
The density of $Q_{N,L}^{1/2}$ is known from \cite{zavatone2021exact}:
\[
p_{Q_{N,L}^{1/2}}(\rho) = \frac{2^{1-L/2}}{\Gamma\lr{\frac{N_1}{2}}\cdots \Gamma\lr{\frac{N_{L}}{2}}} \G{L,0}{0,L}{\frac{\rho^2}{2^L  }}{-}{\frac{N_L-1}{2},\cdots, \frac{N_1-1}{2}}.
\]
Hence, the density of $\widehat{Q}_{N,L}$ is 
\begin{align*}
p_{Q_{N,L}}(\rho) &= \frac{1}{2\rho^{1/2}}  p_{Q_{N,L}}(\rho^{1/2})\\
&=\frac{2^{-L}}{\Gamma\lr{\frac{N_1}{2}}\cdots \Gamma\lr{\frac{N_{L}}{2}}} \lr{\frac{2^L}{\rho}}^{1/2}\G{L,0}{0,L}{\frac{\rho}{2^L  }}{-}{\frac{N_L-1}{2},\cdots, \frac{N_1-1}{2}}\\
&=\frac{2^{-L}}{\Gamma\lr{\frac{N_1}{2}}\cdots \Gamma\lr{\frac{N_{L}}{2}}} \G{L,0}{0,L}{\frac{\rho}{2^L  }}{-}{\frac{N_L}{2}-1,\cdots, \frac{N_1}{2}-1},
\end{align*}
where in the last step we've used \eqref{E:G-polyprod}. Hence, 
\begin{align*}
    \mathcal L_{Q_{N,L}}(\tau) &= \int_0^\infty e^{-\tau \rho}  \G{L,0}{0,L}{\frac{\rho}{2^L  }}{-}{\frac{N_L}{2}-1,\cdots, \frac{N_1}{2}-1} d\rho\\
    &=\lr{\Gamma\lr{\frac{N_1}{2}}\cdots \Gamma\lr{\frac{N_{L}}{2}}}^{-1} \frac{1}{2^L\tau} \G{L,1}{1,L}{\frac{1}{2^L \tau }}{0}{\frac{N_L}{2}-1,\cdots, \frac{N_1}{2}-1}\\
    &=\lr{\Gamma\lr{\frac{N_1}{2}}\cdots \Gamma\lr{\frac{N_{L}}{2}}}^{-1} \G{L,1}{1,L}{\frac{1}{2^L \tau }}{0}{\frac{N_L}{2},\cdots, \frac{N_1}{2}}\\
    &=\lr{\Gamma\lr{\frac{N_1}{2}}\cdots \Gamma\lr{\frac{N_{L}}{2}}}^{-1}  \G{1,L}{L,1}{2^L \tau }{1-\frac{N_L}{2},\cdots, 1-\frac{N_1}{2}}{0}
\end{align*}
where we've used \eqref{E:G-weighted-Laplace} and \eqref{E:G-inversion}.
\end{proof}
%%%%PROOF%%%%

\noindent Combining the preceding Lemma with \eqref{E:Z-lap} gives
\begin{align}
\label{eqsi:ztilde-old}
    Z(x) &= \frac{\det(X^TX)^{1/2}}{\prod_{\ell=1}^L\Gamma\lr{\frac{N_\ell}{2}}} \int_{\R^{P}} dt \exp\left[-\frac{\norm{t}^2}{2\beta}+i\theta_*^TXt\right]~ \G{1,L}{L,1}{M\|Xt + x\|^2}{1-\frac{N_L}{2},\dots,1-\frac{N_1}{2}}{0},
\end{align}
where 
\[
4M =  \prod_{\ell=0}^L \frac{2\sigma^2}{N_\ell}.
\]
To complete the proof of Proposition \ref{prop:Z-gen}, we write
%\[
%t = t_{||}+t_{\perp},\qquad t_{||}\in \ker(X),\quad t_{\perp}\in \ker(X)^\perp
%\]
%and
\[
x = x_{||}+x_{\perp},\qquad x_{||}\in \col(X),\quad x_{\perp}\in \col(X)^\perp.
\]
%Writing $\norm{t}^2=\norm{t_{||}}^2 + \norm{t_{\perp}}^2$ allows us to write $Z_\beta(x)$ as 
%\begin{align*}
%\frac{\det(X^TX)^{1/2}}{\prod_{\ell=1}^L\Gamma\lr{\frac{N_\ell}{2}}} \int_{\R^P} dt \exp\left[-\frac{\norm{t}^2}{2\beta}+i\theta_*^TXt\right]~ \G{1,L}{L,1}{M\|Xt + x\twiddle{t}\|^2}{1-\frac{{\bf N}}{2}}{0}.
%\end{align*}
%Note that $X$ gives an isomorphism from $\ker(X)^\perp$ to $\im{(X)}$.
Note that $X$ gives an isomorphism from $\R^P$ to $\im{(X)}$. Thus, we may change variables to write $Z_\beta(x)$ as
\begin{align*}
\prod_{\ell=1}^L\Gamma\lr{\frac{N_\ell}{2}}^{-1} \int_{\R^{P}} d\zeta\exp\left[-\frac{\norm{X^{\dagger}\zeta}^2}{2\beta}+i\theta_*^T\zeta\right]~ \G{1,L}{L,1}{M\|\zeta + x\|^2}{1-\frac{{\bf N}}{2}}{0},
\end{align*}
where $X^{\dagger}$ the pseudo-inverse of $X$. Finally, note that
\[
\norm{\zeta+x}^2 = \norm{\zeta + x_{||}}^2 + \norm{x_\perp}^2 .
\]
Hence, changing variables to $\tau = \zeta + x_{||}$ yields the following expression for $Z_\beta(x)$:
\begin{align*}
&\prod_{\ell=1}^L\Gamma\lr{\frac{N_\ell}{2}}^{-1} \exp\left[-i\theta_*^Tx_{||}\right]\times \int_{\col{(X)}} d\zeta\exp\left[-\frac{\norm{X^{\dagger}(\tau - x_{||})}^2}{2\beta}+i\theta_*^T\zeta\right]~ \G{1,L}{L,1}{M\norm{\zeta}^2 +M\norm{x_\perp}^2}{1-\frac{{\bf N}}{2}}{0}.
\end{align*}
This is precisely the statement of Proposition \ref{prop:Z-gen}. 
\end{proof}

\noindent By taking $\beta \gives \infty$ in Proposition \ref{prop:Z-gen}, we see that
\begin{equation}\label{eqsi:Z-form-2}
Z_\infty(x)=\frac{\exp\left[-i\theta_*^Tx_{||}\right]}{\prod_{\ell=1}^L\Gamma\lr{\frac{N_\ell}{2}}
}
 \int_{\R^P} \exp\left[i\theta_*^T\zeta\right]~ \G{1,L}{L,1}{M\norm{\zeta}^2 +M\norm{x_\perp}^2}{1-\frac{{\bf N}}{2}}{0}d\zeta.
\end{equation}
In order to simplify this expression further, we pass to polar coordinates
\begin{align*}
   &\int_{\R^P} \exp\left[i\theta_*^T\zeta\right]~ \G{1,L}{L,1}{M\norm{\zeta}^2 +M\norm{x_\perp}^2}{1-\frac{{\bf N}}{2}}{0}d\zeta \\
   &\qquad =\int_{0}^\infty \rho^{P-1}\left\{ \int_{S^{P-1}}\exp\left[i\rho \theta_*^T\theta\right]d\theta\right\}~ \G{1,L}{L,1}{M\rho^2 +M\norm{x_\perp}^2}{1-\frac{{\bf N}}{2}}{0}d\rho.
\end{align*}
By the definition of the Bessel function and the relation \eqref{E:G-BesselJ}, we have
\begin{align*}
\int_{S^{P-1}}\exp\left[i\rho \theta_*^T\theta\right]d\theta&= (2\pi)^{P/2}\lr{\rho \norm{\theta_*}}^{-\frac{P-2}{2}} J_{\frac{P-2}{2}}\lr{\rho \norm{\theta_*}}\\    
&= (2\pi)^{P/2}\lr{\rho^2 \norm{\theta_*}^2}^{-\frac{P-2}{4}} \G{1,0}{0,2}{\frac{\rho^2\norm{\theta_*}^2}{4}}{-}{\frac{P-2}{4},-\frac{P-2}{4}}\\  
&=2\pi^{P/2} \G{1,0}{0,2}{\frac{\rho^2\norm{\theta_*}^2}{4}}{-}{0,-\frac{P-2}{2}}.  
\end{align*}
We therefore obtain 
\begin{align*}
   &\int_{\R^P} \exp\left[i\theta_*^T\zeta\right]~ \G{1,L}{L,1}{M\norm{\zeta}^2 +M\norm{x_\perp}^2}{1-\frac{{\bf N}}{2}}{0}d\zeta \\
   &\qquad =\pi^{P/2}\int_{0}^\infty \rho^{\frac{P-2}{2}} \G{1,0}{0,2}{\frac{\rho\norm{\theta_*}^2}{4}}{-}{0,-\frac{P-2}{2}}~ \G{1,L}{L,1}{M\rho +M\norm{x_\perp}^2}{1-\frac{{\bf N}}{2}}{0}d\rho\\
   &\qquad =\lr{\frac{4}{\norm{\theta_*}^2}}^{\frac{P-2}{2}}\pi^{P/2}\int_{0}^\infty  \G{1,0}{0,2}{\frac{\rho\norm{\theta_*}^2}{4}}{-}{\frac{P-2}{2},0}~ \G{1,L}{L,1}{M\rho +M\norm{x_\perp}^2}{1-\frac{{\bf N}}{2}}{0}d\rho\\
   &\qquad =\lr{\frac{4}{\norm{\theta_*}^2}}^{\frac{P}{2}}\pi^{P/2}\frac{\norm{\theta_*}^2}{4M}\int_{0}^\infty  \G{1,0}{0,2}{\frac{\rho\norm{\theta_*}^2}{4M}}{-}{\frac{P-2}{2},0}~ \G{1,L}{L,1}{\rho +M\norm{x_\perp}^2}{1-\frac{{\bf N}}{2}}{0}d\rho.
\end{align*}
We now apply \eqref{E:G-series} to find
\begin{align*}
    &\int_{0}^\infty  \G{1,0}{0,2}{\frac{\rho\norm{\theta_*}^2}{4M}}{-}{\frac{P-2}{2},0}~ \G{1,L}{L,1}{\rho +M\norm{x_\perp}^2}{1-\frac{{\bf N}}{2}}{0}d\rho\\
    &\qquad = \sum_{k=0}^\infty \frac{1}{k!}\lr{-M\norm{x_\perp}^2}^k \G{L+1,0}{0,L+1}{\frac{\norm{\theta_*}^2}{4M}}{-}{\frac{P-2}{2}, \frac{{\bf N}}{2}+k-1}.
\end{align*}
Therefore, 
\begin{align*}
   &\int_{\R^P} \exp\left[i\theta_*^T\zeta\right]~ \G{1,L}{L,1}{M\norm{\zeta}^2 +M\norm{x_\perp}^2}{1-\frac{{\bf N}}{2}}{0}d\zeta \\
   &\qquad =\lr{\frac{4}{\norm{\theta_*}^2}}^{\frac{P}{2}}\pi^{P/2} \sum_{k=0}^\infty \frac{1}{k!}\lr{-M\norm{x_\perp}^2}^k \G{L+1,0}{0,L+1}{\frac{\norm{\theta_*}^2}{4M}}{-}{\frac{P}{2}, \frac{{\bf N}}{2}+k}.
\end{align*}
Putting this all together yields
\begin{align*}
    Z_\infty(x) = \lr{\frac{4\pi}{\norm{\theta_*}^2}}^{\frac{P}{2}}\prod_{\ell=1}^L \Gamma\lr{\frac{N_\ell}{2}}^{-1}\sum_{k=0}^\infty \frac{1}{k!}\lr{-M\norm{x_\perp}^2}^k \G{L+1,0}{0,L+1}{\frac{\norm{\theta_*}^2}{4M}}{-}{\frac{P}{2}, \frac{{\bf N}}{2}+k},
\end{align*}
completing the proof. 
\hfill $\square$

\subsection{Proof of Theorem 
\ref{thm:logG}
%\ref{thm:logG}
}
\label{pf:logG}
In this section, we derive several apparently novel asymptotic expansion of the Meijer-$G$ functions of the form 
\begin{equation}\label{eq:G-fn}
\G{L+1,0}{0,L+1}{\frac{\norm{\theta_*}^2}{4M}}{-}{\frac{P}{2},\frac{{\bf N}}{2}+k} := \G{L+1,0}{0,L+1}{\frac{\norm{\theta_*}^2}{4M}}{-}{\frac{P}{2},\frac{N_1}{2}+k,\ldots,\frac{N_L}{2}+k}.
\end{equation}

Our first step is to obtain a contour integral representation of the Meijer-G functions we are studying. To state the exact result, consider the following independent $\Gamma$ random variables:
\begin{align}
 \label{eq:phi-def}   \phi_j \sim \begin{cases}\Gamma\left(\frac{N_j}{2}+k+1, \frac{2\sigma^2}{N_j}\right), &j=1,\dots,L\\ \Gamma\left(\frac{P}{2}+1, \frac{2\sigma^2}{P}\frac{\alpha_0}{\norm{\theta_*}^2}\right), &j=0\end{cases}.
\end{align}
As we recalled in \S \ref{sec:gammarv}, the moments of $\phi_j$ can be explicitly written in terms of $\Gamma$ functions. Moreover, the Meijer-G functions \eqref{eq:G-fn} can be interpreted, up to a scaling factor, as densities of the product of products of $\phi_j$'s. This is allows us to obtain the following
\begin{lemma}\label{lem:contour}
Fix $L, N, N_0,\ldots, N_L\geq 1$ as well as $\norm{\theta_*}>0$ and define $M$ by
\begin{align}\label{eq:M-def}
4M := \prod_{\ell=0}^L \frac{2\sigma^2}{N_\ell}.
\end{align}
For any $N\geq 1$ we have 
\begin{align}
\notag \mathrm{Den}_{\phi_0\cdots \phi_{L}}(1) &= \frac{\norm{\theta_*}^2}{4M}\G{L+1,0}{0,L+1}{\frac{\norm{\theta_*}^2}{4M}}{-}{\frac{P}{2},\frac{{\bf N}}{2}+k}\left[\Gamma\left(\frac{P}{2}+1\right)\right]^{-1}\prod_{\ell=1}^L\left[\Gamma\left(\frac{N_\ell}{2}+k+1\right)\right]^{-1}\\
\label{eq:G-contour}&= \frac{1}{2\pi}\int_{\mathcal C} \exp\left[\Phi(z)\right]dz,
\end{align}
where $\mathcal C\subseteq\C$ is the contour that runs along the real line from $-\infty$ to $\infty$ and
\begin{align}
\notag   \Phi(z)&= -iz\log\lr{\frac{2\sigma^2}{P}\frac{\alpha_0}{\norm{\theta_*}^2}}+\log\lr{\frac{\Gamma\lr{\frac{P}{2}+1-iz}}{\Gamma\lr{\frac{P}{2}+1}}}\\
   &+\sum_{\ell=1}^L\left\{-iz\log\lr{\frac{2\sigma^2}{N_\ell}}+\log\lr{\frac{\Gamma\lr{\frac{N_\ell}{2}+k+1-iz}}{\Gamma\lr{\frac{N_\ell}{2}+k+1}}}    \right\}.
% \notag   \Psi(z)&= -iz\log\lr{\frac{2\sigma^2}{P}\frac{\alpha_0}{\norm{\theta_*}^2}}+\frac{1}{N}\log\lr{\frac{\Gamma\lr{\frac{P}{2}+1-iNz}}{\Gamma\lr{\frac{P}{2}+1}}}\\
 % \label{E:psi-def}  &+\sum_{\ell=1}^L\left\{-iz\log\lr{\frac{2\sigma^2}{N_\ell}}+\frac{1}{N}\log\lr{\frac{\Gamma\lr{\frac{N_\ell}{2}+k+1-iNz}}{\Gamma\lr{\frac{N_\ell}{2}+k+1}}}    \right\}.
\end{align}
\end{lemma}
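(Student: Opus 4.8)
The statement (Lemma~\ref{lem:contour}) has two equalities to establish for $\mathrm{Den}_{\phi_0\cdots\phi_L}(1)$: the first is essentially bookkeeping with \eqref{eq:G-gamma}, and the second is Fourier inversion combined with the Gamma-logarithm characteristic function \eqref{E:gamma-MGF}.

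For the first equality I would record the parameters of the $\phi_\ell$: shape $k_0=\tfrac{P}{2}+1$ and scale $\theta_0=\tfrac{2\sigma^2}{P}\tfrac{\alpha_0}{\norm{\theta_*}^2}$ when $\ell=0$, and shape $k_\ell=\tfrac{N_\ell}{2}+k+1$ and scale $\theta_\ell=\tfrac{2\sigma^2}{N_\ell}$ for $\ell=1,\dots,L$. A one-line computation (using $\alpha_0=P/N_0$) shows the product of the scales is $A:=\prod_{\ell=0}^L\theta_\ell=4M/\norm{\theta_*}^2$, so $1/A=\norm{\theta_*}^2/(4M)$ is exactly the argument of the Meijer-$G$ appearing in the lemma; moreover the shifted shapes are $k_0-1=\tfrac{P}{2}$ and $k_\ell-1=\tfrac{N_\ell}{2}+k$. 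Substituting these into \eqref{eq:G-gamma} (applied to the $L+1$ variables $\phi_0,\dots,\phi_L$ at $y=1$) then yields the first equality, since the prefactors $\prod_{\ell=0}^L\Gamma(k_\ell)^{-1}$ are precisely $\Gamma(\tfrac{P}{2}+1)^{-1}\prod_{\ell=1}^L\Gamma(\tfrac{N_\ell}{2}+k+1)^{-1}$.

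For the second equality, set $Y:=\phi_0\cdots\phi_L$ and $S:=\log Y=\sum_{\ell=0}^L\log\phi_\ell$. Since each $\phi_\ell>0$ almost surely, the change of variables $y=e^s$ gives $\mathrm{Den}_Y(1)=\mathrm{Den}_S(0)$. By independence the characteristic function of $S$ factorizes, $\E{e^{-izS}}=\prod_{\ell=0}^L\E{e^{-iz\log\phi_\ell}}$, and \eqref{E:gamma-MGF} evaluates each factor as $\exp\bigl[-iz\log\theta_\ell+\log\Gamma(k_\ell-iz)-\log\Gamma(k_\ell)\bigr]$. Multiplying these $L+1$ expressions and comparing with the definition of $\Phi$ shows $\prod_{\ell=0}^L\E{e^{-iz\log\phi_\ell}}=\exp[\Phi(z)]$, so Fourier inversion at the point $0$ gives $\mathrm{Den}_S(0)=\tfrac{1}{2\pi}\int_{\R}\exp[\Phi(z)]\,dz$, which is the claimed contour-integral formula.

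The one genuine point to check --- and hence the main obstacle --- is the validity of Fourier inversion at the single point $s=0$, which requires (i) $\mathrm{Den}_S$ continuous at $0$, and (ii) $z\mapsto\exp[\Phi(z)]$ absolutely integrable on $\R$. Point (i) is immediate, since $S$ is a sum of $L+1\ge2$ independent variables each carrying a bounded continuous density on $\R$ (the pushforward of a Gamma density by $\log$), so the convolution is continuous everywhere. For (ii), Stirling's expansion (Proposition~\ref{prop:gamma-exp}), valid in sectors $\abs{\arg w}<\pi-\delta$, gives $\abs{\Gamma(k_\ell-iz)/\Gamma(k_\ell)}\le C_\ell(1+\abs z)^{k_\ell-1/2}e^{-\pi\abs z/2}$ for $\abs z$ large, while $\abs{\theta_\ell^{-iz}}=1$; hence $\abs{\exp[\Phi(z)]}$ is at most a polynomial in $\abs z$ times $e^{-(L+1)\pi\abs z/2}$, which is integrable. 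With (i) and (ii) the classical inversion theorem applies pointwise at $0$, completing the proof. As an alternative that avoids probability, one may instead insert the Mellin-Barnes definition \eqref{eq:G-def} of the Meijer-$G$ into the first equality and perform the linear substitution in the contour variable that rewrites it as an integral over the real line; the probabilistic route is chosen here because it directly reuses \eqref{E:gamma-MGF}.
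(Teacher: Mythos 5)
Your proof is correct and follows essentially the same route as the paper: apply \eqref{eq:G-gamma} to the product of Gamma variables (after checking that the scales multiply to $4M/\norm{\theta_*}^2$ via $\alpha_0 = P/N_0$), pass to the density of the sum of logarithms at $0$, and invert the factored characteristic function given by \eqref{E:gamma-MGF}. Your explicit justification of the inversion step (continuity of the convolved density and integrability of $\exp[\Phi]$ via Stirling) is a detail the paper leaves implicit, but it does not change the argument.
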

\begin{proof}
We use the relationship \eqref{eq:G-gamma} between $G$ functions and densities of products of Gamma random variables to write
\begin{align*}
\prod_{\ell=1}^L\left[\Gamma\left(\frac{N_\ell}{2}+k+1\right)\right]^{-1}\left[\Gamma\left(\frac{P}{2}+1\right)\right]^{-1}\frac{\norm{\theta_*}^2}{4M}\G{L+1,0}{0,L+1}{\frac{\norm{\theta_*}^2}{4M}}{-}{\frac{P}{2},\frac{{\bf N}}{2}+k} &= \mathrm{Den}_{\phi_0\cdots \phi_{L}}(1).
\end{align*}
For any positive random variable $X$ with density $d\mathbb P_X$, we have
\[
d\mathbb P_X(t) = t^{-1}d\mathbb P_{\log(X)}(\log(t)).
\]
Hence,
\begin{align*}
\mathrm{Den}_{\phi_0\cdots \phi_{L}}(1) &= \mathrm{Den}_{\sum_{\ell=0}^L\log \phi_\ell}(0)
\end{align*}
is proportional to the density of a sum of independent random variables evaluated at $\log(1)=0$. Further, recalling \eqref{E:gamma-MGF}, we find by Fourier inversion that
\[
\frac{1}{2\pi}\int_{-\infty}^\infty \exp\left[\Phi(t)\right] dt
\]
equals
\begin{align*}
\prod_{\ell=1}^L\left[\Gamma\left(\frac{N_\ell}{2}+k+1\right)\right]^{-1}\left[\Gamma\left(\frac{P}{2}+1\right)\right]^{-1}\frac{\norm{\theta_*}^2}{4M}\G{L+1,0}{0,L+1}{\frac{\norm{\theta_*}^2}{4M}}{-}{\frac{P}{2},\frac{{\bf N}}{2}+k},
\end{align*}
where
\begin{align}
 \notag   \Phi(t)&= -it\log 1-it\log\lr{\frac{2\sigma^2}{P}\frac{\alpha_0}{\norm{\theta_*}^2}}+\log\lr{\frac{\Gamma\lr{\frac{P}{2}+1-it}}{\Gamma\lr{\frac{P}{2}+1}}}\\
    &+\sum_{\ell=1}^L\left\{-it\log\lr{\frac{2\sigma^2}{N_\ell}}+\log\lr{\frac{\Gamma\lr{\frac{N_\ell}{2}+k+1-it}}{\Gamma\lr{\frac{N_\ell}{2}+k+1}}}    \right\}.
\end{align}
%Making the change of variables $t\mapsto Nt$ completes the proof. 
\end{proof}
Note that \eqref{eq:G-contour-1} expresses $G$ as a contour integral of a meromorphic function $\exp(N\Psi)$ with poles at 
\[
-\frac{m}{2}-k-1 -\nu,\qquad \nu \in \N,\, m\in \set{P,N_1,\ldots, N_L}.
\]
Our goal is to evaluate the contour integral representation \eqref{eq:G-contour} for the Meijer-G function using the Laplace method. To do so, we will use the following standard procedure:
\begin{enumerate}
    \item Contour deformation: $\mathcal C$ into a union of several contours $\mathcal C_1\cup\cdots  \cup \mathcal C_R$. Contours $\mathcal C_1,\mathcal C_R$ are non-compact, whereas the contours $\mathcal C_2,\ldots, \mathcal C_{R-1}$ do not extend to infinity. We will need to choose the contours $\mathcal C_2,\ldots, \mathcal C_{R-1}$ so that exactly one of them passes through what will turn out to be the dominant critical point $\zeta_*$ of $\Psi$ and does so in the direction of steepest descent. Moreover, on the contour $\mathcal C_2$, the imaginary part of the phase will be constant (in fact equal to $0$). 
    \item Localization to compact domain of integration: Show that the integrand $\exp\left[N\Psi(z)\right]$ is integrable and exponentially small in $N$ on the contours $\mathcal C_1,\mathcal C_R$. In particular, for any $K>0$, we will find that modulo errors of size $O(e^{-KN})$ we may therefore focus on the integral over $\mathcal C_2,\ldots, \mathcal C_{R-1}$. The ability to choose $K$ will be important since the entire integral is exponentially small in $N$.
    \item Computing derivatives of $\Psi$ at $\zeta_*$: Now that we have reduced the integral \eqref{eq:G-contour} to a compact domain of integration it remains only to check that the critical point is non-degenerate, to compute $\Psi(\zeta_*), \frac{d}{dz}\Psi(\zeta_*),\frac{d^2}{dz^2}\Psi(\zeta_*)$, and to apply the Laplace method.
\end{enumerate}
We now proceed to give the details in the case when 
\[
L\text{ is fixed},\quad  N:=\min\set{N_0, P, N_\ell}\gives \infty,\quad \frac{P}{N_0}\gives \alpha_0\in (0,1),\quad \frac{P}{N}\gives \alpha\in (0,\infty).
\]
This is a generalization of case (a) of Theorem \ref{thm:logG}. To proceed, we make the change of variables $t\mapsto Nt$ in \eqref{eq:G-contour} to get that 
\begin{align}
\frac{\norm{\theta_*}^2}{4M}\G{L+1,0}{0,L+1}{\frac{\norm{\theta_*}^2}{4M}}{-}{\frac{P}{2},\frac{{\bf N}}{2}+k}\left[\Gamma\left(\frac{P}{2}+1\right)\right]^{-1}\prod_{\ell=1}^L\left[\Gamma\left(\frac{N_\ell}{2}+k+1\right)\right]^{-1}= \frac{N}{2\pi}\int_{\mathcal C} \exp\left[N\Psi(z)\right]dz,
\end{align}
where
\begin{align}
    \notag   \Psi(z)&= -iz\log\lr{\frac{2\sigma^2}{P}\frac{\alpha_0}{\norm{\theta_*}^2}}+\frac{1}{N}\log\lr{\frac{\Gamma\lr{\frac{P}{2}+1-iNz}}{\Gamma\lr{\frac{P}{2}+1}}}\\
  \label{E:psi-def}  &+\sum_{\ell=1}^L\left\{-iz\log\lr{\frac{2\sigma^2}{N_\ell}}+\frac{1}{N}\log\lr{\frac{\Gamma\lr{\frac{N_\ell}{2}+k+1-iNz}}{\Gamma\lr{\frac{N_\ell}{2}+k+1}}}    \right\}.
\end{align}
With this rescaling, we now deform the contour of integration as follows by fixing constants $\delta,T>0$ (the value of $T$ will be determined by Lemma \ref{lem:Psi-est} and the value of $\delta$ will be determined by \eqref{eq:zeta-star2-def} and the sentence directly after it) and deforming the contour $\mathcal C$ as follows 
\[
\mathcal C\quad \mapsto \quad \bigcup_{j=0}^4\mathcal C_j,
\]
where
\begin{align*}
    \mathcal C_0&=\mathcal C_0(T)=(-\infty,-T]\\
    \mathcal C_1&=\mathcal C_1(T,C_\delta)=\text{linear interpolation from }-T\in \R\text{ to }-iC_\delta\in i\R\\
    \mathcal C_2&= \mathcal C_2(C_\delta)=[-iC_\delta,iC_\delta]\\
    \mathcal C_3&=\mathcal C_3(T,C_\delta)=\text{linear interpolation from }iC_\delta\in i\R\text{ to }T\in \R\\
    \mathcal C_4&=\mathcal C_4(T)=[T,\infty),
\end{align*}
where
\[
C_\delta := \delta -\min\set{\frac{P}{2N}, \frac{N_1}{2N}, \ldots,\frac{N_L}{2N}}.
\]
\begin{figure}
    \centering
    \includegraphics[scale=.2]{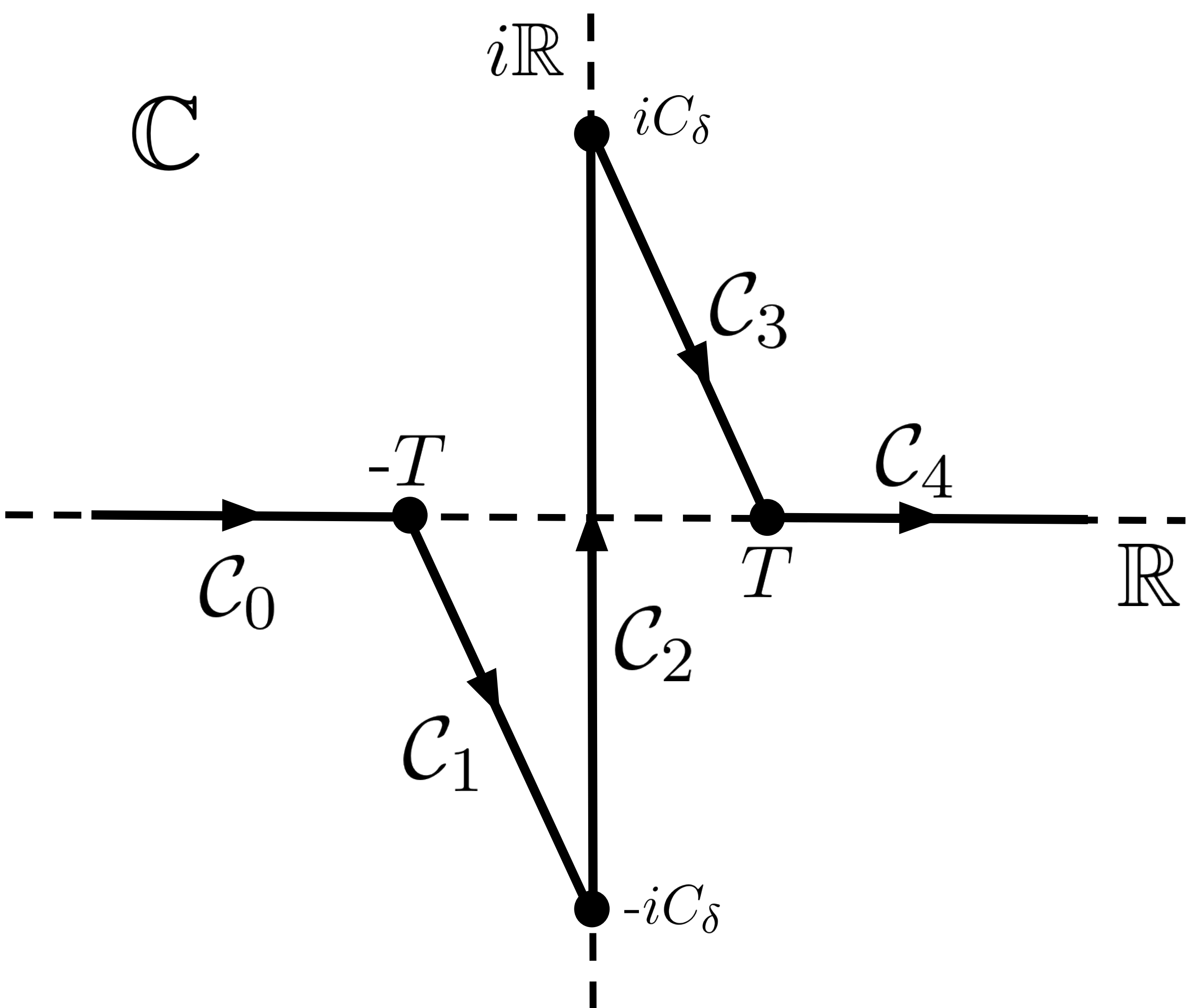}
    \caption{Deformed contour of integration for the proof of Theorem \ref{thm:logG}.}
    \label{fig:C-contour}
\end{figure}
See Figure \ref{fig:C-contour}. In order to evaluate the integral over each $\mathcal C_j$, it will be convenient to introduce
\[
\Psi(z)=\sum_{\ell=0}^L\Psi_\ell(z)
\]
where
\begin{equation}\label{eq:Psi-ell-def}
    \Psi_\ell(z)=\begin{cases}
    -iz\log\lr{\frac{2\sigma^2}{P}\frac{\alpha_0}{\norm{\theta_*}^2}}+\frac{1}{N}\log\lr{\frac{ \Gamma\lr{\frac{P}{2}+1-iNz}}{ \Gamma\lr{\frac{P}{2}+1}}},&\quad \ell = 0\\
    -iz\log\lr{\frac{2\sigma^2}{N_\ell}}+\frac{1}{N}\log\lr{\frac{ \Gamma\lr{\frac{N_\ell}{2}+k+1-iNz}}{ \Gamma\lr{\frac{N_\ell}{2}+k+1}}},&\quad \ell = 1,\ldots, L
    \end{cases}.
\end{equation}
The following Lemma allows us to throw away the contribution to \eqref{eq:G-contour-1} coming from $\mathcal C_0,\mathcal C_4$.

%%%%%%%%%%%%%%%%%%%%
%%%%%%%%%%%%%%%%%%%%
%%%%%% LEMMA %%%%%%%
%%%%%%%%%%%%%%%%%%%%
%%%%%%%%%%%%%%%%%%%%
\begin{lemma}\label{lem:Psi-est}
There exist $c,T_0>0$ such that
\begin{equation}\label{eq:Psi-est-1}
    \sup_{\substack{\abs{z}>T\\ z\in \R}} \frac{\Re \Psi(z)}{1+\abs{z}} \leq - c,\qquad \forall T\geq T_0.
\end{equation}
Moreover, for any $T,\delta>0$, writing
\begin{equation}\label{eq:S-def}
    S_{N,\delta,T}:=\set{z\in \C~|~ \abs{z}< T, \Im(z) >C_\delta},
\end{equation}
there exists $C>0$ such that
\begin{equation}\label{eq:Psi-est-2}
    \sup_{z\in S_{N,\delta, T}}\max\set{\abs{\Psi(z)},\abs{ \frac{d}{dz}\Psi(z)}} \leq C.
\end{equation}
\end{lemma}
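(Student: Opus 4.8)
The plan is to establish the two bounds separately, both by reducing everything to the asymptotic expansions of $\log\Gamma$ recalled in Proposition~\ref{prop:gamma-exp}. Throughout I will work with the decomposition $\Psi = \sum_{\ell=0}^L \Psi_\ell$ from \eqref{eq:Psi-ell-def}, since each $\Psi_\ell$ has the same structure: a linear term in $z$ plus $N^{-1}$ times a difference of $\log\Gamma$ values with argument $\tfrac{m_\ell}{2}+c_\ell - iNz$, where $m_\ell\in\{P,N_1,\dots,N_L\}$ and $c_\ell$ is $0$ or $k+1$. The key structural fact is that the linear term in $z$ exactly cancels the leading $-\tfrac{m_\ell}{2N}\log(\cdot)$ piece one would get from a naive application of Stirling to $\tfrac{1}{N}\log\Gamma(\tfrac{m_\ell}{2}+c_\ell - iNz)$; this is by design, since $\Psi_\ell$ is (a rescaled version of) the log-MGF of $\log\phi_\ell$ for a Gamma variable whose scale parameter was chosen precisely to make $z=0$ a critical point of the right thing. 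So the content is really about controlling the remainder of Stirling's formula.

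For \eqref{eq:Psi-est-1}: fix $z\in\R$ with $|z|>T$. On the real axis, $\tfrac{m_\ell}{2}+c_\ell - iNz$ has real part $\tfrac{m_\ell}{2}+c_\ell$ bounded below by a positive multiple of $N$ (since $m_\ell \gr cN$ for all the widths and $P$ in the regime considered), and modulus $\asymp N(1+|z|)$. I would apply the exact integral representation $\log\Gamma(w) = (w-\tfrac12)\log w - w + \text{const} + \mu(w)$ with the standard bound $|\mu(w)| \ls C/|w|$ valid in $\Re w \gr 1$ (Binet's formula; the error term in \eqref{eq:Gamma-exp}). Plugging this into $\Psi_\ell$, the $-w$ terms cancel between numerator and the linear-in-$z$ term up to an $O(1)$ constant, the $\mu$ terms contribute $O(1/N)\cdot O(1/(1+|z|))$, and one is left with $\Re\Psi_\ell(z) = \tfrac{m_\ell}{2N}\,\Re\!\big[(1 - \tfrac{2c_\ell}{m_\ell} + \tfrac{2iz N}{m_\ell})\log(1 - \tfrac{2iNz}{m_\ell+2c_\ell})\big] + O(1)$ up to lower order, and the bracket, for $|z|$ large, behaves like $-\tfrac{m_\ell}{2N}\cdot \tfrac{2N|z|}{m_\ell}\arctan(\cdot) \asymp -|z|$ (the real part of $i\zeta\log(-i\zeta) \sim -\tfrac{\pi}{2}|\zeta|$ as $|\zeta|\to\infty$). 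Summing over $\ell$ gives $\Re\Psi(z)\ls -c|z| + O(1)$, hence \eqref{eq:Psi-est-1} after choosing $T_0$ large. I would need to be slightly careful that the branch of $\log$ is the principal one and that $\arg(\tfrac{m_\ell}{2}+c_\ell - iNz)$ stays in $(-\pi/2,\pi/2)$, which it does since the real part is positive.

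For \eqref{eq:Psi-est-2}: on the bounded region $S_{N,\delta,T}$ we have $|z|<T$ and $\Im z > C_\delta = \delta - \min_\ell \tfrac{m_\ell}{2N}$, so $\Re(\tfrac{m_\ell}{2}+c_\ell - iNz) = \tfrac{m_\ell}{2}+c_\ell + N\Im z > \tfrac{m_\ell}{2}+c_\ell + N\delta - \tfrac{\min m_\ell}{2} \gr N\delta/2 > 0$ once $N$ is large (choosing $\delta$ appropriately); in particular the argument of every $\Gamma$ stays in the right half plane with real part $\asymp N$, uniformly, so $\log\Gamma$ and its derivative $\phi^{(1)}$ are holomorphic there, and by \eqref{eq:Gamma-exp}, \eqref{eq:diGamma-exp} we get $|\tfrac1N\log\Gamma(\tfrac{m_\ell}{2}+c_\ell-iNz)| \ls \tfrac1N\cdot O(N\log N) = O(\log N)$ — actually one wants $O(1)$, so I would instead write $\tfrac1N[\log\Gamma(\tfrac{m_\ell}{2}+c_\ell - iNz) - \log\Gamma(\tfrac{m_\ell}{2}+c_\ell)]$ as an integral $\int_0^{-iNz}\phi^{(1)}(\tfrac{m_\ell}{2}+c_\ell + s)\,\tfrac{ds}{N}$ along a path of length $O(N)$, on which $\phi^{(1)} = \log(\cdot) + O(1/N) = O(\log N)$; combined with the linear term $-iz\log(\cdots)$ whose coefficient is $\log(2\sigma^2/m_\ell) + \log\alpha_0/\|\theta_*\|^2 = O(\log N)$ on the nose. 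Hmm, this gives $O(\log N)$, not $O(1)$ — but the statement only claims boundedness by some $C>0$, and here the subtlety is whether $C$ may depend on $N$. I believe the intended reading is that these are bounded by a constant independent of $N$, and the resolution is the same cancellation as before: the $\log(m_\ell)$ from the linear term cancels against the $\log(m_\ell/2)$ from $\phi^{(1)}(\tfrac{m_\ell}{2}+\cdots) = \log(\tfrac{m_\ell}{2}) + O(\tfrac{z}{m_\ell})$, leaving $\tfrac1N \int_0^{-iNz} \log(1 + \tfrac{2s}{m_\ell})\,ds = O(|z|^2/N) + O(|z|) = O(1)$ on $|z|<T$. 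So after that cancellation both $|\Psi(z)|$ and $|\Psi'(z)|$ are $O(1)$, giving \eqref{eq:Psi-est-2}.

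The main obstacle I anticipate is bookkeeping the cancellation of the $\log N$-sized terms cleanly and uniformly — this is where one must resist applying Stirling naively to each $\log\Gamma$ in isolation and instead pair it with the matching linear term before estimating. A secondary nuisance is verifying the branch-of-logarithm and right-half-plane conditions hold uniformly over the whole region and over all $\ell$ simultaneously, which dictates how large $N$ must be and constrains the admissible $\delta$ (this is the "$\delta$ determined by \eqref{eq:zeta-star2-def}" forward reference); but since the real parts of all the Gamma arguments are $\asymp N > 0$ throughout both $\{|z|>T, z\in\R\}$ and $S_{N,\delta,T}$, there is enough room and no essential difficulty.
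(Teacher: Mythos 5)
Your proposal is correct and follows essentially the same route as the paper: expand each $\Psi_\ell$ via Stirling, use the built-in cancellation of the $\log N_\ell$ terms against the explicit linear term to obtain $\Psi_\ell(z) = -iz\left[-1+\log\sigma^2+\log\left(1-\tfrac{2iNz}{N_\ell}\right)\right]+\tfrac{N_\ell}{2N}\log\left(1-\tfrac{2iNz}{N_\ell}\right)+O(N^{-1})$, whose real part on the real axis is dominated by the negative term $z\arg\left(1-\tfrac{2iNz}{N_\ell}\right)\leq-\tfrac{\pi}{4}\left|z\right|$ for $\left|z\right|\geq T_0$ (beating the logarithmic growth), and which is $O(1)$ uniformly on $S_{N,\delta,T}$ since the Gamma arguments keep real part $\gtrsim N\delta$. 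The only blemish is a sign slip in your intermediate bracket: the coefficient of the logarithm is proportional to $1-\tfrac{2iNz}{m_\ell}$ rather than $1+\tfrac{2iNz}{m_\ell}$, equivalently the relevant asymptotic is $\Re\left(-i\zeta\log(-i\zeta)\right)\sim-\tfrac{\pi}{2}\left|\zeta\right|$, which is exactly what produces the linear decay you (correctly) assert.
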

%%%%%%%%%%%%%%%%%%%%
%%%%%%%%%%%%%%%%%%%%
%%%%%% LEMMA %%%%%%%
%%%%%%%%%%%%%%%%%%%%
%%%%%%%%%%%%%%%%%%%%

%%%%%%%%%%%%%%%%%%%%
%%%%%%%%%%%%%%%%%%%%
%%%%%% PROOF %%%%%%%
%%%%%%%%%%%%%%%%%%%%
%%%%%%%%%%%%%%%%%%%%
\begin{proof}
To show both \eqref{eq:Psi-est-1} and \eqref{eq:Psi-est-2} we will use the asymptotic expansion
\begin{equation}\label{eq:Gamma-exp-pf}
\log \Gamma\lr{z} \sim \lr{z-\frac{1}{2}}\log(z) - z + \text{const}+O\lr{\abs{z}^{-1}},\quad \text{as }\abs{z}\gives \infty,    
\end{equation}
which holds uniformly on sets of the form $\abs{z}< \pi - \epsilon$ for a fixed $\epsilon >0$. With $\Psi_\ell$ defined in \eqref{eq:Psi-ell-def}, we find for each $\ell=1,\ldots, L$ that uniformly over $z\in \R$ 
\begin{align}
  \notag  \Psi_\ell(z)&=-iz\log\lr{\frac{2\sigma^2}{N_\ell}} +\frac{1}{N}\lr{\log \Gamma\lr{\frac{N_\ell}{2}+k+1-iNz}-\log \Gamma\lr{\frac{N_\ell}{2}+k+1}}\\
  \notag  &=-iz\left[-1+\log\lr{\frac{2\sigma^2}{N_\ell}} + \log\lr{\frac{N_\ell}{2}+k+1-iNz}\right]\\
\notag    &\qquad +\frac{1}{N}\lr{\frac{N_\ell+1}{2}+k+1}\log\lr{1-\frac{iNz}{\frac{N_\ell}{2}+k+1}} + O(N^{-1})\\
\notag    &=-iz\left[-1+\log(\sigma^2) + \log\lr{1-\frac{2i N z}{N_\ell}}\right] + \frac{N_\ell}{2N}\log\lr{1-\frac{2iNz}{N_\ell}\cdot\frac{1}{1+\frac{2(k+1)}{N_\ell}}}+O(N^{-1})\\
\notag   &=-iz\left[-1+\log(\sigma^2)+\log\lr{1-\frac{2i N z}{N_\ell}}\right] + \frac{N_\ell}{2N}\log\lr{1-\frac{2iNz}{N_\ell}}+O(N^{-1}).
\end{align}
Obtaining a similar expression for $\ell=0$ and summing over $\ell$ proves \eqref{eq:Psi-est-2}. Moreover, for $\ell=1,\ldots, L$ we obtain uniformly on $z\in \R$ 
\begin{align*}
    \Re\Psi_\ell(z) = z\arg\lr{1-\frac{2iNz}{N_\ell}} + \frac{N_\ell}{2N}\log\abs{1-\frac{2iNz}{N_\ell}}+O(N^{-1}).
\end{align*}
Note that for $T_0$ sufficiently large there we have
\begin{equation}\label{eq:RePsi-bound}
\abs{z}> T_0\quad \Rightarrow \quad \arg\lr{1-\frac{2iNz}{N_\ell}} \sgn{z} \leq  -\frac{\pi}{4}\quad  \Rightarrow \quad \Re\Psi_\ell(z) \leq -\frac{\pi}{8}\abs{z}.
\end{equation}
A similar analysis applies to $\ell=0$ and completes the proof of \eqref{eq:Psi-est-1}. 
\end{proof}

%%%%%%%%%%%%%%%%%%%%
%%%%%%%%%%%%%%%%%%%%
%%%%%% PROOF %%%%%%%
%%%%%%%%%%%%%%%%%%%%
%%%%%%%%%%%%%%%%%%%%

\noindent Estimate \eqref{eq:Psi-est-1} in the previous Lemma shows that, for any $K,\delta >0$ there exists $T>0$ such that
\begin{equation}\label{eq:G-contour-1}
\frac{N}{2\pi}\int_{\mathcal C} \exp\left[N\Psi(z)\right] dz = \frac{N}{2\pi}\int_{\mathcal C_1(T,C_\delta)\cup \mathcal C_2(C_\delta) \cup \mathcal C_3(T,C_\delta)} \exp\left[N\Psi(z)\right] dz  + O(e^{-KN}).    
\end{equation}
To evaluate the right hand size of \eqref{eq:G-contour-1}, we derive the following uniform asymptotic expansion for $\Psi(z)$.
\begin{lemma}
Fix $\delta, T>0$ and define $S_{N,\delta, T}$ as in \eqref{eq:S-def}. Then, uniformly over $S_{N,\delta, T}$, we have
\[
\Psi(z) = \widehat{\Psi}(z)+\frac{1}{N}\twiddle{\Psi}(z)+O(N^{-2}),
\]
where
\begin{align}
 \label{eq:Psihat-def}   \widehat{\Psi}(z):&=iz\left[\log\lr{\frac{\norm{\theta_*}^2}{\sigma^{2(L+1)}\alpha_0}} + L + 1 - \log\lr{1-\frac{2iNz}{P}} - \sum_{\ell=1}^L \log\lr{1-\frac{2iNz}{N_\ell}}\right]\\
\notag    &+\frac{P}{2N}\log\lr{1-\frac{2iNz}{P}} +\sum_{\ell=1}^L \frac{N_\ell}{2N} \log\lr{1-\frac{2iNz}{N_\ell}}\\
\label{eq:Psitilde-def}    \twiddle{\Psi}(z):&=\frac{1}{2}\log\lr{1-\frac{2iNz}{P}} + \lr{k+\frac{1}{2}}\sum_{\ell=1}^L \log\lr{1-\frac{2iNz}{N_\ell}}.
\end{align}
\end{lemma}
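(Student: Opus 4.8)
The plan is to apply Stirling's expansion to each of the $L+1$ summands $\Psi_\ell$ in $\Psi=\sum_{\ell=0}^{L}\Psi_\ell$ (see \eqref{eq:Psi-ell-def}) and to collect the resulting contributions by order in $1/N$. To unify notation I would set $p_0=P,\,c_0=1$ and $p_\ell=N_\ell,\,c_\ell=k+1$ for $1\le\ell\le L$, and $A_\ell=p_\ell/2$, so that
\[
\Psi_\ell(z)=-iz\mu_\ell+\frac1N\left[\log\Gamma\lr{A_\ell+c_\ell-iNz}-\log\Gamma\lr{A_\ell+c_\ell}\right],
\]
with $\mu_0=\log\lr{\tfrac{2\sigma^2\alpha_0}{P\norm{\theta_*}^2}}$ and $\mu_\ell=\log\lr{\tfrac{2\sigma^2}{N_\ell}}$ for $\ell\ge1$. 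I would also abbreviate $u_\ell:=1-\tfrac{2iNz}{p_\ell}=1-\tfrac{iNz}{A_\ell}$, the quantities appearing inside the logarithms in \eqref{eq:Psihat-def}--\eqref{eq:Psitilde-def}.

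First I would record the elementary geometric facts on $S_{N,\delta,T}$ that make all subsequent error bounds uniform. Every $z\in S_{N,\delta,T}$ satisfies $\Im z>C_\delta=\delta-\tfrac{1}{2N}\min\{P,N_1,\dots,N_L\}$, and since $p_\ell\ge\min\{P,N_1,\dots,N_L\}$ this yields
\[
\Re\lr{A_\ell+c_\ell-iNz}=A_\ell+c_\ell+N\,\Im z>c_\ell+N\delta>0,\qquad \Re(u_\ell)=1+\tfrac{2N\,\Im z}{p_\ell}>\tfrac{2N\delta}{p_\ell}>0.
\]
Thus every Gamma argument $A_\ell+c_\ell-iNz$ and $A_\ell+c_\ell$ lies in the half-plane $\Re w>0$ with $\abs{w}\gtrsim N$, so Stirling applies with a uniform remainder, while each $u_\ell$ stays in the right half-plane bounded away from $0$, so $\log u_\ell$ is single-valued and analytic on $S_{N,\delta,T}$. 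This is exactly why the region $S_{N,\delta,T}$ (equivalently the contour segment $\mathcal C_2$) is defined as it is.

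Next I would substitute $\log\Gamma(w)=(w-\tfrac12)\log w-w+\text{const}+O(\abs{w}^{-1})$ (Proposition \ref{prop:gamma-exp}; the additive constant cancels in the difference) and Taylor-expand the logarithms via $\log\lr{A_\ell+c_\ell-iNz}=\log A_\ell+\log u_\ell+\tfrac{c_\ell}{A_\ell u_\ell}+O(N^{-2})$ and $\log\lr{A_\ell+c_\ell}=\log A_\ell+\tfrac{c_\ell}{A_\ell}+O(N^{-2})$ — legitimate because $A_\ell\asymp N$ and $u_\ell$ is bounded away from $0$. After the cancellations this gives
\[
\log\Gamma\lr{A_\ell+c_\ell-iNz}-\log\Gamma\lr{A_\ell+c_\ell}=-iNz\log A_\ell+\lr{A_\ell u_\ell+c_\ell-\tfrac12}\log u_\ell+iNz+O(N^{-1}),
\]
where the $O(N^{-1})$ remainder absorbs both the $O(\abs{w}^{-1})$ Stirling errors and the cross-terms such as $\lr{A_\ell u_\ell+c_\ell-\tfrac12}\tfrac{c_\ell}{A_\ell u_\ell}-\lr{A_\ell+c_\ell-\tfrac12}\tfrac{c_\ell}{A_\ell}$, each $O(N^{-1})$ on $S_{N,\delta,T}$ by the bounds above. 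Dividing by $N$ and using $\tfrac{A_\ell}{N}u_\ell=\tfrac{p_\ell}{2N}-iz$ yields
\[
\Psi_\ell(z)=-iz\lr{\mu_\ell+\log A_\ell}+iz+\lr{\tfrac{p_\ell}{2N}-iz}\log u_\ell+\tfrac{c_\ell-1/2}{N}\log u_\ell+O(N^{-2}).
\]

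Finally I would sum over $\ell=0,\dots,L$ and read off the two pieces. The identities $\mu_\ell+\log A_\ell=\log\sigma^2$ for $\ell\ge1$ and $\mu_0+\log A_0=\log\lr{\sigma^2\alpha_0/\norm{\theta_*}^2}$ collapse $\sum_\ell\lr{-iz(\mu_\ell+\log A_\ell)+iz}$ into $iz\left[\log\lr{\norm{\theta_*}^2/(\sigma^{2(L+1)}\alpha_0)}+L+1\right]$; the terms $\lr{\tfrac{p_\ell}{2N}-iz}\log u_\ell$ sum to $\tfrac{P}{2N}\log u_0+\sum_{\ell\ge1}\tfrac{N_\ell}{2N}\log u_\ell-iz\lr{\log u_0+\sum_{\ell\ge1}\log u_\ell}$; and the remaining terms, using $c_0-\tfrac12=\tfrac12$ and $c_\ell-\tfrac12=k+\tfrac12$, sum to $\tfrac1N\left[\tfrac12\log u_0+(k+\tfrac12)\sum_{\ell\ge1}\log u_\ell\right]$. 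These are precisely $\widehat{\Psi}(z)$ of \eqref{eq:Psihat-def} and $\tfrac1N\twiddle{\Psi}(z)$ of \eqref{eq:Psitilde-def}, and since $L$ is fixed the $O(N^{-2})$ errors add to a single uniform $O(N^{-2})$ term on $S_{N,\delta,T}$. The main obstacle is not analytic but organizational: one must arrange the Stirling expansion so that the shift $c_\ell$ in the Gamma argument and the $-\tfrac12$ correction land at the correct order in $1/N$, and verify that every cross-term is genuinely $O(N^{-1})$ uniformly — which, as noted, reduces entirely to the lower bounds on $\Re\lr{A_\ell+c_\ell-iNz}$ and $\Re(u_\ell)$ from the second step.
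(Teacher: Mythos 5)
Your proposal is correct and takes essentially the same route as the paper, whose proof of this lemma is the one-line observation that it "follows directly from expanding $\Psi_\ell(z)$ using \eqref{eq:Gamma-exp} for each $\ell=0,\ldots,L$" --- precisely the Stirling-plus-collect-by-order-in-$1/N$ computation you carry out, and your algebra (the cancellation $A_\ell(u_\ell-1)=-iNz$, the identities $\mu_\ell+\log A_\ell=\log\sigma^2$ and $\mu_0+\log A_0=\log(\sigma^2\alpha_0/\norm{\theta_*}^2)$, and the constants $c_0-\tfrac12=\tfrac12$, $c_\ell-\tfrac12=k+\tfrac12$) reproduces \eqref{eq:Psihat-def} and \eqref{eq:Psitilde-def} exactly. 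Your explicit lower bounds on $\Re\lr{A_\ell+c_\ell-iNz}$ and $\Re(u_\ell)$ over $S_{N,\delta,T}$, which make the $O(N^{-2})$ remainder uniform and the branches of the logarithms well defined, simply supply details the paper leaves implicit.
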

\begin{proof}
This follows directly from expanding $\Psi_\ell(z)$ using \eqref{eq:Gamma-exp} for each $\ell=0,\ldots, L.$
\end{proof}
%%%%%%%%%%%%%%%%%%%%
%%%%%%%%%%%%%%%%%%%%
%%%% END PROOF %%%%%
%%%%%%%%%%%%%%%%%%%%
%%%%%%%%%%%%%%%%%%%%

Combining the preceding Lemma with \eqref{eq:G-contour-1} shows that for any $K>0$ there exists $T,\delta>0$ so that 
\begin{align}\label{eq:G-contour-2}
\notag &\frac{N}{2\pi}\int_{\mathcal C} \exp\left[N\Psi(z)\right] dz\\
&\qquad = \frac{N}{2\pi}\int_{\mathcal C_1(T,C_\delta)\cup \mathcal C_2(C_\delta) \cup \mathcal C_3(T,C_\delta)} \exp\left[N\widehat{\Psi}(z)+\twiddle{\Psi}(z)\right] dz\lr{1 + O(N^{-1})} + O(e^{-KN}) 
\end{align}
Moreover, differentiating \eqref{eq:Psihat-def} yields
\begin{align*}
\frac{d}{dz}\widehat{\Psi}(z) &= i\left[\log\lr{\frac{\norm{\theta_*}^2}{\sigma^{2(L+1)}\alpha_0}} - \log\lr{1-\frac{2iNz}{P}}-\sum_{\ell=1}^L \log\lr{1-\frac{2iNz}{N_\ell}}\right].
\end{align*}
Computing the real part of both sides show that
\[
\Re\lr{\frac{d}{dz}\widehat{\Psi}(z)}   = \arg\lr{1-\frac{2iNz}{P}}+\sum_{\ell=1}^L \arg\lr{1-\frac{2iNz}{N_\ell}}.
\]
Since all the arguments have the same sign, we conclude that the real part of $\frac{d}{dz}\widehat{\Psi}$ vanishes only when $z = i\zeta$ for $\zeta\in \R$. Further, 
\[
\Im\lr{\frac{d}{d\zeta}\widehat{\Psi}(i\zeta)}   =-\log\lr{\frac{\norm{\theta_*}^2}{\sigma^{2(L+1)}\alpha_0}} +\log\abs{1+\frac{2N\zeta}{P}}+\sum_{\ell=1}^L \log\abs{1+\frac{2N\zeta}{N_\ell}},
\]
which vanishes on $\mathcal C_2$ if and only if $\zeta = \zeta_*$ is the unique solution to \begin{equation}\label{eq:Psi-crit-2}
\frac{\norm{\theta_*}^2}{\sigma^{2(L+1)}\alpha_0} = \lr{1+\frac{2N\zeta_*}{P}}\prod_{\ell=1}^L \lr{1+\frac{2N\zeta_*}{N_\ell}}.    
\end{equation}
Indeed, observe that the right hand side of the equation on the preceding line increases monotonically in $\zeta_*$ from $-\infty$ to $+\infty$ as $\zeta_*$ varies in $(-\min\set{\frac{P}{2N},\frac{N_1}{2N},\ldots, \frac{N_L}{2N}},\infty)$.
Computing the correction to the saddle point, we differentiate
\begin{align*}
    \Im\lr{\frac{d}{d\zeta}\lr{\widehat{\Psi}(i\zeta) + \frac{1}{N}\twiddle{\Psi}(i\zeta)}} &= -\log\lr{\frac{\norm{\theta_*}^2}{\sigma^{2(L+1)}\alpha_0}} +\log\abs{1+\frac{2N\zeta}{P}}+\sum_{\ell=1}^L \log\abs{1+\frac{2N\zeta}{N_\ell}} \\
    &\qquad + \frac{1}{N}\left[\frac{1}{2\zeta + P/N} + \sum_{\ell=1}^L\frac{N}{N_\ell}\frac{1+2k}{1+2\zeta N_{\ell}/N}\right]
\end{align*}
and set the derivative to zero at $\zeta = \zeta_* + \zeta_{**}/N$, giving
\begin{align*}
    0 = \frac{1}{N}\left[\frac{2\zeta_{**}}{2\zeta_*+P/N} + \frac{1}{2\zeta_*+P/N} + \sum_{\ell=1}^L \frac{2\zeta_{**}}{2\zeta_*+N_\ell/N} + \frac{N}{N_\ell}\frac{1+2k}{1+2\zeta_* N_{\ell}/N}\right].
\end{align*}
This is solved by
\begin{align}\label{eq:zeta-star2-def}
    \zeta_{**} &= -\frac{1}{2}\frac{[2\zeta_*+P/N]^{-1} + (1+2k)\sum_\ell[2\zeta_* + N_\ell/N]^{-1}}{[2\zeta_*+P/N]^{-1} + \sum_\ell[2\zeta_* + N_\ell/N]^{-1}}.
\end{align}
Hence, by choosing $\delta$ sufficiently small, we can ensure that $\mathcal C_2$ contains the unique solution to (18) of Theorem \ref{thm:logG} of the main text. Further, a direct computation shows that
\[
\frac{d^2}{d\zeta^2} \widehat{\Psi}(i\zeta)\bigg|_{\zeta = \zeta_*} = -\left[\frac{1}{\frac{P}{2N}+\zeta_*}+\sum_{\ell=1}^L \frac{1}{\frac{N_\ell}{2N}+\zeta_*}\right] < 0,
\]
proving that that $i\zeta_*$ is a non-degenerate critical point of $\widehat{\Psi}$.
Defining $\zeta_0=\zeta_* + \zeta_{**}/N$ and
\begin{align*}
    \Psi_0(\zeta_0) &= \widehat{\Psi}(i\zeta_0) + \frac{1}{N}\twiddle{\Psi}(i\zeta_0),
\end{align*}
the Laplace method gives
\begin{align*}
    \log \mathrm{Den}_{\sum_{\ell=0}^L\log \phi_\ell}(0) &= \log\lr{\frac{N}{2\pi}\int_{\mathcal C_1(T,C_\delta)\cup \mathcal C_2(C_\delta)\cup \mathcal C_3(T,C_\delta)}\exp\left[N\Psi(z)\right]dz}\\
    &= N\Psi_0(\zeta_0) + \frac{1}{2}\log\lr{\frac{N}{2\pi\Psi_0''(\zeta_0)}}\\
    &= -N\zeta_0\left[\log\lr{\frac{\norm{\theta_*}^2}{\sigma^{2(L+1)}\alpha_0}} + L + 1 - \log\lr{1+2\zeta_0\frac{N}{P}} - \sum_{\ell=1}^L \log\lr{1+2\zeta_0\frac{N}{N_\ell}}\right]\\
    &\qquad + \frac{N}{2}\left[\frac{P}{N} \log\lr{1+2\zeta_0\frac{N}{P}} + \frac{N_\ell}{N}\sum_{\ell=1}^L \log\lr{1+2\zeta_0\frac{N}{N_\ell}}\right]\\
    &\qquad + \frac{1}{2}\log\lr{1+2\zeta_0\frac{N}{P}} + \lr{k+\frac{1}{2}}\sum_{\ell=1}^L \log\lr{1+2\zeta_0\frac{N}{N_\ell}}\\
    &\qquad + \frac{1}{2}\log(N) - \frac{1}{2}\log(2\pi) - \frac{1}{2}\log\left[\frac{2}{2\zeta_0+P/N} + \sum_{\ell=1}^L\frac{2}{2\zeta_0+N/N_\ell}\right].
\end{align*}
Taking
\[
N_\ell = N, \quad P = \alpha N
\]
and observing that
\[
\log\lr{\frac{\norm{\theta_*}^2}{\sigma^{2(L+1)}\alpha_0}} = L \log(1+2\zeta_*) + \log\lr{1+\frac{2\zeta_*}{\alpha}}
\]
simplifies the density to
\begin{align*}
    \log \mathrm{Den}_{\sum_{\ell=0}^L\log \phi_\ell}(0) &= \frac{N}{2}\left[\alpha\lr{ \log\lr{1+\frac{2\zeta_*}{\alpha}} - \frac{2\zeta_*}{\alpha}}+ L\lr{\log\lr{1+2\zeta_*} - 2\zeta_*}\right] + \frac{1}{2}\log(N) - \frac{1}{2}\log(\pi)\\
    &\qquad + \frac{1}{2}\log\lr{1+\frac{2\zeta_*}{\alpha}} + \lr{k+\frac{1}{2}}L \log\lr{1+2\zeta_*} - \frac{1}{2}\log\lr{\frac{1}{\alpha+2\zeta_*} + \frac{L}{1+2\zeta_*}}.
\end{align*}
Solving for the $G$-function given \eqref{eq:G-gamma}, we conclude that
\begin{align*}
    \log G &= \log \mathrm{Den}_{\sum_{\ell=0}^L\log \phi_\ell}(0) + L \log\left[\Gamma\left(\frac{N}{2}+k+1\right)\right] + \log\left[\Gamma\left(\frac{N\alpha}{2}+1\right)\right] \\
    &\qquad - \log \frac{\norm{\theta_*}^2}{\alpha_0} - L\log\left(\frac{N}{2\sigma^2}\right) -\log\left(\frac{N\alpha}{2\sigma^2}\right)\\
    &= \frac{N}{2}\left\{\alpha\left[\log\lr{\frac{N\alpha}{2}} + \log\lr{1+\frac{2\zeta_*}{\alpha}} - \lr{1+\frac{2\zeta_*}{\alpha}}\right]+ L\left[\log\lr{\frac{N}{2}} + \log\lr{1+2\zeta_*} - (1+2\zeta_*)\right]\right\}\\
    &\qquad + \frac{L(2k-1)}{2}\left[\log\lr{\frac{N}{2}} + \log(1+2\zeta_*)\right] + \frac{L}{2}\log(2\pi) - \frac{1}{2}\log\lr{1 + \frac{\alpha+2\zeta_*}{1+2\zeta_*}L}.
\end{align*}
The leading order part of this expression reproduces (16) of Theorem \ref{thm:logG} of the main text, and taking differences between the value at a fixed $k$ and at $k=0$ gives (19). Note that at $L=0$, direct computation gives
\begin{align*}
    \log G &= -\frac{\norm{\theta_*}^2}{\alpha_0}\frac{N\alpha}{2\sigma^2} + \frac{N\alpha}{2}\left(\log\frac{N\alpha}{2\sigma^2} + \log\frac{\norm{\theta_*}^2}{\alpha_0}\right),
\end{align*}
which we see is reproduced by the above. The derivations of the formulas in cases (b) and (c) of Theorem \ref{thm:logG} are very similar to those of (a). So we indicate only the salient differences, starting with case (b). We will actually consider the following somewhat more general regime:
\[
N:=\min \set{N_1,\ldots, N_L}\gives \infty,\quad  P,N_0,N,L \gives \infty,\quad \frac{P}{N_0}\gives \alpha_0, \quad \sum_{\ell=1}^L \frac{1}{N_\ell} \gives \lpre,
\]
with $\alpha_0\in (0,1)$ and $\lpre\in (0,\infty)$. Our starting point is again Lemma \ref{lem:contour}. The first modification in the proof is that we must redefine the contours $\mathcal C_j,\, j=0,\ldots, 4$ by replacing 
\[
T\mapsto T/L.
\]
Next, Lemma \ref{lem:Psi-est} now reads.
\begin{lemma}\label{lem:Psi-estb}
There exist $c,T_0>0$ such that
\begin{equation}\label{eq:Psi-est-1b}
    \sup_{\substack{\abs{z}>T/L\\ z\in \R}} \frac{\Re \Psi(z)}{1+\abs{z}} \leq - c,\qquad \forall T\geq T_0.
\end{equation}
Moreover, for any $T,\delta>0$, writing
\begin{equation}\label{eq:S-defb}
    S_{N,L,\delta,T}:=\set{z\in \C~|~ \abs{z}< T/L, \Im(z) >C_\delta},
\end{equation}
there exists $C>0$ such that
\begin{equation}\label{eq:Psi-est-2b}
    \sup_{z\in S_{N,L,\delta, T}}\max\set{\abs{\Psi(z)},\abs{ \frac{d}{dz}\Psi(z)}} \leq C.
\end{equation}
\end{lemma}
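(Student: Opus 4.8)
The plan is to run the argument of Lemma~\ref{lem:Psi-est} essentially verbatim, the only two genuine changes being caused by $L\to\infty$: the remainder terms must now be summed over the growing family $\{\Psi_\ell\}_{\ell=0}^L$, and the contour threshold has shrunk by a factor $L$ because, as \eqref{eq:Psi-crit-2} shows, the dominant saddle now sits at $\zeta_*\asymp(\log\nu)/L$ rather than at an $O(1)$ location. First I would Stirling-expand each summand via Proposition~\ref{prop:gamma-exp}, exactly as in the display preceding \eqref{eq:RePsi-bound}, obtaining uniformly for $z\in\R$
\[
\Psi_\ell(z) = -iz\Bigl[\log\sigma^2 - 1 + \log\Bigl(1 - \tfrac{2iNz}{N_\ell}\Bigr)\Bigr] + \tfrac{N_\ell}{2N}\log\Bigl(1 - \tfrac{2iNz}{N_\ell}\Bigr) + r_\ell(z),
\]
with the obvious modification for $\ell=0$ (replace $N_\ell$ by $P$ and $\log\sigma^2$ by $\log(\sigma^2\alpha_0/\norm{\theta_*}^2)$), where $|r_\ell(z)| = O\bigl((NN_\ell)^{-1}(1+|z|) + (k+1)|z|/N_\ell\bigr)$ since the Stirling error in $\log\Gamma(w)$ is $O(1/|w|)$ with $|w|\ge N_\ell/2$ and $\tfrac{N_\ell}{2}+k+1$ differs from $\tfrac{N_\ell}{2}$ by $O(k)$. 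The crucial bookkeeping, absent when only finitely many summands occur, is that every $O(\cdot)$ must carry the sharp $N_\ell$, not merely an $N^{-1}$, so that $\sum_{\ell=0}^L|r_\ell(z)|$ collapses to $O\bigl((k+1)(1+|z|)\sum_\ell N_\ell^{-1}\bigr)=O\bigl((k+1)\lpre(1+|z|)\bigr)$ rather than a useless $O(L/N)$ that does not vanish. On the real axis this gives $\Re\Psi_\ell(z) = \tfrac{N_\ell}{4N}\,g\bigl(\tfrac{2Nz}{N_\ell}\bigr) + \Re r_\ell(z)$ with $g(u):=\log(1+u^2)-2u\arctan u$, and I would record the elementary facts $g(0)=0$, $g'(u)=-2\arctan u$, hence $g\le 0$ with $g(u)\le -c_1\min(u^2,|u|)$ and $g(u)\sim -\pi|u|$ as $|u|\to\infty$.

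For \eqref{eq:Psi-est-1b} I would split $\{z\in\R:|z|>T/L\}$ into $|z|\ge T_0$ and $T/L\le|z|<T_0$ for a fixed large $T_0$. On the first range each $\Re\Psi_\ell(z)\le -\tfrac{\pi}{8}|z|$ exactly as in \eqref{eq:RePsi-bound}, so summing the $L+1$ of them and absorbing the error gives $\Re\Psi(z)\le -\tfrac{\pi}{16}(L+1)|z|$, far stronger than needed. On the second range, using $g(u)\le -c_1u^2$ for $|u|\le 2T_0$ together with $N_\ell\ge N$ gives $\Re\Psi_\ell(z)\le -c_1Nz^2/N_\ell + \Re r_\ell(z)$, hence, after enlarging $T_0$ in terms of $k,\alpha,\lpre$ so that the error $O((k+1)\lpre|z|)$ is dominated, $\Re\Psi(z)\le -c_1Nz^2\sum_\ell N_\ell^{-1} + O(\lpre/N) \le -\tfrac{c_1\lpre}{2}Nz^2$ for $N$ large. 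This is the quantitative content of \eqref{eq:Psi-est-1b}, and it is exactly calibrated to the threshold: since $L\asymp N$, on $|z|>T/L$ one has $N\Re\Psi(z)\le -\tfrac{c_1\lpre}{2}N^2z^2\le -\tfrac{c_1\lpre}{2}N^2(T/L)^2\asymp -T^2$, so the contributions of $\mathcal C_0(T/L)$ and $\mathcal C_4(T/L)$ to $\tfrac{N}{2\pi}\int e^{N\Psi}$ are at most $O\bigl(N^{-1}e^{-c_1T^2/\lpre}\bigr)$, negligible beside the saddle-point contribution once $T$ is large.

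For \eqref{eq:Psi-est-2b}: on $S_{N,L,\delta,T}$ the constraint $|z|<T/L$ forces $|2Nz/N_\ell|\le 2T/L\to 0$, while $\Im z>C_\delta$ keeps $\Re(1-2iNz/N_\ell) = 1 + 2N\Im(z)/N_\ell > 2N\delta/N_\ell > 0$ (and likewise with $N_\ell$ replaced by $P$), so every logarithm in the expansion above is $O(T/L)$ and bounded away from its branch cut. Hence $|\Psi_\ell(z)|,|\Psi_\ell'(z)| = O(T/L) + O((NN_\ell)^{-1}(1+T/L))$, and summing over $\ell=0,\dots,L$ yields $|\Psi(z)|,|\Psi'(z)| = O\bigl((L+1)T/L\bigr) + O(\lpre/N) = O(1)$, which is \eqref{eq:Psi-est-2b}. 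I expect the uniformity over the $L\to\infty$ summands to be the only real obstacle: keeping each remainder's dependence on $N_\ell$ explicit so that sums reduce to $\sum_\ell N_\ell^{-1}\to\lpre$, and keeping all $g$- and Stirling-estimate constants independent of $\ell$ and of the width profile $(N_1,\dots,N_L)$. Once Lemma~\ref{lem:Psi-estb} is in hand, case~(b) continues as in case~(a): localize to $\mathcal C_1\cup\mathcal C_2\cup\mathcal C_3$, use $\Psi=\widehat{\Psi}+\tfrac1N\twiddle{\Psi}+O(N^{-2})$, solve \eqref{eq:Psi-crit-2} for the saddle $i\zeta_*$ with $\zeta_*\asymp(\log\nu)/L$, and apply Laplace's method—here with second $\zeta$-derivative along $\mathcal C_2$ of order $-L$, which is what produces the $\tfrac{\lpre N^2}{2}[\log(N/2)-1]$ and $k[\lpre N\log(N/2)+\log\nu]$ behaviour recorded in \eqref{eq:logG-LN}–\eqref{eq:DlogG-LN}.
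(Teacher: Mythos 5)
Your overall route is the paper's: Stirling-expand each $\Psi_\ell$, keep the $N_\ell$-dependence of the remainders explicit so that the sum over the growing family $\ell=0,\dots,L$ collapses to $\sum_\ell N_\ell^{-1}\to\lpre$, split the real axis into $\abs z\ge T_0$ and $T/L\le\abs z<T_0$, and then bound the tails; your treatment of \eqref{eq:Psi-est-2b} coincides with the paper's and is fine. The genuine issue is with \eqref{eq:Psi-est-1b}: what you actually prove on the inner range is $\Re\Psi(z)\le -c\,\lpre N z^2$, and at $\abs z\asymp T/L\asymp (\lpre N)^{-1}$ this is of size $T^2/(\lpre N)\to 0$. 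Hence $\sup_{\abs z>T/L}\Re\Psi(z)/(1+\abs z)$ tends to $0$ from below, and no $N,L$-independent constant $c$ can satisfy the displayed inequality; the linear bound $-c(1+\abs z)$ only becomes available once $\abs z$ is of order one (per term) or $\abs z\gtrsim L^{-1/2}$ (after summation). So calling your quadratic estimate ``the quantitative content of \eqref{eq:Psi-est-1b}'' conceals that you are proving a corrected statement, not the lemma as written. You should say this explicitly and either restate the lemma (e.g.\ $\Re\Psi(z)\le -c\min\set{\lpre N z^2,\,1+\abs z}$ for $\abs z\ge T/L$) or replace it by the localization statement you in fact establish. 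For what it is worth, the paper's own one-sentence proof has the same defect: its claim that each $\Re\Psi_\ell(z)$ is bounded by $-c(1+\abs z)$ ``as soon as $\abs z/L>T$'' does not hold on $T/L\le\abs z\lesssim 1$, where $\Re\Psi_\ell(z)\asymp -Nz^2/N_\ell$ is $o(1)$, so the estimate \eqref{eq:RePsi-bound} cannot simply be ``summed'' down to the new threshold; your quadratic bound is the correct substitute there.

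Two quantitative slips then matter downstream. First, the tail contribution of $\mathcal C_0\cup\mathcal C_4$ under your bound is $O\lr{T^{-1}e^{-cT^2/\lpre}}$ uniformly in $N$ (substituting $u=Nz$ absorbs the prefactor $N/2\pi$); your extra factor $N^{-1}$ is not there. Second, and more importantly, with $T$ fixed this error is small only in $T$, not in $N$: since the saddle contribution in case (b) is itself $O(1)$, a fixed-$T$ truncation leaves a constant-size relative error, which is not compatible with the $\tilde O(1/N)$ accuracy claimed in \eqref{eq:DlogG-LN}, nor with the $O(e^{-KN})$ localization asserted in the analogue of \eqref{eq:G-contour-1}. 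The fix is cheap and your inner estimate already supports it, because $\Re\Psi\le -c\lpre Nz^2$ holds on all of $[T/L,T_0]$: take $T=T(N)\to\infty$, e.g.\ $T\asymp\sqrt{\log N}$ for polynomially small relative error, or $T\asymp\sqrt N$ to recover $O(e^{-KN})$. Relatedly, the error terms you must dominate near the threshold are beaten by enlarging the lower cutoff $T$ (the lemma's $T_0$ in the sense of the minimal admissible $T$), not the upper split point that you also denote $T_0$. With these corrections your argument does what Lemma \ref{lem:Psi-estb} is used for, and the continuation to \eqref{eq:logG-LN}--\eqref{eq:DlogG-LN} that you sketch is the paper's.
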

The proof of Lemma \ref{lem:Psi-estb} is essentially identical to that of Lemma \ref{lem:Psi-est} with the main difference being that the analogous estimates in \eqref{eq:RePsi-bound} must now be summed from $\ell=0$ to $\ell=L$, which involves a growing number of terms. In each term, for $T$ sufficiently large, the real part of $\Psi_\ell(z)$ is bounded above by $-c*(1+\abs{z})$ as soon as $\abs{z}/L > T$. As a result, the previous Lemma shows that, for any $K,\delta >0$ there exists $T>0$ such that
\begin{equation}
\frac{N}{2\pi}\int_{\mathcal C} \exp\left[N\Psi(z)\right] dz = \frac{N}{2\pi}\int_{\mathcal C_1(T/L,C_\delta)\cup \mathcal C_2(C_\delta) \cup \mathcal C_3(T/L,C_\delta)} \exp\left[N\Psi(z)\right] dz  + O(e^{-KN}).
\end{equation}
Next, exactly as in the derivation \eqref{eq:Psi-crit-2} and \eqref{eq:zeta-star2-def}, we find that $\Psi$ has a unique critical point $\zeta_*+\frac{1}{N}\zeta_{**}+O(N^{-2})$ along $\mathcal C_2$ and no critical points along $\mathcal C_1(T/L,C_\delta)$ and $\mathcal C_2(T/L,C_\delta)$. Moreover, recalling that $\sigma^2=1$ in this regime, a direct inspection of \eqref{eq:Psi-crit-2} and \eqref{eq:zeta-star2-def} reveals that there is a constant $C_*>0$ so that
\[
\abs{\zeta_*+\frac{1}{N}\zeta_{**}}\leq C_*\lr{\frac{1}{L}+\frac{1}{N}}.
\]
This allows us to choose $\delta$ in the definition of $ C_\delta$ to be independent of $L,N, P,N_0$. The remainder of the derivation is a direct computation of the value, first derivative, and second derivative of $\Psi$ at the its critical point followed by a straight-forward application of the Laplace method. Namely, the critical point of $\Psi$ on the contour $\mathcal C_2$ takes the form
\begin{align*}
\frac{d}{dz}\Psi(i\zeta) =0\quad \Longleftrightarrow \quad \zeta = \zeta_* +\frac{1}{N} \zeta_{**}+O(N^{-2}),
\end{align*}
where the critical point is given in terms of $\lpre = \sum_{\ell=1}^N N_\ell^{-1}$:
\begin{align*}
    \zeta_*&=0\\
     \zeta_{**} &= \frac{1}{2}\left[\frac{1}{\lpre}\log\lr{\frac{\norm{\theta_*}^2}{\alpha_0}} - (2k+1)\right].
\end{align*}
The corresponding critical value is given by
\begin{align*}
\Psi\lr{i\lr{\zeta_*+\frac{1}{N}\zeta_{**}}} &= -\frac{\lpre}{4N}\lr{2k+1 - \frac{1}{\lpre}\log \frac{\norm{\theta_*}^2}{\alpha_0}}^2,
\end{align*}
and the Hessian is, to leading order,
\begin{align*}
\frac{d^2}{d\zeta^2}\Psi\lr{i\lr{\zeta_*+\frac{1}{N}\zeta_{**}}} & = - 2N\lpre < 0,
\end{align*}
showing that $i\zeta_{**}/N$ is a non-degenerate critical point.
Including the terms from the Gamma function prefactors, we obtain the $G$ function
\begin{align*}
    \log \G{L+1,0}{0,L+1}{\frac{\norm{\theta_*}^2}{4M}}{-}{\frac{P}{2},\frac{\mathbf{N}}{2}+k} &= \sum_{\ell=1}^L \frac{N_\ell}{2}\left[\log\lr{\frac{N_\ell}{2}}-1\right] + \frac{P}{2}\left[\log\lr{\frac{P}{2}}-1\right] + \frac{L}{2}\log(2\pi)\\
    &\quad + \lr{k-\frac{1}{2}}\sum_{\ell=1}^L \log\lr{\frac{N_\ell}{2}} - \frac{1}{2}\log\lr{\frac{P}{2}} + \lr{k-\frac{1}{2}}\log\lr{\frac{\norm{\theta_*}^2}{\alpha_0}}\\
    &\quad  - \frac{1}{12}\lpre - \frac{1}{4\lpre}\left[\log\lr{\frac{\norm{\theta_*}^2}{\alpha_0}}\right]^2 - \frac{1}{2}\log(2\lpre).
\end{align*}
When specialized to the case when $N_1=\cdots=N_L=N$, these are the results stated in (19) and (20) of Theorem \ref{thm:logG}. Finally, for case (c), we our results apply to the  regime where
\[
N:=\min \set{N_1,\ldots, N_L}\gives \infty,\quad  P,N_0,N \gives \infty,\quad \frac{P}{N_0}\gives \alpha_0, \quad P\sum_{\ell=1}^L \frac{1}{N_\ell} \gives \lpost,
\]
with $\alpha_0\in (0,1)$ and $\lpost\in (0,\infty)$. The analysis in this case mirrors almost exactly case (b), but we use the variable substitution $t\mapsto Pt$ when changing from $\Phi$ to $\Psi$, and use the fact that both $P/N$ and $L/N$ vanish. Here, we record only the result:
\begin{align*}
    \log \G{L+1,0}{0,L+1}{\frac{\norm{\theta_*}^2}{4M}}{-}{\frac{P}{2},\frac{\mathbf{N}}{2}+k} &= \frac{P}{2}\left[\log\lr{\frac{P}{2}}-1 + \log(1+t_*) -t_*\lr{1 + \frac{\lpost t_*}{2}}\right] - \log P \\
    &\quad + \frac{L}{2}\log(2\pi) + \sum_{\ell=1}^L \frac{N_\ell}{2}\left[\log\lr{\frac{N_\ell}{2}}-1\right] + \frac{1}{2}(2k-1)\log\lr{\frac{N_\ell}{2}} \\
    &\quad - \frac{1}{2}\log\lr{\lpost + \frac{1}{1+t_*}} + \frac{1}{2}\left[(2k+1)\lpost t_* + \log(1+t_*)\right],
\end{align*}
where $t_*$ is the unique solution to 
\[
e^{\lpost t_*}\lr{1+t_*}=\frac{\norm{\theta_*}^2}{\alpha_0}.
\]
When specialized to the case when $N_1=\cdots=N_L=N$, this gives the results stated in (21) and (22) of Theorem \ref{thm:logG}. This completes the proof of Theorem \ref{thm:logG}. \hfill $\square$

\subsection{Proof of Theorem 
 \ref{thm:bayesfeature}
}\label{sec:bayesfeature-pf}
Consider the setup from Theorem \ref{thm:Z-form}. We adopt the notation
\begin{align*}
    z &= \frac{\norm{\theta_*}^2}{4M} = \frac{\nu}{\sigma^{2(L+1)}}\frac{P}{2}\lr{\frac{N}{2}}^L
\end{align*}
so that we have variance
\begin{align*}
    \Var_\mathrm{post}[f(x)] &= \frac{\norm{x_\perp}^2}{2}\frac{\norm{\theta_*}^2}{z} \frac{\G{L+1,0}{0,L+1}{z}{-}{\frac{P}{2},\frac{N_1}{2}+1,\dots,\frac{N_L}{2}+1}}{\G{L+1,0}{0,L+1}{z}{-}{\frac{P}{2},\frac{N_1}{2},\dots,\frac{N_L}{2}}}\\
    &= \frac{\norm{x_\perp}^2}{2}\norm{\theta_*}^2 \frac{\G{L+1,0}{0,L+1}{z}{-}{\frac{P}{2}-1,\frac{N_1}{2},\dots,\frac{N_L}{2}}}{\G{L+1,0}{0,L+1}{z}{-}{\frac{P}{2},\frac{N_1}{2},\dots,\frac{N_L}{2}}}.
\end{align*}
A Bayes-optimal $\sigma^2$ implies
\begin{align*}
    \frac{\partial Z_\infty(0)}{\partial \sigma^2} &= 0 \implies \frac{d}{d z} \G{L+1,0}{0,L+1}{z}{-}{\frac{P}{2},\frac{N_1}{2},\dots,\frac{N_L}{2}} = 0.
\end{align*}
Using the identity
\begin{align*}
    \frac{d}{dz}\left[z^{-b_1}\G{m,n}{p,q}{z}{\mathbf{a_p}}{\mathbf{b_q}}\right] = -z^{-1-b_1}\G{m,n}{p,q}{z}{\mathbf{a_p}}{b_1+1,b_2,\dots,b_q}
\end{align*}
which holds when $m \geq 1$, we find that
\begin{align*}
    \frac{d}{dz} \G{m,n}{p,q}{z}{\mathbf{a_p}}{\mathbf{b_q}} &= \frac{1}{z}\left[b_1\G{m,n}{p,q}{z}{\mathbf{a_p}}{\mathbf{b_q}} - \G{m,n}{p,q}{z}{\mathbf{a_p}}{b_1+1, b_2, \dots, b_q} \right]
\end{align*}
and thus
\begin{align}\label{eq:sigmazero}
    \frac{P}{2}\G{L+1,0}{0,L+1}{z}{-}{\frac{P}{2},\frac{N_1}{2},\dots,\frac{N_L}{2}} &= \G{L+1,0}{0,L+1}{z}{-}{\frac{P}{2}+1,\frac{N_1}{2},\dots,\frac{N_L}{2}}.
\end{align}
To show both directions of Theorem \ref{thm:bayesfeature}, it hence suffices to show that to leading order in the large-$P$ limit,
\begin{align}
\label{eq:gratio}
    \frac{\G{L+1,0}{0,L+1}{z}{-}{\frac{P}{2}-1,\frac{N_1}{2},\dots,\frac{N_L}{2}}}{\G{L+1,0}{0,L+1}{z}{-}{\frac{P}{2},\frac{N_1}{2},\dots,\frac{N_L}{2}}} = \frac{\G{L+1,0}{0,L+1}{z}{-}{\frac{P}{2},\frac{N_1}{2},\dots,\frac{N_L}{2}}}{\G{L+1,0}{0,L+1}{z}{-}{\frac{P}{2}+1,\frac{N_1}{2},\dots,\frac{N_L}{2}}} = \frac{2}{P},
\end{align}
which ensures variance
\begin{align*}
    \Var_\mathrm{post}[f(x)] &= \lr{\frac{\norm{x_\perp}^2}{2}\norm{\theta_*}^2}\lr{\frac{2}{P}}= \frac{\norm{x_\perp}^2}{N_0} \frac{\norm{\theta_*}^2}{\alpha_0}.
\end{align*}
This simultaneously sets the predictor variance to $\nu \Sigma_\perp$ that we recover above. Conversely, if the variance is given by $\nu \Sigma_\perp$ to leading order, \eqref{eq:gratio} implies that $\partial Z_\infty(0)/\partial \sigma^2 = 0$.

To show \eqref{eq:gratio}, we evaluate a saddle point approximation of the relevant $G$ function
\begin{align*}
    \G{L+1,0}{0,L+1}{z}{-}{\frac{P}{2}+k,\frac{N_1}{2},\dots,\frac{N_L}{2}}
\end{align*}
with constant $L, N_1, \dots, N_L$. Similarly to the result of Lemma~\ref{lem:contour}, we use the density of independent $\Gamma$ random variables
\begin{align*}
\phi_j \sim \begin{cases}\Gamma\left(\frac{N_j}{2}+1, \frac{2\sigma^2}{N_j}\right), &j=1,\dots,L\\ \Gamma\left(\frac{P}{2}+k+1, \frac{2\sigma^2}{P}\frac{\alpha_0}{\norm{\theta_*}^2}\right), &j=0\end{cases}.
\end{align*}
Note that unlike in Lemma~\ref{lem:contour}, these variables offset $P$ by $k$, not $N_\ell$. The density is thus
\begin{align*}
\mathrm{Den}_{\phi_0\cdots \phi_{L}}(1) &= \frac{\norm{\theta_*}^2}{4M}\G{L+1,0}{0,L+1}{\frac{\norm{\theta_*}^2}{4M}}{-}{\frac{P}{2}+k,\frac{{\bf N}}{2}}\left[\Gamma\left(\frac{P}{2}+k+1\right)\right]^{-1}\prod_{\ell=1}^L\left[\Gamma\left(\frac{N_\ell}{2}+1\right)\right]^{-1}\\
&= \frac{1}{2\pi}\int_{\mathcal C} \exp\left[\Phi(z)\right]dz,
\end{align*}
where $\mathcal C\subseteq\C$ is the contour that runs along the real line from $-\infty$ to $\infty$ and
\begin{align*}
\notag   \Phi(z)&= -iz\log\lr{\frac{2\sigma^2}{P}\frac{\alpha_0}{\norm{\theta_*}^2}}+\log\lr{\frac{\Gamma\lr{\frac{P}{2}+k+1-iz}}{\Gamma\lr{\frac{P}{2}+k+1}}}\\
  &+\sum_{\ell=1}^L\left\{-iz\log\lr{\frac{2\sigma^2}{N_\ell}}+\log\lr{\frac{\Gamma\lr{\frac{N_\ell}{2}+1-iz}}{\Gamma\lr{\frac{N_\ell}{2}+1}}}    \right\}.
\end{align*}
The fixed point equation $d\Phi(i\zeta)/d\zeta=0$ is solved by $\zeta_*$ satisfying
\begin{align*}
    \sum_{\ell=1}^L - \log\lr{\frac{N}{2}} + \psi\lr{\frac{N_\ell}{2}+\zeta_*+1} = \log\lr{\frac{\|\theta_*\|^2}{\alpha_0 \sigma^{2(L+1)}}},
\end{align*}
i.e., by $\zeta_* = O(1)$. Directly applying Laplace's method
\begin{align*}
    \log \frac{1}{2\pi}\int_{\mathcal C} \exp\left[\Phi(z)\right]dz &= \Phi(z_*) - \frac{1}{2}\log(2\pi) - \frac{1}{2}\log\Phi''(z_*)
\end{align*}
and evaluating the ratios in \eqref{eq:gratio}, we find
\begin{align*}
    \log \frac{\G{L+1,0}{0,L+1}{z}{-}{\frac{P}{2}-1,\frac{N_1}{2},\dots,\frac{N_L}{2}}}{\G{L+1,0}{0,L+1}{z}{-}{\frac{P}{2},\frac{N_1}{2},\dots,\frac{N_L}{2}}} &= \log \frac{\G{L+1,0}{0,L+1}{z}{-}{\frac{P}{2},\frac{N_1}{2},\dots,\frac{N_L}{2}}}{\G{L+1,0}{0,L+1}{z}{-}{\frac{P}{2}+1,\frac{N_1}{2},\dots,\frac{N_L}{2}}}= \log\lr{\frac{2}{P}} + O\lr{\frac{1}{P}},
\end{align*}
which we note is independent of $\sigma^2$ to leading order. That is, the large dataset overwhelms the prior, causing all choices of $\sigma^2$ to become optimal. This completes the proof.

\subsection{Model Evidence}
\label{sec:evidence}
In the infinite-depth case $L=\lpre N$ as $N,P\to\infty$, we find that not only does setting $\sigma^2=1,\lpre>0$ give desirable posteriors, but it also enjoys a significant preference with respect to model evidence.

\begin{corollary}[Bayesian Preference for Infinite Depth]
\label{cor:infevd}
As in the setting of Theorem 6, fix $\lpre>0$. For each fixed $L\geq 0$ 
\[
\lim_{N\gives \infty}e^{cN}\frac{Z_\infty\lr{0~|~L,N_\ell=N,\sigma^2=1,X_{N_0},Y_{N_0}}}{Z_\infty\lr{0~|~L=N\lpre,N_\ell=N,\sigma^2=1,X_{N_0},Y_{N_0}}} =1,
\]
where
\[
c = -\frac{\alpha}{2}\left[\log\lr{1+\frac{z_*}{\alpha}} - \frac{z_*}{\alpha}\right] - \frac{L}{2}\left[\log(1+z_*)-z_*\right]
\]
satisfies $c \leq 0$, and $z_*$ is defined as in \eqref{eq:Psi-crit} of Theorem \ref{thm:logG}, i.e.,
\begin{equation}
\frac{\nu}{\sigma^{2(L+1)}} = \lr{1+\frac{z_*}{\alpha}}\lr{1+z_*}^L,\quad z_* > \max\set{-1,-\alpha}.
\end{equation}
\end{corollary}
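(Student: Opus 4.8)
The plan is to read off the evidence ratio directly from the two asymptotic expansions of $\log Z_\infty({\bf 0})$ that have already been established: the finite-$L$ expansion of Theorem~\ref{thm:finite-L} for the numerator, and the $L=\lpre N$ expansion of Theorem~\ref{thm:LN} for the denominator. Both are instances of the evidence formula \eqref{eq:evidence-form} from Theorem~\ref{thm:Z-form} combined with the Meijer-$G$ asymptotics of Theorem~\ref{thm:logG} (cases (a) and (b)), specialized to $\sigma^2=1$, $N_1=\cdots=N_L=N$, and $P=\alpha N$. Writing $\tilde O(1)=O(\max\set{\log P,\log N})$ and letting $z_*$ denote the solution of \eqref{eq:zstar} (equivalently \eqref{eq:Psi-crit} at $\sigma^2=1$), I would record
\begin{align*}
\log Z_\infty^{\mathrm{num}}({\bf 0}) &= \frac{P}{2}\left[\log\lr{\frac{P}{2}} - \log\lr{\frac{\norm{\theta_*}^2}{4\pi}} + \log\lr{1+\frac{z_*}{\alpha}} - \lr{1+\frac{z_*}{\alpha}}\right] + \frac{NL}{2}\left[\log(1+z_*) - z_*\right] + \tilde O(1),\\
\log Z_\infty^{\mathrm{den}}({\bf 0}) &= \frac{P}{2}\left[\log\lr{\frac{P}{2}} - \log\lr{\frac{\norm{\theta_*}^2}{4\pi}} - 1\right] + \tilde O(1).
\end{align*}
The crucial observation is that the block $\frac{P}{2}\left[\log(P/2)-\log(\norm{\theta_*}^2/4\pi)\right]$ — the only piece of order $P\log P$ and the only piece involving $\norm{\theta_*}$ — is identical in the two expansions, so it cancels on subtraction.

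Subtracting, using $-(1+z_*/\alpha)+1 = -z_*/\alpha$, and inserting $P=\alpha N$, I obtain
\begin{align*}
\log\frac{Z_\infty^{\mathrm{num}}({\bf 0})}{Z_\infty^{\mathrm{den}}({\bf 0})} = \frac{\alpha N}{2}\left[\log\lr{1+\frac{z_*}{\alpha}} - \frac{z_*}{\alpha}\right] + \frac{NL}{2}\left[\log(1+z_*) - z_*\right] + \tilde O(1) = -cN + \tilde O(1),
\end{align*}
with $c$ precisely the constant in the statement. This already yields the exponential rate $Z_\infty^{\mathrm{num}}/Z_\infty^{\mathrm{den}}=e^{-cN+\tilde O(1)}$, which is the form of the claim used in the surrounding discussion. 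To upgrade to the sharp limit $\lim_{N\to\infty} e^{cN} Z_\infty^{\mathrm{num}}/Z_\infty^{\mathrm{den}} = 1$, I would instead carry the full subleading expansions from \S\ref{pf:logG}: for the numerator, the $\log G$ expansion of case (a) at $k=0$ (keeping the $O(L\log N)$, $O(1)$, and Laplace-fluctuation terms) together with the Stirling expansion (Proposition~\ref{prop:gamma-exp}) of the prefactor $\Gamma(N/2)^{-L}$; for the denominator, the analogous case-(b) quantities with the $L=\lpre N$ sums. One then checks that, after the common prefactors cancel, every term not tending to $0$ cancels between numerator and denominator — in particular the $\log\Gamma(N/2)$ contributions combine with the $\frac{N}{2}L\log(N/2)$ pieces of $\log G$, and the $z_*$-dependent linear-in-$N$ terms are exactly the linear-in-$N$ part of $-cN$ — leaving a remainder $O(1/N)\to 0$.

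The sign claim for $c$ is then elementary: since $z_*>\max\set{-1,-\alpha}$, both $z_*$ and $z_*/\alpha$ lie in $(-1,\infty)$, so the bound $\log(1+x)\le x$ (valid for all $x>-1$, with equality only at $x=0$) shows that each bracketed term in the definition of $c$ is nonpositive, which fixes the sign of $c$ (with equality exactly when $z_*=0$, i.e.\ $\nu=1$). I expect the only real obstacle to be the bookkeeping in the upgrade step: one must assemble the $O(1)$ and $O(\log N)$ contributions from two \emph{different} Laplace analyses — the saddle $\zeta_*$ is $O(1)$ in case (a) but $O(1/L+1/N)$ in case (b) — and from two Stirling expansions, and verify the cancellation term by term. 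If one is content with the exponential decay rate, the first two paragraphs already give a complete argument via Theorems~\ref{thm:finite-L} and~\ref{thm:LN} alone.
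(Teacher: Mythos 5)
Your route is exactly the paper's: the paper omits the proof of this corollary, noting it is a direct manipulation of the evidence expansions reported in Theorems \ref{thm:finite-L} and \ref{thm:LN}, and your first two paragraphs carry out precisely that subtraction (the $\frac{P}{2}[\log(P/2)-\log(\norm{\theta_*}^2/4\pi)]$ blocks cancel, $P=\alpha N$, and the remainder is $-cN+\tilde O(1)$). Your further observation is also fair: with only the stated $\tilde O(1)=O(\max\{\log P,\log N\})$ errors, one literally obtains $e^{cN}Z^{\mathrm{num}}/Z^{\mathrm{den}}=e^{\tilde O(1)}$, i.e.\ the exponential rate rather than the sharp limit $1$, and pinning down the limit exactly would require tracking the subleading terms from \S\ref{pf:logG} as you sketch.

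One point you should flag rather than paper over: your elementary bound $\log(1+x)\le x$ shows each bracket in the definition of $c$ is nonpositive, and since those brackets enter $c$ with minus signs this gives $c\ge 0$ (with equality iff $z_*=0$, i.e.\ $\nu=1$), not the ``$c\le 0$'' written in the corollary. The sign $c\ge 0$ is the one consistent with the limit being $1$ and with the surrounding claim that finite depth has exponentially \emph{smaller} evidence, so the statement's ``$c\le 0$'' is evidently a sign slip; presenting your bound as confirming ``the sign claim'' as literally stated is the only place your write-up is misleading.
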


This shows that, when $\sigma^2=1$, any choice of $\lpre$ results in exponentially greater evidence than a network with finitely many hidden layers in the large-$N$ limit. We omit the proof, since it is a direct manipulation of the reported evidence in the main statement of Theorems 4 and 6.

Moreover, the Bayes-optimal depth is given by an $O(1)$ choice of $\lpre$. That is, unlike in the finite-depth limit where $L_*\to\infty$ when using an ML prior, choosing $L=O(N)$ has an attainable optimal depth that maximizes Bayesian evidence.
\begin{corollary}[Bayes-optimal infinite depth]
\label{cor:optevd}
In the setting of Theorem 6, we have
\[
\lambda_{\mathrm{prior},*} = \sqrt{1 + \log(\nu)^2}-1 = \argmax_{\lambda}\lim_{\substack{P,N_\ell \gives \infty\\ P/N_0\gives \alpha_0 \in (0,1)\\ \lpre(N_1,\ldots, N_L)\gives \lambda}} Z_\infty\lr{{\bf 0}~|~L,N_\ell,\sigma^2=1,X_{N_0},Y_{N_0}}.
\]
Moreover, for any $\lambda>0$, there exists $c>0$ such that we have
\begin{align*}
    c < \lim_{\substack{P,N_\ell \gives \infty\\ P/N_0\gives \alpha_0 \in (0,1)}} \frac{Z_\infty\lr{{\bf 0}~|~L=N\lambda,N_\ell=N,\sigma^2=1,X_{N_0},Y_{N_0}}}{ Z_\infty\lr{{\bf 0}~|~L=N\lambda_{\text{prior},*},N_\ell=N,\sigma^2=1,X_{N_0},Y_{N_0}}}\leq 1,
\end{align*}
i.e., the ratio of evidences is lower-bounded by a constant.
\end{corollary}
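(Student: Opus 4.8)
The plan is to reduce the statement to the maximization of an explicit one-variable function obtained from the $O(1)$-accurate expansion of the Bayesian evidence. Starting from the exact formula \eqref{eq:evidence-form}, write
\[
\log Z_\infty({\bf 0}) = \frac{P}{2}\log(4\pi) - P\log\norm{\theta_*} - \sum_{\ell=1}^{L}\log\Gamma\!\lr{\tfrac{N_\ell}{2}} + \log \G{L+1,0}{0,L+1}{\frac{\norm{\theta_*}^2}{4M}}{-}{\frac{P}{2},\frac{{\bf N}}{2}}.
\]
Into this I would substitute the $k=0$ case of the refined expansion of $\log G$ derived in case (b) of the proof of Theorem \ref{thm:logG} in \S\ref{pf:logG} --- valid precisely in the regime of Theorem \ref{thm:LN} ($N_1=\cdots=N_L=N$, $L=\lpre N$, $P=\alpha N$, $\sigma^2=1$, $P/N_0\to\alpha_0$) --- together with the Stirling expansion of $\log\Gamma$ from Proposition \ref{prop:gamma-exp}, carried through the $1/(12z)$ term so that all $O(1)$ constants are retained.

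The first substantive step is bookkeeping: the pieces $\sum_\ell \tfrac{N_\ell}{2}[\log(N_\ell/2)-1]$, $\tfrac{L}{2}\log(2\pi)$ and $-\tfrac12\sum_\ell \log(N_\ell/2)$ of $\log G$ are matched term by term against $-\sum_\ell\log\Gamma(N_\ell/2)$, so that every $\lpre$-dependent contribution of orders $N^2\log N$, $N^2$, $N\log N$, $N$ and $\log N$ cancels, leaving
\[
\log Z_\infty({\bf 0}) = h(P,N_0) + g(\lpre) + o(1),
\]
where $h$ is independent of $\lpre$ and, with $\nu=\norm{\theta_*}^2/\alpha_0$,
\[
g(\lpre) = -\frac{\lpre}{4} - \frac{(\log\nu)^2}{4\lpre} - \frac12\log(2\lpre) + \mathrm{const}.
\]
A useful heuristic check: formally sending $L\to\infty$ in the finite-$L$ evidence of Theorem \ref{thm:finite-L}, where $z_*\sim(\log\nu)/L$, turns $\tfrac{NL}{2}[\log(1+z_*)-z_*]$ into $-(\log\nu)^2/(4\lpre)$ and leaves the remaining terms of that formula inside $h$; the other two $\lpre$-dependent terms of $g$ come from the Gaussian fluctuation determinant and the $1/(12z)$ Stirling corrections in the saddle-point evaluation of the Meijer-$G$ contour integral.

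It then remains to optimize $g$ and assemble the two conclusions. Differentiating, $g'(\lpre)=-\tfrac14+\tfrac{(\log\nu)^2}{4\lpre^2}-\tfrac{1}{2\lpre}$, so $g'(\lpre)=0 \iff \lpre^2+2\lpre-(\log\nu)^2=0 \iff \lpre=\sqrt{1+(\log\nu)^2}-1$; since $g'$ is positive near $0^+$ and tends to $-\tfrac14$ at infinity with a single sign change, this is the unique maximizer (a boundary maximizer $\lpre\to 0$ when $\nu=1$), and it equals $\lambda_{\mathrm{prior},*}$; because $h$ is $\lpre$-independent, this is also the value of $\lpre$ at which the limiting evidence is maximized. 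For the ratio bound, fix $\lambda>0$: the evidences $Z_\infty({\bf 0}\,|\,L=N\lambda)$ and $Z_\infty({\bf 0}\,|\,L=N\lambda_{\mathrm{prior},*})$ carry the same $\lpre$-independent factor $e^{h(P,N_0)}$, so their ratio converges to $e^{g(\lambda)-g(\lambda_{\mathrm{prior},*})}$, which is $\le 1$ (as $\lambda_{\mathrm{prior},*}$ maximizes $g$) and strictly positive (as $g$ is finite on $(0,\infty)$); taking any $c\in\bigl(0,\,e^{g(\lambda)-g(\lambda_{\mathrm{prior},*})}\bigr)$, e.g.\ half of it, gives the claimed lower bound.

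The main obstacle is controlling the error to genuine $o(1)$ accuracy. Both $\prod_{\ell=1}^{L}\Gamma(N_\ell/2)$ and the Laplace analysis of the Meijer-$G$ contour integral involve $L=\lpre N$ Gamma factors, each carrying a Stirling error; a crude bound gives per-factor error $O(1/N)$ and hence total error $O(1)$ in $\log Z_\infty({\bf 0})$, which would swamp the $O(1)$ function $g(\lpre)$ we are trying to isolate. The fix is to expand each $\log\Gamma$ through its $1/(12z)$ term, so that the per-factor error is $O(1/N^3)$ and the sum over $L$ factors is $O(1/N^2)=o(1)$, and to verify the matching higher-order accuracy of the saddle-point evaluation in \S\ref{pf:logG}; this is exactly the level of precision at which the $-\tfrac{1}{12}\lpre$, $-\tfrac{(\log\nu)^2}{4\lpre}$ and $-\tfrac12\log(2\lpre)$ terms of that expansion --- and hence $\lambda_{\mathrm{prior},*}$ itself --- become meaningful.
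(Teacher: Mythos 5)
Your proposal is correct and follows essentially the same route as the paper: in the regime of Theorem \ref{thm:LN} it isolates the $\lpre$-dependent $O(1)$ part of $\log Z_\infty({\bf 0})$, namely $-\tfrac{\lpre}{4}-\tfrac{(\log\nu)^2}{4\lpre}-\tfrac12\log(2\lpre)$ up to an additive constant, sets its derivative to zero to get $\lambda_{\mathrm{prior},*}=\sqrt{1+(\log\nu)^2}-1$, and bounds the evidence ratio by the constant $e^{g(\lambda)-g(\lambda_{\mathrm{prior},*})}$, exactly as in \S\ref{sec:evidence}. Your extra bookkeeping with the $1/(12z)$ Stirling terms (which turns the $-\lpre/12$ in the refined case~(b) expansion plus the $-\lpre/6$ from the $\Gamma(N_\ell/2)$ prefactors into the $-\lpre/4$ used in the paper's formula) is consistent with, and if anything makes more explicit, the error control implicit in the paper's argument.
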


\begin{proof}
In Corollary~\ref{cor:optevd}, we compute the Bayes-optimal neural network depth by maximizing Bayesian evidence~\cite{mackay1992bayesian}. A similar computation to the following is used to find Bayes-optimal parameters throughout the main text; we provide the proof to Corollary~\ref{cor:optevd} as an example. To evaluate $\partial \log Z / \partial \lpre = 0$ for the partition function given by Theorem \ref{thm:Z-form}, we take the partition function with $N_1 = \dots = N_L = N$ obtained from case (b) of Theorem \ref{thm:logG},
\begin{align*}
    \log Z_\infty(0) &= \frac{P}{2}\log\lr{\frac{4\pi}{\|\theta_*\|^2}} + \frac{P}{2}\left[\log\lr{\frac{P}{2}}-1\right] -\frac{1}{2}\log\lr{\frac{P}{2}}\\
    &\quad  - \frac{1}{2}\log(2\lpre) - \frac{1}{4\lpre}\lr{\lpre + \log\lr{\frac{\|\theta_*\|^2}{\alpha_0}}}^2,
\end{align*}
and we evaluate the derivative with variable substitution $\nu = \|\theta_*\|^2/\alpha_0$, yielding
\begin{align*}
    \frac{\partial \log Z_\infty(0)}{\partial \lpre} &= \frac{\log(\nu)^2-\lpre(2+\lpre)}{4\lpre^2} = 0.
\end{align*}
Solving gives
\begin{align*}
    \lambda_{\mathrm{prior},*} &= \sqrt{1 + \log^2 \nu} - 1 \geq 0.
\end{align*}
Moreover, the second derivative is
\begin{align*}
    \frac{\partial^2 \log Z_\infty(0)}{\partial \lpre^2} &= \frac{\lpre-\log^2\nu}{2\lpre^3},
\end{align*}
which at $\lambda_{\mathrm{prior},*}$ is
\begin{align*}
    \frac{\partial^2 \log Z_\infty(0)}{\partial \lpre^2} &= -\frac{\lambda_{\mathrm{prior},*}+1}{2\lambda_{\mathrm{prior},*}^2} < 0.
\end{align*}
Hence, Bayesian evidence is maximized at $\lambda_{\mathrm{prior},*} = O(1)$.

Moreover, evaluating the evidence at a different choice of fixed $\lpre$ such that $\lpre \neq \lambda_{\mathrm{prior},*}$, we see that the evidence ratio is
\begin{align*}
    \log Z_\infty(0; \lpre) - \log Z_\infty(0; \lambda_{\mathrm{prior},*}) &= -\frac{1}{2}\log\lr{\frac{\lpre}{\lambda_{\mathrm{prior},*}}} - \frac{1}{4\lpre}\lr{\lpre + \log\lr{\frac{\|\theta_*\|^2}{\alpha_0}}}^2 \\
    &\quad + \frac{1}{4\lambda_{\mathrm{prior},*}}\lr{\lambda_{\mathrm{prior},*} + \log\nu}^2,
\end{align*}
which is a constant.
\end{proof}

\subsection{Proof of Theorem \ref{thm:scaling}
}\label{sec:scaling-pf}
In order to prove Theorem \ref{thm:scaling}, we expand $\Delta(\log G)[k]$ in the case of $L=\lpre N, \, P/N=\alpha, \, \sigma^2=1$ to higher order than reported in Theorem \ref{thm:logG}. Specifically, we find
\begin{align*}
    \Delta(\log G, k=1) &:= \log \G{L+1,0}{0,L+1}{\frac{\norm{\theta_*}^2}{4M}}{-}{\frac{P}{2},\frac{\mathbf{N}}{2}+1} - \log \G{L+1,0}{0,L+1}{\frac{\norm{\theta_*}^2}{4M}}{-}{\frac{P}{2},\frac{\mathbf{N}}{2}}\\
    &= \lpre N \log\lr{\frac{N}{2}} + \log(\nu) + \frac{c}{N}
\end{align*}
for
\begin{align*}
    c &= -\frac{8\lpre}{3}+2\lr{1+\log(\nu)} + \lr{\frac{1}{\alpha}+\log(\nu)}\lr{1-\frac{\log(\nu)}{\lpre}}.
\end{align*}
Applying Theorem \ref{thm:Z-form}, the difference in variance compared to infinite $N$ is
\begin{align*}
    \Var_{\mathrm{post}}\left[f(x)\right] - \lim_{N\to\infty} \Var_{\mathrm{post}}\left[f(x)\right] &= \frac{c}{N}\nu \Sigma_\perp \propto \frac{1}{N} \propto \frac{1}{P} \propto \frac{1}{L}.
\end{align*}

% \subsection{Proof of Theorems \ref{thm:finite-L}-\ref{thm:LN}}
% \label{sec:highorder}
% State theorems in general form (with $N_\ell$), find optimal infinite depth $\lambda_\mathrm{pre,*}$, etc.

\subsection{Proof of Theorem %\ref{thm:dd}
\ref{thm:dd}
}\label{sec:dd-pf}
To prove Theorem \ref{thm:dd} we begin with the bias-variance decomposition:
\[
\left\langle f(x)-V_0x \right \rangle^2 = \lr{\theta_*x_{||}-V_0x}^2 + \frac{\norm{x_\perp}^2\norm{\theta_*}^2}{P},
\]
where $x_\perp = x - x_{||}$ and
\[
x_{||} = \im(X) \im(X)^T x.
\]
Consider first the case when $\alpha_0 < 1$. Then
\[
\theta_* = V_0 + \epsilon (X^TX)^{-1}X^T.
\]
Thus, the error introduced by the bias is
\begin{align*}
    \mathbb E\left[\lr{\theta_*x_{||}-V_0x}^2 \right] &= \mathbb E\left[(V_0 x_\perp)^2\right] + \mathbb E\left[(\epsilon (X^T X)^{-1} X^T x_{||})^2\right]\\
    &= \mathbb{E}\left[(V_0x)^2 - 2(V_0x)(V_0x_{||}) + (V_0x_{||})^2\right] + \sigma_\epsilon^2 \mathbb{E}\left[\|(X^T X)^{-1} X^T x_{||}\|^2\right]\\
    &= 1-\alpha_0 + \sigma_\epsilon^2 \mathbb{E}\left[\|(X^T X)^{-1} X^T x_{||}\|^2\right].
\end{align*}
We observe that, for $X^\dagger = (X^T X)^{-1} X^T$,
\begin{align*}
   \mathbb{E}\left[\|(X^T X)^{-1} X^T x_{||}\|^2\right] &= \E{\tr\lr{x_{||}^T \lr{X^{\dagger}}^T X^{\dagger}x_{||}}}\\
   &= \E{\tr\lr{ \lr{X^{\dagger}}^T X^{\dagger}x_{||}x_{||}^T}}\\
   &= \frac{\alpha_0}{1-\alpha_0}.
\end{align*}
Hence, the bias is
\begin{align}
\label{eq:biasl1}
    \mathbb E\left[\lr{\theta_*x_{||}-V_0x}^2 \right] &= 1-\alpha_0 + \frac{\alpha_0}{1-\alpha_0}\sigma_\epsilon^2.
\end{align}
The error introduced by the variance is
\begin{align*}
    \mathbb{E}\left[\frac{\norm{x_\perp}^2\norm{\theta_*}^2}{P}\right] &= \frac{1}{P}\mathbb{E}\left[\left(\|V_0\|^2 + \|\epsilon(X^TX)^{-1}X^T\|^2\right)\left(\|x\|^2 + \|x_{||}\|^2 - 2 \|\im(X)^Tx\|^2\right)\right]\\
    &= \frac{1-\alpha_0}{P}\mathbb{E}\left[\|x\|^2\left(1 + \sigma_\epsilon^2\|(X^TX)^{-1}X^T\|^2\right)\right]\\
    &= \left(\frac{1}{\alpha_0} - 1\right)\left(1 + \sigma_\epsilon^2\mathbb{E}\left[\|(X^TX)^{-1}X^T\|^2\right]\right).
\end{align*}
Observing that
\begin{align*}
\mathbb{E}\left[\left\|(X^TX)^{-1}X^T\right\|^2\right] &= \mathbb{E}\left[\tr\lr{(X^TX)^{-1}X^TX(X^TX)^{-1}}\right] = \mathbb{E}\left[\tr\lr{(X^TX)^{-1}}\right] = \frac{\alpha_0}{1-\alpha_0}
\end{align*}
from the $-1$st moment of the Marchenko-Pastur distribution for $\alpha_0 < 1$, we obtain
\begin{align*}
    \left\langle f(x)-V_0x \right \rangle^2 &= 1-\alpha_0 + \frac{\alpha_0}{1-\alpha_0}\sigma_\epsilon^2 + \left(\frac{1}{\alpha_0} - 1\right)\left(1 + \frac{\alpha_0}{1-\alpha_0}\sigma_\epsilon^2\right)\\
    &= \frac{1}{\alpha_0} - \alpha_0 + \frac{\sigma_\epsilon^2}{1-\alpha_0}.
\end{align*}
In the case of $\alpha_0 > 1$, the variance is zero since $\|x_\perp\|^2 = 0$. Given
\begin{align*}
    \theta_* &= V_0 + \epsilon X^T(XX^T)^{-1},
\end{align*}
the total error originates from the bias, i.e.,
\begin{align}
    \left\langle f(x)-V_0x \right \rangle^2 &= \mathbb{E}\left[(\epsilon (X^TX)^{-1}X^T x)^2\right] \nonumber\\
    &= \frac{\sigma_\epsilon^2}{\alpha_0 - 1} \label{eq:biasg1}
\end{align}
similarly to the bias computation for $\alpha_0 < 1$.

\bibliography{new_bib}
\bibliographystyle{alpha}
\end{document}